\documentclass{article}

\usepackage{microtype}
\usepackage{graphicx}
\usepackage{subcaption}
\usepackage{booktabs} 
\usepackage{empheq}

\usepackage{hyperref}

\usepackage{algorithm}
\usepackage{algorithmic}

\usepackage[preprint]{icml2026}

\usepackage{amsmath}
\usepackage{amssymb}
\usepackage{mathtools}
\usepackage{amsthm}

\usepackage{subcaption}
\usepackage{caption}

\usepackage{cancel}
\usepackage{enumitem}

\usepackage{multirow}
\usepackage[table]{xcolor} 

\definecolor{lightpink}{RGB}{255, 230, 230}
\definecolor{lightgray}{RGB}{220, 220, 220}
\definecolor{custom_link}{RGB}{194, 19, 19}

\usepackage{adjustbox}

\usepackage[capitalize,noabbrev]{cleveref}

\theoremstyle{plain}
\newtheorem{theorem}{Theorem}[section]
\newtheorem{proposition}[theorem]{Proposition}
\newtheorem{lemma}[theorem]{Lemma}
\newtheorem{corollary}[theorem]{Corollary}
\theoremstyle{definition}
\newtheorem{definition}[theorem]{Definition}

\theoremstyle{remark}
\newtheorem{remark}[theorem]{Remark}

\usepackage[textsize=tiny]{todonotes}

\icmltitlerunning{Variational Flow Maps: Make Some Noise for One-Step Conditional Generation}

\begin{document}

\twocolumn[
  \icmltitle{Variational Flow Maps: \\ Make Some Noise for One-Step Conditional Generation}



  \icmlsetsymbol{equal}{*}

  \begin{icmlauthorlist}
    \icmlauthor{Abbas Mammadov}{equal,oxford} 
    \icmlauthor{So Takao}{equal,caltech}
    \icmlauthor{Bohan Chen}{caltech}
    \icmlauthor{Ricardo Baptista}{toronto}
    \icmlauthor{Morteza Mardani}{nvidia} \\
    \icmlauthor{Yee Whye Teh}{oxford}
    \icmlauthor{Julius Berner}{nvidia}
  \end{icmlauthorlist}

  \icmlaffiliation{oxford}{Department of Statistics, University of Oxford}
  \icmlaffiliation{caltech}{California Institute of Technology}
  \icmlaffiliation{toronto}{University of Toronto}
  \icmlaffiliation{nvidia}{NVIDIA}

  \icmlcorrespondingauthor{Abbas Mammadov}{abbas.mammadov@stats.ox.ac.uk}
  \icmlkeywords{Machine Learning, ICML}

  \vskip 0.3in
]



\printAffiliationsAndNotice{\icmlEqualContribution}

\begin{abstract}
  Flow maps enable high-quality image generation in a single forward pass. However, unlike iterative diffusion models, their lack of an explicit sampling trajectory impedes incorporating external constraints for conditional generation and solving inverse problems. We put forth \emph{Variational Flow Maps}, a framework for conditional sampling that shifts the perspective of conditioning from ``guiding a sampling path", to that of ``learning the proper initial noise". Specifically, given an observation, we seek to learn a \emph{noise adapter model} that outputs a noise distribution, so that after mapping to the data space via flow map, the samples respect the observation and data prior. To this end, we develop a principled variational objective that jointly trains the noise adapter and the flow map, improving noise-data alignment, such that sampling from complex data posterior is achieved with a simple adapter. Experiments on various inverse problems show that VFMs produce well-calibrated conditional samples in a single (or few) steps. For ImageNet, VFM attains competitive fidelity  while accelerating the sampling by orders of magnitude compared to alternative iterative diffusion/flow models. Code is available at \href{https://github.com/abbasmammadov/VFM}{\textcolor{custom_link}{https://github.com/abbasmammadov/VFM}}
.
\end{abstract}

\begin{figure*}[t]
  \centering
    \includegraphics[width=1.\linewidth]{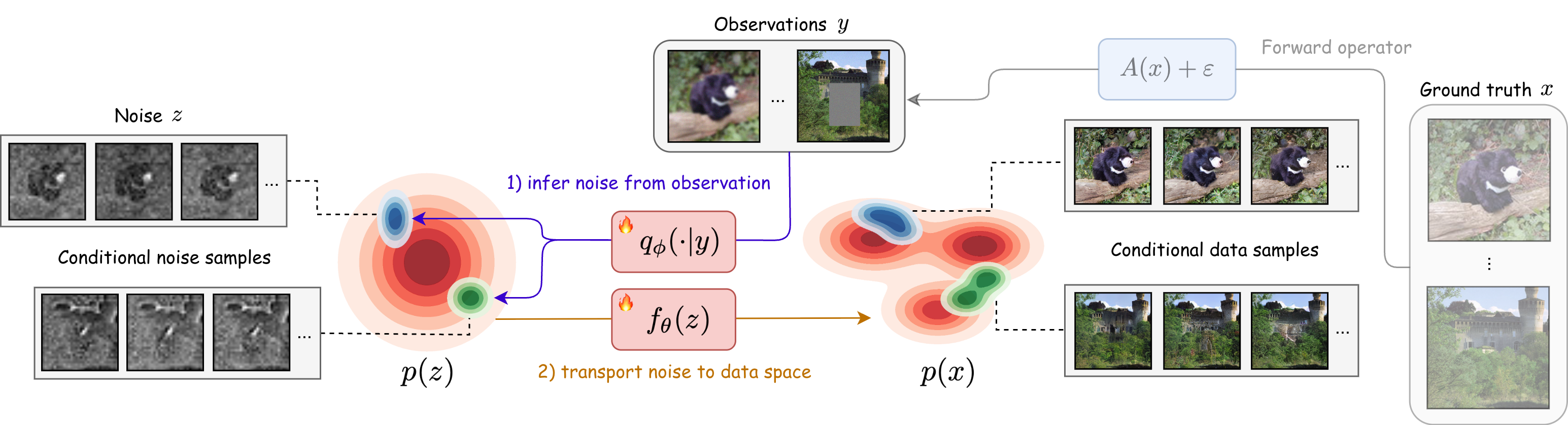}
  \caption{One-step conditional generation with Variational Flow Maps (VFM). Given an observation $y$, VFM learns a noise adapter network $q_\phi(z|y)$, which approximates the noise space posterior $p(z|y)$ via amortized variational inference. Conditional noise samples $z \sim q_\phi(z | y)$ are then mapped to data space in a single step via a learned flow map $x = f_\theta(z)$, producing conditional samples that approximate $p(x | y)$. In VFM, the networks $q_\phi$ and $f_\theta$ are trained jointly by extending the variational autoencoder framework to learn the correspondence between the triple $(x, y, z)$. By jointly training, $f_\theta$ learns to compensate for the simple Gaussian assumption on $q_\phi$.}
  \label{fig:teaser}
\end{figure*}

\section{Introduction} 
\label{sec:intro}
Diffusion and flow-based methods have emerged as the dominant paradigm for high-fidelity generative modeling, achieving state-of-the-art results across images, audio, and video~\citep{ho2020denoising, song_generative_2020, sohldickstein2015deep, karras2022elucidating, lipman2022flow, liu2022flow}. These methods can be understood from the unified perspective of interpolating between two distributions; a simple noise distribution and a complex data distribution, and learning dynamics based on ordinary or stochastic differential equations (ODE/SDEs) that transport one to the other \cite{albergo2023stochastic}. However, these share a fundamental limitation that generating a single sample requires dozens to hundreds of sequential function evaluations, creating high computational cost for real-time applications.

To address this issue, recent research have sought to dramatically reduce this sampling cost. Consistency models~\citep{song_consistency_2023}, for example, learn to map any point on the flow trajectory directly to the corresponding clean data, enabling few-step generation. Despite their promise, consistency models often suffer from training instabilities and frequently require re-noising steps for multi-step sampling to correct the drift trajectory, complicating the inference process \cite{geng_consistency_2024}. Flow maps~\citep{boffi_flow_2024, boffi2025build} offer an alternative framework that seeks to learn ODE flows directly, by training on the mathematical structure of such flows. For example, the state-of-the-art Mean Flow model \cite{geng2025meanflowsonestepgenerative} presents a particular parameterisation of flow maps based on \emph{average velocities}, and trained on the so-called Eulerian condition satisfied by ODE flows.

While flow maps excel at unconditional few-steps generation, many applications require \emph{conditional} generation to produce samples that satisfy external constraints. Inverse problems provide a canonical example: given a degraded observation $y = A(x) + \varepsilon$ (e.g., a blurred, masked, or noisy image), we seek to recover plausible original signals $x$ consistent with both the observation and our learned prior $p(x)$. Iterative generative models naturally accommodate such conditioning through \emph{guidance} mechanisms \cite{chung2022improving, chung2024diffusionposteriorsamplinggeneral, kawar2022denoising, song2023pseudoinverseguided}, where the trajectory is iteratively nudged toward the conditional target. Flow maps, despite their efficiency, lack this iterative refinement mechanism: once the noise vector $z$ is chosen, the generated sample $z \mapsto x$ is fixed; there is no intermediate state to guide, nor a trajectory to steer, hence there is no opportunity to incorporate measurement information during generation. This ``guidance gap'' has limited flow maps to unconditional settings, leaving their potential for conditional generation largely unexplored.

To fill this ``guidance gap", we introduce \emph{Variational Flow Maps} (VFMs), a framework for conditional sampling that is compatible with one/few-step generation using flow maps. Our approach is based on the following perspective: rather than steer the generation process itself, we can {\em find the noise $z$ to generate from}, as each $z$ deterministically maps to a data $x = f_\theta(z)$ (see Figure \ref{fig:teaser}). Specifically, given an observation $y$, we seek to produce a distribution of $z$'s, such that each $x = f_\theta(z)$ is a candidate data that produced $y$. 
Formulating this as a Bayesian inverse problem, we can derive a principled variational training objective to jointly learn the flow map $f_\theta$ and a {\em noise adapter model} $q_\phi$ that produces appropriate noise $z$ from observations $y$.

We note the resemblance to variational autoencoders (VAEs) \citep{kingma2013auto}, where $q_\phi$ plays the role of an encoder that takes $y$ to a latent $z$, and $f_\theta$ acts as a decoder from $z$ to data $x$. Our key innovation is in learning the alignment of all three variables $(x, y, z)$ {\em simultaneously}, allowing updates to $q_\phi$ to reshape the noise-to-data coupling by $f_\theta$ and vice versa. Notably, we observe that joint training can compensate for limited adapter expressivity by learning a noise-to-data coupling that makes the conditional posterior easier to represent in latent space.

Altogether, our contributions can be summarized as follows:
\begin{itemize}[leftmargin=*,itemsep=2pt,topsep=0pt,parsep=0pt,partopsep=0pt]
    \item {We introduce Variational Flow Maps (VFMs), a new paradigm enabling {\bf one and few-step conditional generation} with flow maps by learning an observation-dependent noise sampler.}

    \item {We derive a {\bf principled variational objective} for joint adapter/flow map training, linking the mean flow loss to likelihood bounds.}
    
    \item {We demonstrate empirically and theoretically that joint training yields {\bf better noise-data coupling} to fit complex posteriors in data space using \textbf{simple} variational posteriors in noise space.}

    \item {We extend the framework to \textbf{general reward alignment}, introducing a fast and scalable method that fine-tunes pre-trained flow maps to sample from reward-tilted distributions in a single step.}
\end{itemize}

\section{Background}
\label{sec:background}

We review essential backgrounds on flow maps for few-step generation, the Bayesian formulation of inverse problems, and variational inference with amortization.

\subsection{Flow-based Generative Models and Flow Maps}
\label{sec:flow_maps}
Flow-based generative models learn to transport samples from a prior distribution $p_1(z) = \mathcal{N}(0, I)$ to the data distribution $p_0(x) = p_{\text{data}}(x)$ via an ODE:
\begin{equation}
    \frac{dx_t}{dt} = v_t(x_t), \quad t \in [0, 1],
\end{equation}
where $v_t$ is a time-dependent velocity field. Flow matching~\citep{lipman2022flow, liu2022flow, albergo2023stochastic} provides a training objective to learn $v_t$: given $x_0 \sim p_{\text{data}}$ and $x_1 \sim \mathcal{N}(0, I)$, we construct a linear interpolant $x_t = (1-t)x_0 + tx_1$ with conditional velocity $v_t = x_1 - x_0$. Then, $v_\theta(x_t, t) \approx v_t(x_t)$ is trained via:
\begin{equation}
    \mathcal{L}_{\text{FM}}(\theta) = \mathbb{E}_{x_0, x_1, t}\left[ \|v_\theta(x_t, t) - (x_1 - x_0)\|^2 \right].
\end{equation}
At inference time, samples are generated by integrating the ODE backwards from $t=1$ to $t=0$, typically requiring 50--250 function evaluations.

To accelerate sample generation, \emph{flow maps}~\citep{boffi_flow_2024, boffi2025build} directly learn the solution operator of the ODE, instead of the instantaneous velocity $v_t$. Denoting by $\phi_{t, s}: x_t \mapsto x_s$ the backward flow of the ODE, the \emph{two-time flow map} $f_\theta(x_t, s, t)$ learns to approximate $\phi_{t,s}(x_t)$ for any $0 \leq s < t \leq 1$. This enables generation with an arbitrary number of steps chosen post-training, e.g. a single evaluation $f_\theta(x_1, 0, 1)$ produces a one-step sample, while intermediate evaluations can be composed for multi-step refinement.

One such approach to learn flow maps is {\em mean flows} \citep{geng2025meanflowsonestepgenerative}, which introduce the \emph{average velocity} as an alternative characterization:
\begin{equation}
    u(x_t, r, t) := \frac{1}{t - r} \int_r^t v_s(\phi_{t,s}(x_t)) \, ds.
\end{equation}
The average velocity satisfies $x_r = x_t - (t-r) \cdot u(x_t, r, t)$, enabling one-step generation via $x_0 = x_1 - u(x_1, 0, 1)$. Thus the corresponding flow map is given by $f_\theta(x_t, r, t) = x_t - (t-r) \cdot u_\theta(x_t, r, t)$. For simplicity, we denote the one-step flow map as $f_\theta(z) := z - u_\theta(z, 0, 1)$, mapping noise $z \sim \mathcal{N}(0, I)$ directly to data $x = f_\theta(z)$.

\subsection{Inverse Problems}
\label{sec:inverse_problems}

Inverse problem seeks to recover an unknown signal $x \in \mathbb{R}^d$ from noisy observations, given by
\begin{equation}
    y = A(x) + \varepsilon, \quad \varepsilon \sim \mathcal{N}(0, \sigma^2 I),
\end{equation}
where $A: \mathbb{R}^d \to \mathbb{R}^m$ is a known forward operator and $\sigma > 0$ is the noise level. Given a prior $p(x)$ over signals, the Bayesian formulation seeks the posterior distribution:
\begin{equation} \label{eq:inv-prob-posterior}
    p(x|y) \propto \exp\left( -\frac{\|y - A(x)\|^2}{2\sigma^2} \right) p(x).
\end{equation}
When $p(x)$ is defined implicitly by a generative model, guidance-based methods~\citep{chung2024diffusionposteriorsamplinggeneral, song2023pseudoinverseguided} approximate posterior sampling by incorporating likelihood gradients $\nabla_x \log p(y|x)$ at each denoising step. While effective, these methods inherently require iterative refinement and cannot be applied to one-step flow maps.

\subsection{Variational Inference and Data Amortization}
\label{sec:vi}

Variational inference seeks to approximate an intractable posterior $p(z|x)$ with a tractable disribution $q(z|x)$ by minimizing the Kullback-Leibler (KL) divergence:
\begin{equation}
    \text{KL}(q(z|x) \| p(z|x)) := \mathbb{E}_{q_\phi}[\log q(z|x) - \log p(z|x)].
\end{equation}
Extending this, amortized inference uses a neural network to directly predict the variational distribution from the conditioning variable $x$, rather than optimizing separately for each instance. For example, if we choose the variational family to be Gaussians with diagonal covariance, then amortized inference learns a neural network $x \mapsto (\mu_\phi(x), \sigma_\phi(x))$ with parameter $\phi$, such that $q_\phi(z|x) = \mathcal{N}(z|\mu_\phi(x), \mathtt{diag}(\sigma^2_\phi(x)))$ is close to $p(z|x)$ under the KL divergence.

A prototypical example is the {\em Variational Autoencoder (VAE)}~\citep{kingma2013auto}, which learns both an encoder $q_\phi(z|x)$ and a decoder $p_\theta(x|z)$ by optimizing the VAE objective $\mathcal{L}_{\text{VAE}}(\theta, \phi) = \mathbb{E}_{p(x)}[\ell(\theta, \phi; x)]$, where
\begin{equation}
    \ell(\theta, \phi; x) := -\mathbb{E}_{q_\phi(z|x)}[\log p_\theta(x|z)] + \text{KL}(q_\phi(z|x) \| p(z)),
\end{equation}
is the negative evidence lower bound (ELBO), yielding $q_\phi(z | x) \approx p_\theta(z | x) \propto p_\theta(x|z)p(z)$ for any $x \sim p(x)$. Probabilistically, the VAE objective can be derived from the KL divergence between two representations of the {\em joint distribution} of $(x, z)$, i.e., $\text{KL}(q_\phi(z, x) || p_\theta(z, x))$, where $q_\phi(z, x) = q_\phi(z | x)p(x)$ and $p_\theta(z, x) = p_\theta(x | z)p(z)$. This perspective will be useful in the derivation of our loss later.

\section{Variational Flow Maps (VFMs)}
Our proposed method for one-step conditional generation, which we term {\em Variational Flow Maps (VFMs)}, is based on reformulating the inverse problem \eqref{eq:inv-prob-posterior} in noise space. 
To motivate our methodology, we begin with a simple “strawman" approach that is intuitively sound but ultimately insufficient for our task:
Let $x = f_\theta(z)$ denote a pretrained flow map. Then the posterior over latent noise variables induced by the inverse problem can be written as
\begin{equation}\label{eq:noise-space-posterior}
    p(z|y) \propto \exp\left( -\frac{\|y - A(f_\theta(z))\|^2}{2\sigma^2} \right) p(z).
\end{equation}

Although the posterior \eqref{eq:noise-space-posterior} is intractable, we can approximate it in the same spirit as VAEs. In particular, introducing a variational posterior $q_\phi(z | y) \approx p(z | y)$, we minimize the objective $\mathcal{L}_{\text{VAE}}(\theta, \phi) = \mathbb{E}_{p(y)}[\ell(\theta, \phi ; y)]$, where,
\begin{equation}\label{eq:negative-elbo-y}
    \ell(\theta, \phi ; y) := -\mathbb{E}_{q_\phi(z|y)}[\log p_\theta(y|z)] + \text{KL}(q_\phi(z|y) \| p(z)),
\end{equation}
and $p_\theta(y | z) := \mathcal{N}(y | A(f_\theta(z)), \sigma^2 I)$, the likelihood in noise space.
A key advantage of working in the noise space rather than the original data space is that the noise prior $p(z)$ is simple and tractable (commonly $\mathcal{N}(0, I)$, which we assume hereafter). Thus, imposing a conjugate variational posterior, such as $q_\phi(z|y) = \mathcal{N}(z|\mu_\phi(y), \mathtt{diag}(\sigma^2_\phi(y)))$, makes the computation of the KL term in \eqref{eq:negative-elbo-y} tractable.

However, the objective \eqref{eq:negative-elbo-y} has two major limitations in our setting. First, it does not impose structural properties of flow maps, such as the semi-group property \citep{boffi2025build}, known to be crucial for learning said maps. Second, when the flow map $f_\theta$ is pretrained and held fixed, a Gaussian variational posterior $q_\phi(z|y)$ may not be expressive enough to approximate the true posterior $p(z | y)$ accurately. 

Motivated by this observation, we pursue training the parameters $\theta$ and $\phi$ {\em jointly}. By adapting the map $f_\theta : z \mapsto x$ alongside learning the variational posterior $q_\phi$, we can compensate for the limited expressibility of $q_\phi(z|y)$ by reshaping the correspondence between noise and data. In the next section, we formalize this idea by deriving a modified objective  that enables joint training of $(\theta, \phi)$ while explicitly incorporating additional structural constraints to the flow.

\subsection{Joint Training of the Flow Map and Noise Adapter}
We now propose a joint training strategy that simultaneously aligns the data variable $x$, the observation $y$, and the latent noise variable $z$. Following the probabilistic perspective underlying VAEs (see Section \ref{sec:vi}), we achieve this by matching the following two factorizations of $p(x, y, z)$:
\begin{align}\label{eq:xyz-repr}
    q_\phi(z | y) p(y | x) p(x) \approx p_\theta(x, y | z) p(z).
\end{align}
For simplicity, we assume a Gaussian decoder of the form
\begin{align}\label{eq:xy-z-decomp}
p_\theta(x, y | z) \!= \!\mathcal{N}(x | f_\theta(z), \tau^2 I) \, \mathcal{N}(y | A(f_\theta(z)), \sigma^2 I),
\end{align}
where we introduce a new hyperparameter $\tau > 0$ that relaxes the correspondence between $x$ and $z$.
Taking the KL divergence between the two representations in \eqref{eq:xyz-repr} yields
\begin{align}\label{eq:kl-xyz}
&\text{KL}(q_\phi(z | y) p(y | x) p(x) \,||\, p_\theta(x, y | z) p(z)) \\
&\leq \frac{1}{2\tau^2}\mathcal{L}_{\text{data}}(\theta, \phi) + \frac{1}{2\sigma^2} \mathcal{L}_{\text{obs}}(\theta, \phi) + \mathcal{L}_{\text{KL}}(\phi), \nonumber
\end{align}
(see Appendix \ref{app:loss-derivation} for details), where
\begin{align}
    \mathcal{L}_{\text{data}}(\theta, \phi) &= \mathbb{E}_{q_\phi(z | y) p(y | x)p(x)}\left[\|x - f_\theta(z)\|^2\right], \label{eq:Lx} \\
    \mathcal{L}_{\text{obs}}(\theta, \phi) &= \mathbb{E}_{q_\phi(z | y) p(y)}\left[\|y - A(f_\theta(z))\|^2\right], \\
    \mathcal{L}_{\text{KL}}(\phi) &= \mathbb{E}_{p(y)}\left[\text{KL}\left(q_\phi(z|y) \,||\, p(z)\right)\right].
\end{align}
We note that relative to \eqref{eq:negative-elbo-y}, this formulation gives rise to an additional term $\mathcal{L}_{\text{data}}(\theta, \phi)$ that measures closeness of the reconstructed state $f_\theta(z)$ and the ground-truth data $x$, where noise $z$ is drawn from the noise adapter $q_\phi(z|y)$, with observation $y$ taken from $x$. This term couples the adapter model and flow map more tightly, encouraging the samples $\{f_\theta(z)\}_{z \sim q_{\phi}(z|y)}$ to remain consistent with data manifold.

In the following result, we identify a concrete benefit of jointly learning $f_\theta$ and $q_\phi$ to target the true posterior $p(x|y)$, under a simple Gaussian setting. While this does not claim that the distribution of samples $\{f_\theta(z)\}_{z \sim q_{\phi}(z|y)}$ matches $p(x|y)$ exactly, it shows that joint training can at least match the posterior mean for every observation $y$. This sharply contrasts with separately training $f_\theta$ and $q_\phi$, which leads to bias almost surely, even at the level of the posterior mean.
\begin{proposition}\label{prop:benefit_joint_training}
    Assume that $p(z) = \mathcal{N}(z | 0, I)$, $p(x) = \mathcal{N}(x | m, C)$ for some $m\in \mathbb{R}^d$ and $C \in \mathbb{R}^{d \times d}$ symmetric positive definite, $f_\theta(z) = K_\theta z + b_\theta$ and $q_\phi(z|y) = \mathcal{N}(z|\mu_\phi(y), \mathtt{diag}(\sigma^2_\phi(y)))$. Then, for any linear observation $y = Ax + \varepsilon$, we have that 
    \vspace{-3mm}
    \begin{enumerate}
    \item \textbf{Separate Training:} Training $f_\theta$ first to match $p(x)$ and then training $q_\phi$ via loss \eqref{eq:kl-xyz} with $\theta$ fixed almost surely fails to match the posterior mean, i.e., $\mathbb{E}_{z \sim q_{\phi}(z|y)} [ f_\theta(z) ] \neq \mathbb{E}_{p(x|y)}[x]$.
    \vspace{-1mm}
    \item \textbf{Joint Training:} Joint optimization of $f_\theta$ and $q_\phi$ via loss \eqref{eq:kl-xyz} recovers the true posterior mean $\mathbb{E}_{p(x|y)}[x]$ exactly via the procedure $\mathbb{E}_{z \sim q_\phi(z|y)}[f_\theta(z)]$.
    \end{enumerate}
\end{proposition}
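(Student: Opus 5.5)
We reduce everything to explicit Gaussian computations. Write $\Sigma_\phi(y)=\mathtt{diag}(\sigma^2_\phi(y))$ and $\mu=\mu_\phi(y)$. With $f_\theta(z)=K_\theta z+b_\theta$, every term in the bound \eqref{eq:kl-xyz} is a quadratic in $(\mu,\Sigma_\phi)$, and the posterior $p(x|y)$ is Gaussian with mean
\[
m_{\mathrm{post}}(y)=m+CA^\top(ACA^\top+\sigma^2 I)^{-1}(y-Am).
\]
For fixed $y$ the objective decouples pointwise in $y$, since $\mu_\phi$ and $\sigma_\phi$ are arbitrary functions. We therefore minimize pointwise
\[
\frac{1}{2\tau^2}\Big(\|\mathbb{E}[x|y]-K\mu-b\|^2+\mathrm{tr}(\mathrm{Cov}(x|y))+\mathrm{tr}(K\Sigma K^\top)\Big)+\frac{1}{2\sigma^2}\Big(\|y-AK\mu-Ab\|^2+\mathrm{tr}(AK\Sigma K^\top A^\top)\Big)+\mathrm{KL}(\mathcal{N}(\mu,\Sigma)\,\|\,\mathcal{N}(0,I)).
\]
Here we used that $\mathcal{L}_{\text{data}}$ averages over $p(x|y)$ given $y$, which gives the bias-variance split above. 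The relevant output is $\mathbb{E}_{q_\phi}[f_\theta(z)]=K\mu+b$.

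\textbf{Separate training.} Fitting $f_\theta$ to $p(x)$ gives $b=m$ and $KK^\top=C$. Setting the gradient in $\mu$ to zero gives
\[
\big(I+K^\top(\tau^{-2}I+\sigma^{-2}A^\top A)K\big)\mu=K^\top\big(\tau^{-2}(m_{\mathrm{post}}-m)+\sigma^{-2}A^\top(y-Am)\big).
\]
Hence $K\mu+m-m_{\mathrm{post}}=M(y-Am)$ for an explicit matrix $M$ depending on $(C,A,\sigma,\tau)$. Using $KK^\top=C$ and push-through identities, we write $K\mu+m=m+C'A^\top(\cdot)(y-Am)$, an affine map whose gain generally differs from $CA^\top(ACA^\top+\sigma^2I)^{-1}$.

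The step needing care is showing $M\neq 0$, i.e., that the KL term pulls $\mu$ toward $0$. The extra prior-like penalty $\tau^{-2}$ on the data term and the identity from the KL shrink the gain. We check this by direct comparison: the effective precision is $I+K^\top(\tau^{-2}+\sigma^{-2}A^\top A)K$ rather than the one yielding the exact gain. Then $M\neq0$ for $A\neq0$, and $M(y-Am)\neq 0$ except on a proper linear subspace of $y$'s. That subspace has Lebesgue, hence $p(y)$, measure zero, which gives the "almost surely" claim. If an exact $M\neq0$ argument is messy, it suffices to exhibit one nonzero direction, for example through the scalar reduction along an eigendirection.

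\textbf{Joint training.} Now $K$ and $b$ are also free. The key observation is that the mean residual enters the objective only through $K\mu+b$ in the two squared terms. For any fixed $K$ and $\Sigma$, we optimize over the affine family $\mu(y)$. The plan is to exhibit a minimizer where $K\mu(y)+b=m_{\mathrm{post}}(y)$:

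1. Minimizing the first two terms over the point $w=K\mu+b$ alone has minimizer $w^\ast$ solving $(\tau^{-2}I+\sigma^{-2}A^\top A)w=\tau^{-2}m_{\mathrm{post}}+\sigma^{-2}A^\top y$. This $w^\ast$ is not $m_{\mathrm{post}}$ in general.
2. To get exactness, we instead note that the joint optimum must be consistent with the marginal: averaging over $y$, the aggregated $f_\theta(z)$ should match $p(x)$ in the optimal VAE sense. We compute the stationary conditions in $b$ and $K$ and use $\mathbb{E}_{p(y)}[\mu(y)]=\mathbb{E}[\mathbb{E}[x|y]]$-type identities.

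I expect this to be the main obstacle. We must verify that the $\tau$- and $\sigma$-terms combine so the optimal $(K,b,\mu)$ makes $K\mu+b$ equal $m_{\mathrm{post}}$. The approach is to use the freedom to rescale $K\to K/\lambda$, $\mu\to\lambda\mu$, which lets the KL shrinkage on $\mu$ be made negligible and drives the stationary point to the unregularized least-squares target.

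The least-squares target of the data term is exactly $\mathbb{E}[x|y]$, because $\mathcal{L}_{\text{data}}$ is an $L^2$ regression of $x$ on $y$. The observation term is consistent with this target when $\mathbb{E}[y|y]=y$ is fit through $A$. If exactness needs to be taken in a limit, we would state it as the infimum being attained in the limit and fall back on that.
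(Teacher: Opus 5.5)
Your Part~1 depends on $M\neq 0$, and that step fails: $M$ is identically zero. Write $K_\theta$ for your generator matrix and, as in the paper, $K=CA^\top(ACA^\top+\sigma^2I)^{-1}$ for the Kalman gain and $\mathcal{M}=\sigma^{-2}A^\top A+\tau^{-2}I$. Your stationarity equation is $P\mu^\ast=K_\theta^\top(\sigma^{-2}A^\top+\tau^{-2}K)(y-Am)$ with $P=I+K_\theta^\top\mathcal{M}K_\theta$. Since $K_\theta K_\theta^\top=C$, the push-through identity gives $K_\theta P^{-1}K_\theta^\top=(C^{-1}+\mathcal{M})^{-1}$. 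The Kalman identity $(C^{-1}+\mathcal{M})K=\sigma^{-2}A^\top+\tau^{-2}K$ then yields $K_\theta\mu^\ast(y)+m=m+K(y-Am)=m_{\mathrm{post}}(y)$ for every $y$. The identity that the KL term adds to the precision is not a shrinkage: after push-through it is exactly the prior precision $C^{-1}$. This is the paper's Lemma~\ref{lem:posterior_mean_recovery}, which says explicitly that it covers separate training.

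Your Part~1 also never uses the diagonal restriction on $\Sigma_\phi$, and that restriction is the only thing the paper blames. The paper writes $K_\theta=U\Lambda Q$, obtains the precision $Q^\top HQ$, applies Hadamard's inequality, and proves an analytic zero-set lemma so that the error is nonzero for Haar-a.e.\ $Q_{\mathrm{sep}}$. Its ``almost surely'' is therefore over $Q_{\mathrm{sep}}$, not only $y$.

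Be aware, however, that your own pointwise objective splits into a function of $\mu$ plus a function of $\Sigma$. Restricting $\Sigma$ therefore cannot move the optimal mean; this is the classical fact that mean-field Gaussian VI recovers the exact mean. The paper gets a nonzero error only by taking the constrained gain to be $K_\theta[\mathrm{diag}(P)]^{-1}K_\theta^\top(\sigma^{-2}A^\top+\tau^{-2}K)$, i.e.\ by plugging the diagonal covariance into $\mu=\Sigma_\phi K_\theta^\top v(y)$. That is not the minimizer when $\mu_\phi$ and $\sigma_\phi$ are independent, as in the statement. Concretely, take $C=I$, $m=0$, $A=\begin{pmatrix}1&1\end{pmatrix}$, $\sigma=1$, any $\tau$, and any orthogonal $K_\theta$. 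The substitution $w=K_\theta\mu$ shows the optimal output mean is $(y/3,y/3)$, the exact posterior mean, whether $\Sigma_\phi$ is diagonal or full. So Part~1 as stated cannot be proved, by your route or by the paper's.

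For Part~2 the missing ingredient is the constraint. The paper defines joint training as minimization over $\theta\in\Theta^\ast=\{b_\theta=m,\ K_\theta K_\theta^\top=C\}$, i.e.\ $K_\theta=U\Lambda Q$ with $Q\in\mathbb{O}(d)$ free. Exactness is then immediate from the same lemma for every $Q$, and choosing $Q$ to diagonalize $H$ makes the diagonal covariance optimal as well (Proposition~\ref{prop:diagonal_constraint_gap}).

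Your plan instead lets $K_\theta,b_\theta$ vary freely and relies on rescaling to remove the KL shrinkage. After optimizing $\Sigma$, though, the objective retains $\tfrac12\ln|I+K_\theta^\top\mathcal{M}K_\theta|$ (or its diagonal analogue), which grows under $K_\theta\to\lambda K_\theta$. The optimum is therefore an interior trade-off, and it is biased. This already happens for $d=d_y=1$, $A=1$, $x\sim\mathcal{N}(0,c)$, $f_\theta(z)=kz+b$, $q_\phi(z|y)=\mathcal{N}(ay+\beta,v)$. Eliminate $v,b,\beta$ at their optima and set $g=ka$, $\kappa=k^2$. Stationarity in $g$ gives $g=c/(c+\sigma^2)$ iff $\kappa=c$. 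Stationarity in $\kappa$ at $\kappa=c$, $g=c/(c+\sigma^2)$ requires $\sigma^2/\tau^2=0$. Hence for finite $\tau$ no stationary point of the unconstrained joint objective recovers the posterior mean. Your step~2 cannot succeed without imposing $(f_\theta)_\sharp\mathcal{N}(0,I)=p(x)$.
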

\begin{proof}
    See Proposition~\ref{prop:mean_recovery_gap} in Appendix~\ref{app:proof_of_benefit_of_joint_training}.
\end{proof}

Next, we relate the new term  $\mathcal{L}_{\text{data}}(\theta, \phi)$ in \eqref{eq:kl-xyz} to the {\em mean flow loss} \cite{geng2025meanflowsonestepgenerative}, which imposes structural constraints on the flow map.

\paragraph{Connection to mean flows.}
We briefly recall the mean flow objective from \cite{geng2025meanflowsonestepgenerative}. Denoting
\begin{align}
    &\mathcal{E}_\theta(x, z, r, t) := (t-r) \left[u_\theta(\psi_t(x, z), r, t) - \dot{\psi}_t(x, z)\right], \nonumber \\
    &\text{where} \quad \psi_t(x, z) := (1 - t)x + t z, 0\le r\le t\le 1, 
\end{align}
is the linear interpolant between data $x$ and noise $z$, the mean flow loss is given by
\begin{align}\label{eq:mean-flow-loss}
    &\mathbb{E}_{x, z, r, t}\left[\|\partial_t \mathcal{E}_\theta(x, z, r, t)\|^2\right] \approx \mathcal{L}_{\text{MF}}(\theta) \\
    &\, := \mathbb{E}_{x, z, r, t}\left[\|u_\theta(\psi_t(x, z), r, t) - \mathtt{stopgrad}(u_{\text{tgt}})\|^2\right], \nonumber
\end{align}
where $u_{\text{tgt}} := \dot{\psi}_t(x, z) - (t-r)\frac{d}{dt} u_\theta(\psi_t(x, z), r, t)$ is the effective regression target.
Below, we establish a direct link between this objective and the term $\mathcal{L}_{\text{data}}(\theta, \phi)$ in \eqref{eq:kl-xyz}.
\begin{proposition}\label{eq:upper-bound}
Let the noise-to-data map $f_\theta$ be defined by $f_\theta(z) := z - u_\theta(z, 0, 1)$. Then we have
\begin{align}
    \|x - f_\theta(z)\|^2 \leq \int^1_0 \|\partial_t \mathcal{E}_\theta(x, z, 0, t)\|^2 dt.
\end{align}
\end{proposition}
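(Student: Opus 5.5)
The plan is to write $x - f_\theta(z)$ as the value at $t=1$ of $\mathcal{E}_\theta(x,z,0,t)$, show that this function vanishes at $t=0$, and then apply the fundamental theorem of calculus together with Jensen's (or Cauchy--Schwarz) inequality.

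\textbf{Evaluate at $t=1$ and $t=0$.} With $r=0$, we have $\psi_1(x,z)=z$ and $\dot\psi_t(x,z)=z-x$. Hence
\begin{align*}
\mathcal{E}_\theta(x,z,0,1) &= u_\theta(z,0,1)-(z-x)\\
&= x-\bigl(z-u_\theta(z,0,1)\bigr)\\
&= x-f_\theta(z).
\end{align*}
At $t=0$ the prefactor $(t-r)=0$ kills the bracket, so $\mathcal{E}_\theta(x,z,0,0)=0$. This needs only that $u_\theta(\psi_0(x,z),0,0)$ is finite, which holds for any continuous network output.

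\textbf{Fundamental theorem of calculus.} Assuming $u_\theta$ is continuously differentiable in $(x_t,t)$, so that $t\mapsto\mathcal{E}_\theta(x,z,0,t)$ is $C^1$ on $[0,1]$, we get
\begin{align*}
x-f_\theta(z) &= \mathcal{E}_\theta(x,z,0,1)-\mathcal{E}_\theta(x,z,0,0)\\
&= \int_0^1 \partial_t\mathcal{E}_\theta(x,z,0,t)\,dt.
\end{align*}

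\textbf{Jensen on the unit interval.} Since Lebesgue measure on $[0,1]$ is a probability measure, Jensen's inequality (equivalently Cauchy--Schwarz in $L^2([0,1];\mathbb{R}^d)$) gives
\[
\Bigl\|\int_0^1 g(t)\,dt\Bigr\|^2 \le \int_0^1 \|g(t)\|^2\,dt .
\]
Applying this with $g(t)=\partial_t\mathcal{E}_\theta(x,z,0,t)$ yields the claimed inequality.

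\textbf{Main obstacle.} The argument is short, and the only delicate point is regularity. We need differentiability of $t\mapsto u_\theta(\psi_t,0,t)$ down to $t=0$, and boundedness of $u_\theta$ near $t=r$ so that the boundary term vanishes. I would state this smoothness of $u_\theta$ as a standing assumption, which is natural for neural network parameterizations with smooth activations. I would also remark that the right-hand side is exactly the $r=0$ slice of the quantity approximated by $\mathcal{L}_{\text{MF}}$ in \eqref{eq:mean-flow-loss}.
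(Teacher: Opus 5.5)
Your proposal is correct and follows essentially the same route as the paper: identify $x - f_\theta(z)$ with $\mathcal{E}_\theta(x,z,0,1)$, note that $\mathcal{E}_\theta(x,z,0,0)=0$, write the difference as $\int_0^1 \partial_t \mathcal{E}_\theta(x,z,0,t)\,dt$, and apply Jensen on $[0,1]$. Your explicit smoothness assumption on $u_\theta$ covers the fundamental theorem of calculus step and the vanishing boundary term, which the paper uses without stating.
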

\begin{proof}
    See Appendix \ref{app:proof-upper-bound}.
\end{proof}
This result shows that the mean flow loss in the anchored case $r = 0$ and $t \sim U([0, 1])$ acts as an upper bound proxy to the reconstruction error $\|x - f_\theta(z)\|^2$ in \eqref{eq:Lx}.
This specialized setting targets direct one-step transport to $r=0$. 
Motivated by this connection, we opt to use the general mean flow loss \eqref{eq:mean-flow-loss}, which distributes learning over $(r, t)$ to additionally learn intermediate flow maps $f_\theta(x_t, r, t)$. While this does not ensure optimality for the one-step transport $x = f_\theta(z, 0, 1)$, in practice, it yields strong empirical performance and furthermore provides functionality for multi-step sampling (Section \ref{sec:multi-step-sampling}).
Summarizing, we propose to train $(\theta, \phi)$ using the following objective:
\begin{align}\label{eq:final-loss}
    \mathcal{L}_{\theta, \phi} := \frac{1}{2\tau^2} \mathcal{L}_{\text{MF}}(\theta; \phi) + \frac{1}{2\sigma^2} \mathcal{L}_{\text{obs}}(\theta, \phi) + \mathcal{L}_{\text{KL}}(\phi),
\end{align}
where the mean flow term is evaluated using $(x, z)$-pairs sampled from the joint distribution $\pi_\phi(x, z) := \int q_\phi(z | y)p(y | x)p(x) dy$, in accordance with \eqref{eq:Lx}. This dependence induces an implicit coupling between $\theta$ and $\phi$. To promote stable optimization, we further limit the interaction to this term by replacing $\theta$ in the observation loss $\mathcal{L}_{\text{obs}}$ with its exponential moving average (EMA), yielding $\mathcal{L}_{\text{obs}}(\theta^-, \phi)$, where $\theta^-$ denotes the EMA of $\theta$.

\begin{remark}
    Our framework can also be related to consistency model training by Proposition 6.1 in \cite{silvestri2025training}. In this case, the mean flow loss in \eqref{eq:final-loss} is replaced by an appropriate consistency loss.
\end{remark}

\begin{algorithm}[t]
\caption{Multi-Step Conditional Sampling with VFM}
\label{alg:conditional-flow-sampling}
\begin{algorithmic}[1]
\STATE {\bf Input:} Observation $y$, inverse problem class $c$, time partition $1=t_0 > \cdots > t_K=0$, adapter mean and standard deviation $\mu_\phi, \sigma_\phi$, mean flow model $u_\theta$
\STATE $\epsilon \sim \mathcal{N}(0, I)$
\STATE $z \gets \mu_\phi(y, c) + \sigma_\phi(y, c)\odot \epsilon$
\STATE $x \gets z$
\FOR{$k=1$ to $K$}
    \STATE $x \gets x + (t_{k} - t_{k-1}) u_\theta(x, t_k, t_{k-1})$
\ENDFOR
\STATE {\bf Output:} $x$
\end{algorithmic}
\label{alg:multistep-sampling}
\end{algorithm}

\begin{algorithm}[t]
\caption{Joint training of the adapter and flow map}
\label{alg:conditional-flow-training}
\begin{algorithmic}[1]
\STATE {\bf Input:} Inverse problem classes $\mathcal{A}_1, \ldots, \mathcal{A}_C$, observation noise standard deviation $\sigma$, data misfit tolerance $\tau$, conditional noise proportion $\alpha$, learning rates $\eta_1, \eta_2$, EMA rate $\mu$, adaptive loss constants $\gamma, p$
\STATE $\theta^- \leftarrow \mathtt{stopgrad}(\theta)$
\REPEAT
    \STATE Sample $c \sim p(c)$, $x \sim p(x)$
    \STATE Sample forward operator $A_c^\omega \in \mathcal{A}_c$
    \STATE $y \leftarrow A_c^\omega x + \varepsilon, \quad \varepsilon \sim \mathcal{N}(0, \sigma^2I)$
    \STATE $z \gets \mu_\phi(y, c) + \sigma_\phi(y, c)\odot \epsilon, \quad \epsilon \sim \mathcal{N}(0, I)$
    \STATE $\mathcal{L}_{\text{obs}}(\phi) \leftarrow \|y - A_c^\omega(f_{\theta^-}(z, 0, 1))\|^2$
    \STATE $\mathcal{L}_{\text{KL}}(\phi) \leftarrow \text{KL}\!\left(\mathcal{N}(\mu_\phi(y, c), \sigma^2_\phi(y, c)I) \,||\,\mathcal{N}(0, I)\right)$
    \STATE Sample $w \sim U([0, 1])$ and $(r, t) \sim p(r, t)$
    \IF{$w > \alpha$}
        \STATE $z \sim \mathcal{N}(0, I)$
    \ENDIF
    \STATE $\mathcal{L}_{\text{MF}}(\theta; \phi) \leftarrow \mathrm{MeanFlowLoss}(x, z, r, t)$
    \STATE $\mathcal{L}(\theta, \phi) \leftarrow \frac{1}{2\tau^2} \mathcal{L}_{\text{MF}}(\theta; \phi) + \frac{1}{2\sigma^2} \mathcal{L}_{\text{obs}}(\theta) + \mathcal{L}_{\text{KL}}(\phi)$
    \STATE $\mathcal{L}(\theta, \phi) \leftarrow \mathcal{L}(\theta, \phi) / \mathtt{stopgrad}(\|\mathcal{L}(\theta, \phi) + \gamma\|^p)$
    \STATE $\theta \leftarrow \theta - \eta_1 \nabla_{\theta} \mathcal{L}(\theta, \phi)$
    \STATE $\phi \leftarrow \phi - \eta_2 \nabla_{\phi} \mathcal{L}(\theta, \phi)$
    \STATE $\theta^- \leftarrow \mathtt{stopgrad}(\mu \theta^- + (1 - \mu) \theta)$
\UNTIL{convergence}
\end{algorithmic}
\label{alg:training}
\end{algorithm}

\subsection{Amortizing Over Multiple Inverse Problems}\label{sec:amortization}

In many applications, one is interested not in a single inverse problem defined by a fixed forward operator $A$, but rather a {\em family of inverse problems}. To accommodate this setting, we extend our framework by amortizing inference over multiple forward operators $A_1, \ldots, A_C$. This allows for a single model to handle multiple tasks, such as denoising, inpainting, and deblurring.

To achieve this, we consider a class-conditional noise adapter $q_\phi(z | y, c) \!=\! \mathcal{N}(z | \mu_\phi(y, c), \mathtt{diag}(\sigma^2_\phi(y, c)))$, where $c \in \{1, \ldots, C\}$ is a categorical variable indicating which forward operator $A_c$ was used to generate the observation $y$. Conditioning the adapter on $c$ enables the model to adapt its posterior approximation to the specific structure of each inverse problem.
We may further extend this by amortizing over {\em inverse problem  classes}, where $c$ now defines a collection of inverse problems $\mathcal{A}_c = \{A_c^\omega\}_{\omega \in \Omega}$. For example, these can define a family of random masks or a distribution of blurring kernels.

\subsection{Single and Multi-Step Conditional Sampling}\label{sec:multi-step-sampling}
Given a trained noise adapter $q_\phi(z|y)$ and flow map $f_\theta(z)$, samples from the data-space posterior $p(x|y)$ can be approximately generated by first sampling $z \sim q_\phi(z|y)$ and then mapping $x = f_\theta(z)$. The validity of this procedure is justified by the following result.

\begin{proposition}\label{eq:posterior-sampling}
    Let the joint distribuion of $(x, y, z)$ be given by $p(x, y, z) = p_\theta(x, y | z) p(z)$, for $p_\theta(x, y | z)$ in \eqref{eq:xy-z-decomp}. Then, for any fixed observation $y$, the data-space posterior $p(x | y)$ converges weakly to the pushforward of the noise-space posterior $p(z | y)$ under the map $f_\theta$, as $\tau \rightarrow 0$.
\end{proposition}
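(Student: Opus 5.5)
We argue directly with test functions, using the explicit Gaussian form of the decoder in \eqref{eq:xy-z-decomp}. Marginalising the joint $p(x,y,z)=p_\theta(x,y|z)p(z)$ over $x$ gives $p(y,z)=\mathcal{N}(y|A(f_\theta(z)),\sigma^2 I)\,p(z)$, which does not depend on $\tau$. Hence $p(y)$ and the noise-space posterior $p(z|y)\propto \mathcal{N}(y|A(f_\theta(z)),\sigma^2 I)p(z)$ are the same for all $\tau$, and $p(y)>0$ for every $y$ because the Gaussian likelihood is strictly positive. Moreover, conditionally on $(y,z)$, the variable $x$ is independent of $y$ and distributed as $\mathcal{N}(f_\theta(z),\tau^2 I)$. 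Therefore
\begin{align*}
p_\tau(x|y)=\int \mathcal{N}(x\,|\,f_\theta(z),\tau^2 I)\,p(z|y)\,dz .
\end{align*}
This identity is the key step: the data-space posterior is exactly the pushforward $(f_\theta)_\# p(\cdot|y)$ convolved with an isotropic Gaussian of variance $\tau^2$.

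It remains to show that convolving with $\mathcal{N}(0,\tau^2 I)$ converges weakly to the identity. Let $g$ be bounded and continuous. Then
\begin{align*}
\mathbb{E}_{p_\tau(x|y)}[g(x)]=\int\!\!\int g(f_\theta(z)+\tau\epsilon)\,\mathcal{N}(\epsilon|0,I)\,p(z|y)\,d\epsilon\,dz .
\end{align*}
For each fixed $(z,\epsilon)$, continuity of $g$ gives $g(f_\theta(z)+\tau\epsilon)\to g(f_\theta(z))$ as $\tau\to0$. The integrand is bounded by $\|g\|_\infty$, which is integrable against the probability measure $\mathcal{N}(\epsilon|0,I)\,p(z|y)$. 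Dominated convergence then gives convergence to $\int g(f_\theta(z))\,p(z|y)\,dz$, which is the expectation of $g$ under the pushforward $(f_\theta)_\# p(z|y)$. This is weak convergence.

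An equivalent route is to note that $x=f_\theta(z)+\tau\epsilon$ with $z\sim p(z|y)$ converges almost surely, hence in distribution, to $f_\theta(z)$. Slutsky-type reasoning then gives the same conclusion.

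I expect the main obstacle to be the conditioning bookkeeping rather than the limit. One must check carefully that conditioning on $y$ does not reweight $z$ through the $x$-factor. This holds because the $x$-density integrates to one for each fixed $z$, so it disappears from $p(y,z)$. One also needs a mild regularity assumption, namely measurability of $f_\theta$ and $A\circ f_\theta$, so that the normalizer $p(y)$ is finite and positive. Finiteness is immediate because the likelihood is bounded by $(2\pi\sigma^2)^{-m/2}$.
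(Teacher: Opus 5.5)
Your proposal is correct and follows essentially the same route as the paper: both write $p_\tau(x|y)=\int \mathcal{N}(x\,|\,f_\theta(z),\tau^2 I)\,p(z|y)\,dz$ and then pass to the limit against bounded continuous test functions by dominated convergence, using the bound $\|g\|_\infty$. Your version is somewhat more explicit in two places. You check that $p(z|y)$ does not depend on $\tau$, and you use $x=f_\theta(z)+\tau\epsilon$ to do a single dominated-convergence step on the product space, whereas the paper invokes $\mathcal{N}(f_\theta(z),\tau^2 I)\Rightarrow\delta_{f_\theta(z)}$ pointwise in $z$ together with Fubini.
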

\begin{proof}
    See Appendix \ref{app:proof-posterior-sampling}.
\end{proof}

The proposition states that in the limiting case $\tau \rightarrow 0$, sampling from $p(x|y)$ is equivalent in distribution to first sampling $z \sim p(z|y)$ (approximated by $q_\phi(z|y)$) and then applying $x = f_\theta(z)$. While sound in theory, we find that when $\tau \ll \sigma$, joint optimization of $(\theta, \phi)$ becomes difficult. This is likely due to the RHS distribution in \eqref{eq:xyz-repr} concentrating sharply around the submanifold $\{(x, y, z) : x = f_\theta(z)\}$, making it nearly impossible to match using the LHS representation of \eqref{eq:xyz-repr}, which remains a full distribution over $(x, y, z)$. In practice, we find that using $\tau$ larger than $\sigma$ yields stable optimization and the best empirical results.

Sample quality can also be improved by considering {\em multi-step sampling} instead of single-step sampling, as described in Algorithm \ref{alg:multistep-sampling}. Empirically, high-quality samples can be obtained with only a small number of steps $K$, substantially fewer than the number of integration steps required for solving a full generative ODE or SDE.

\subsection{Other Training Considerations}

\paragraph{Mixing in the unconditional loss:}
We observe that training solely using the objective \eqref{eq:final-loss} can degrade the quality of unconditional samples $x = f_\theta(z)$, with $z \sim \mathcal{N}(0, I)$. This behaviour arises because latent samples drawn from $q_\phi(z | y)$
retain structural details of $y$, and therefore are not fully representative of pure noise drawn from $\mathcal{N}(0, I)$. Thus, during training, the mean flow loss is never evaluated on pure noise, imparining the model to generate unconditional samples. To address this, we modify the computation of the mean flow loss $\mathcal{L}_{\text{MF}}(\theta; \phi)$, by sampling $(x, z) \sim \pi_\phi(x, z)$ with probability $\alpha$ and with remaining probablility $1-\alpha$, we sample $z \sim \mathcal{N}(0, I)$ independently of $x$.

\paragraph{Adaptive loss:} Similar to the mean flow training procedure of \cite{geng2025meanflowsonestepgenerative}, we consider an adaptive loss scaling to stabilize optimization. Specifically, we use the rescaled loss $w \cdot \mathcal{L}_{\theta, \phi}$, where the weight $w$ is given by $w = 1/\mathtt{stopgrad}(\|\mathcal{L}_{\theta, \phi} + \gamma\|^p)$ for constants $\gamma, p > 0$.

We summarize the full training procedure in Algorithm \ref{alg:training}.

\section{Experiments}

\begin{figure}[t]
\centering

\begin{subfigure}[t]{0.32\linewidth}
  \centering
  \includegraphics[width=\linewidth]{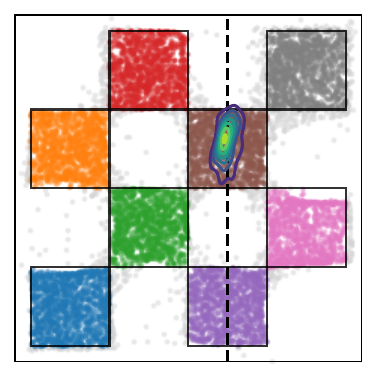}
\end{subfigure}\hfill
\begin{subfigure}[t]{0.32\linewidth}
  \centering
  \includegraphics[width=\linewidth]{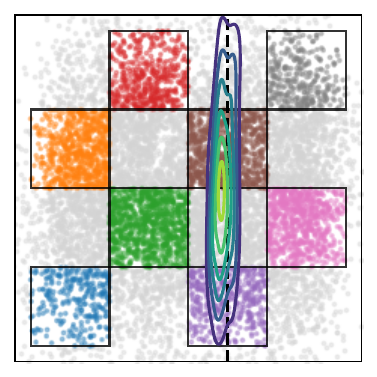}
\end{subfigure}\hfill
\begin{subfigure}[t]{0.32\linewidth}
  \centering
  \includegraphics[width=\linewidth]{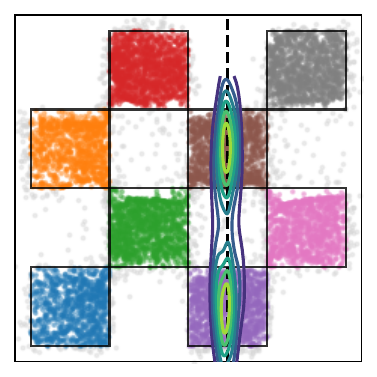}
\end{subfigure}

\begin{subfigure}[t]{0.32\linewidth}
  \centering
  \includegraphics[width=\linewidth]{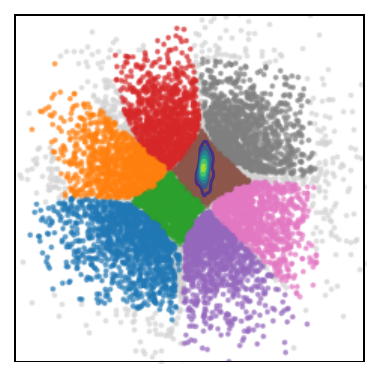}
  \caption{{\scriptsize \texttt{frozen-$\theta$}}}
  \label{fig:cb-frozen-baseline}
\end{subfigure}\hfill
\begin{subfigure}[t]{0.32\linewidth}
  \centering
  \includegraphics[width=\linewidth]{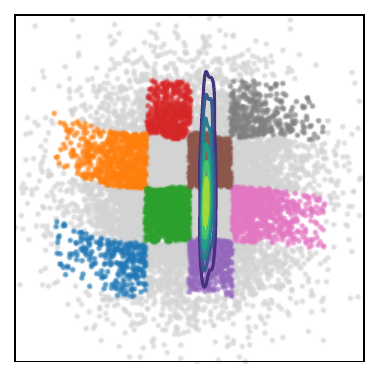}
  \caption{{\scriptsize \texttt{unconstrained-$\theta$}}}
  \label{fig:cb-unconstrained-baseline}
\end{subfigure}\hfill
\begin{subfigure}[t]{0.32\linewidth}
  \centering
  \includegraphics[width=\linewidth]{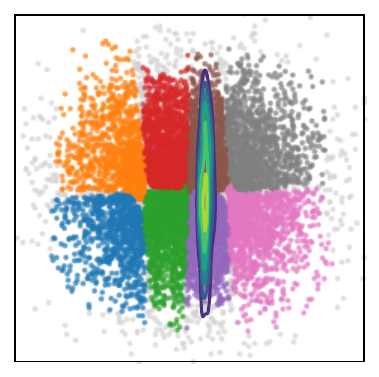}
  \caption{VFM (ours)}
  \label{fig:cb-vfm}
\end{subfigure}

\caption{Prior 2D samples and posterior densities in data space (top row) and noise space (bottom row). We observe the $x$-component (black dashed lines) with $\sigma=0.1$. 
The unconditional samples are color-coded by checkerboard cell; light grey for off-manifold samples. VFM successfully captures the bimodal nature of the posterior, while the baselines struggle to do so.
}
\label{fig:checkerboard-viz}
\end{figure}

\begin{figure*}[t]
  \centering
  \includegraphics[width=\linewidth]{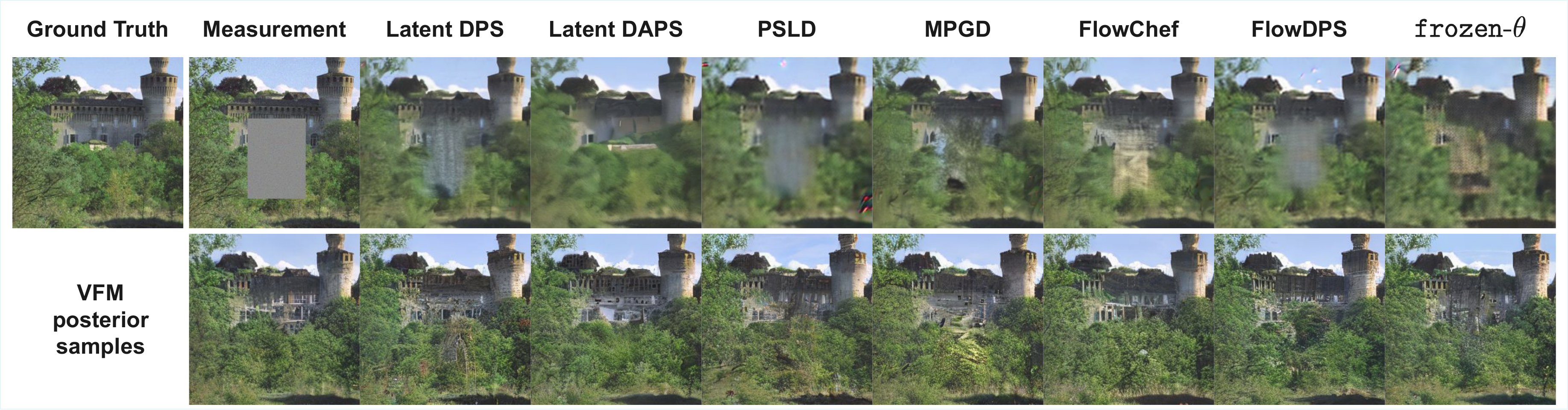}
  \caption{Qualitative comparison on ImageNet 256$\times$256 box inpainting. Top row: ground truth, measurement, and reconstructions from guidance-based baselines. Bottom row: conditional samples produced by VFM, showing diversity in the inpainted region.}
  \label{fig:imagenet-main}
\end{figure*}

\vspace{-5pt}

\subsection{Illustration on a 2D Example}

In this experiment, we illustrate the effects of jointly training $(\theta, \phi)$ on a toy 2D example, and perform ablations on key design choices in VFM.  Specifically, we take $p(x)$ to be a $4\times4$ checkerboard distribution supported on $[-2,2]\times[-2,2]$.
For the forward problem, we observe only the first coordinate, i.e.\ $y = Ax + \varepsilon$ with $A=\begin{pmatrix}1 & 0\end{pmatrix}$ and $\varepsilon\sim\mathcal N(0,\sigma^2)$ with $\sigma=0.1$. We refer the readers to Appendix \ref{app:checkerboard-exp} for details on the experimental setup.

\vspace{-5pt}

\paragraph{Baselines and evaluation metrics.}
We consider two baselines: the first, \texttt{frozen-$\theta$} trains only the noise adapter $q_\phi(z|y)$ via loss \eqref{eq:negative-elbo-y} (amortized over $y$), while keeping $\theta$ fixed to a pretrained flow map. The second, \texttt{unconstrained-$\theta$}, optimizes the same objective but learns $\theta$ jointly with $\phi$. These baselines are chosen to illustrate (i) the effect of joint optimization of $\theta$ and $\phi$, and (ii) the failure mode that can occur when $\theta$ is trained without the structural constraints imposed by the mean flow loss.

For model evaluation, we use the following metrics: (1) The negative log predictive density (NLPD), evaluates how well generated samples are consistent with observations $y$; (2) the continuous ranked probability score (CRPS) measures uncertainty calibration around the ground truth $x$ that generated $y$; (3) the maximum mean discrepancy (MMD) provides a sample-based distance between the true and approximate posteriors \cite{gretton2012kernel}; (4) the support accuracy (SACC) measures the proportion of samples $x = f_\theta(z)$ that lie on the checkerboard support. We compare MMD and SACC on both unconditional samples $\{f_\theta(z)\}_{z \sim \mathcal{N}(0, I)}$ and conditional samples $\{f_\theta(z)\}_{z \sim q_\phi(z | y)}$ to evaluate the quality of both prior and posterior approximations, respectively. For details, see Appendix \ref{app:metrics}.

\vspace{-5pt}

\paragraph{Ablation on the loss components.}
We compare VFM against \texttt{frozen-$\theta$} and \texttt{unconstrained-$\theta$} to isolate the effect of the mean flow term $\mathcal{L}_{\text{MF}}(\theta; \phi)$ in \eqref{eq:final-loss}; results displayed in Figure \ref{fig:checkerboard-viz}.
The \texttt{frozen-$\theta$} baseline (Figure \ref{fig:cb-frozen-baseline}) fails to capture the bimodality of the true posterior (support in the brown and purple cells), due to the limited flexibility of $q_\phi$. On the other hand, \texttt{unconstrained-$\theta$} (Figure \ref{fig:cb-unconstrained-baseline}) is able to sample from both brown and purple cells, however, also produces many off-manifold samples. VFM (Figure \ref{fig:cb-vfm}, $\tau=100$, $\alpha=1$) successfully captures both modes while preserving the checkerboard pattern; joint training improves the noise-to-data coupling, while $\mathcal{L}_{\text{MF}}$ pull samples towards the structured data manifold. This observation is supported by the improvements in CRPS and posterior MMD (see Figures \ref{fig:tau-ablation-alpha-0.5} \& \ref{fig:tau-ablation-alpha-1.0}, Appendix), and high support accuracy comparable to the pretrained flow map used in \texttt{frozen-$\theta$}.
Finally, removing $\mathcal{L}_{\text{KL}}(\phi)$ from \eqref{eq:final-loss} makes training unstable, owing to the ill-posedness of the inverse problem without prior regularization.

\vspace{-5pt}

\paragraph{Ablation on $\tau$ and $\alpha$.} We sweep $\tau \in [10^{-2}, 10^2]$, and report metrics using a single-step and 4-step sampler (See Figures \ref{fig:tau-ablation-alpha-0.5} \& \ref{fig:tau-ablation-alpha-1.0}, Appendix). When $\tau \lesssim \sigma$, performance across metrics is generally worse than \texttt{frozen-$\theta$} (Figures~\ref{fig:tau-0.01} \& \ref{fig:tau-0.1}, Appendix). For $\tau \geq 1$, results improve substantially, especially CRPS and posterior MMD, while SACC and prior MMD approach the strong values already achieved by the pretrained flow used in \texttt{frozen-$\theta$}.
We also ablate on $\alpha$, fixing $\tau = 100$ (Figure \ref{fig:alpha-ablation}, Appendix). Setting $\alpha = 0$ decouples the training of mean flow and the adapter, yielding behaviour close to \texttt{frozen-$\theta$}. Increasing $\alpha$ strengthens the coupling, inducing a more pronounced warping of the latent space. In practice, $\alpha < 1$ is more stable and yields better prior fit (lower prior MMD, compare Figures \ref{fig:prior-mmd-alpha-0.5} and \ref{fig:prior-mmd-alpha-1.0}), whereas $\alpha = 1$ gives the best posterior fit (lower posterior MMD, see Figures \ref{fig:posterior-mmd-alpha-0.5} vs \ref{fig:posterior-mmd-alpha-1.0}, Appendix).

\vspace{-5pt}

\paragraph{To EMA or not to EMA.}
Finally, we examine the role of using an EMA of $\theta$ in the observation loss $\mathcal{L}_{\text{obs}}(\theta, \phi)$.
Without EMA, i.e., allowing $\theta$-gradients to propagate through $\mathcal{L}_{\text{obs}}$, both prior and posterior support accuracy deteriorate as $\tau$ increases (orange curves in Figures \ref{fig:tau-ablation-alpha-0.5} and \ref{fig:tau-ablation-alpha-1.0}). This can be explained by the fact that in the limit $\tau \rightarrow \infty$, this pushes training toward the \texttt{unconstrained-$\theta$} failure mode, leading to unstructured sample generation. This can be seen in Figure \ref{fig:vfm-no-ema}, where the no-EMA variant when $\tau=100$ yields results similar to \texttt{unconstrained-$\theta$}.

\subsection{Image Inverse Problems}

\begin{table*}[t]
\centering
\resizebox{\textwidth}{!}{%
\begin{tabular}{l|l|c|cccccccc}
\toprule
\textbf{Task} & \textbf{Method} & \textbf{NFE} & \textbf{PSNR ($\uparrow$)} & \textbf{SSIM ($\uparrow$)} & \textbf{LPIPS ($\downarrow$)} & \textbf{FID ($\downarrow$)} & \textbf{MMD ($\downarrow$)} & \textbf{CRPS\textsubscript{DINO} ($\downarrow$)} & \textbf{CRPS\textsubscript{Inc} ($\downarrow$)} & \textbf{Time (s) ($\downarrow$)} \\
\midrule
\multirow{10}{*}{\shortstack[l]{Inpaint\\(box)}} 
& Latent DPS   & 250$\times$2 & 22.80 & 0.704 & 0.349 & 62.89 & 0.132 & 0.511 & 0.389 & 7.223 \\
& Latent DAPS  & 250$\times$2 & \textbf{23.98} & \textbf{0.707} & 0.348 & -- & -- & \underline{0.468} & \underline{0.365} & 43.93 \\
& PSLD         & 250$\times$2 & 22.61 & 0.699 & 0.346 & 67.22 & 0.153 & 0.536 & 0.435 & 10.07 \\
& MPGD         & 250$\times$2 & 22.76 & 0.705 & 0.350 & 62.35 & 0.132 & 0.510 & 0.388 & 7.487 \\
& FlowChef     & 250$\times$2 & 22.80 & 0.704 & 0.349 & 63.20 & 0.133 & 0.512 & 0.389 & 7.612 \\
& FlowDPS      & 250$\times$2 & \underline{23.21} & \underline{0.706} & 0.364 & 75.62 & 0.166 & 0.606 & 0.482 & 14.47 \\
\cmidrule{2-11}
& \texttt{frozen}-$\theta$ & 1 & 19.41 & 0.531 & 0.528 & 136.12 & 0.255 & 0.814 & 0.601 & \textbf{0.015} \\
\rowcolor{lightpink}
& VFM (ours)  & 1 / 10 & 21.98 / 22.71 & 0.609 / 0.632 & \underline{0.281} / \textbf{0.280} & \textbf{33.34} & \textbf{0.074} & \textbf{0.387} & \textbf{0.362} & \underline{0.025} / 0.252 \\
\midrule
\multirow{10}{*}{\shortstack[l]{Gaussian\\deblur}} 
& Latent DPS   & 250$\times$2 & 23.21 & 0.592 & 0.434 & \underline{83.11} & 0.180 & 0.613 & 0.498 & 7.724 \\
& Latent DAPS  & 250$\times$2 & 21.46 & 0.500 & 0.432 & -- & -- & \underline{0.529} & \underline{0.422} & 46.86 \\
& PSLD         & 250$\times$2 & 23.01 & 0.591 & 0.459 & 101.23 & 0.223 & 0.675 & 0.559 & 10.28 \\
& MPGD         & 250$\times$2 & 23.22 & 0.593 & 0.435 & 83.86 & 0.183 & 0.612 & 0.498 & 7.695 \\
& FlowChef     & 250$\times$2 & 23.21 & 0.592 & 0.434 & 83.19 & \underline{0.180} & 0.613 & 0.499 & 7.525 \\
& FlowDPS      & 250$\times$2 & \underline{23.41} & \underline{0.615} & 0.449 & 92.13 & 0.209 & 0.699 & 0.569 & 14.91 \\
\cmidrule{2-11}
& \texttt{frozen}-$\theta$ & 1 & 20.02 & 0.419 & 0.597 & 172.74 & 0.306 & 0.927 & 0.657 & \textbf{0.015} \\
\rowcolor{lightpink}
& VFM (ours)  & 1 / 10 & 21.74 / \textbf{23.92} & 0.510 / \textbf{0.619} & \underline{0.417} / \textbf{0.388} & \textbf{51.05} & \textbf{0.096} & \textbf{0.525} & \textbf{0.399} & \underline{0.027} / 0.268 \\
\bottomrule
\end{tabular}%
}
\vspace{1mm}
\caption{Quantitative comparison on ImageNet for box inpainting and Gaussian deblurring. Best results are in \textbf{bold}, second best are \underline{underlined}. $\uparrow$: higher is better, $\downarrow$: lower is better. For VFM, we display results for single samples and average over 10 samples, displayed as \texttt{\{sample\}} / \texttt{\{average\}}. VFM achieves the best results on LPIPS, FID, MMD, CRPS, with a significantly reduced wall-clock time.}
\vspace{-2mm}
\label{tab:main_results}
\end{table*}

We evaluate VFM on standard image inverse problems using ImageNet 256$\times$256, comparing against established guidance-based solvers, as well as the \texttt{frozen}-$\theta$ baseline considered in our earlier 2D experiment. For VFM, we amortize over the problems, as described in Section \ref{sec:amortization}. 
All methods operate in the latent space of SD-VAE \cite{rombach2022high}.
We provide further details of our experimental settings in Appendix \ref{app:imagenet-exp}.

\vspace{-10pt}

\paragraph{Comparison with guidance-based methods.}
Table \ref{tab:main_results} reports quantitative results on box inpainting and Gaussian deblurring tasks (additional tasks are in Table \ref{tab:main_results-extended}, Appendix). For VFM, we report results for both single posterior samples and averaged estimates over 10 posterior samples, shown as  \texttt{\{sample\}}/\texttt{\{average\}}. For all guidance-based baselines, we use the same flow-matching backbone (SiT-B/2) used to initialize our mean-flow model.

Across both tasks, we observe that VFM is consistently better than the baselines on distributional metrics (FID, MMD \& CRPS), e.g., on box inpainting, the FIDs on the baselines range between 63--76, while we achieve an FID of 33.3. These improvements align with the qualitative results in Figure \ref{fig:imagenet-main}, where we observe that VFM exhibits notable diversity in the inpainted region, while maintaining visual sharpness. Guidance-based methods generally struggle with box-inpainting, especially when operating in latent space.

On pixel-space fidelity metrics (PSNR, SSIM), guidance methods consistently scores higher than a single VFM draw. However, both PSNR and SSIM typically reward mean behavior and thus prefer smoother results \cite{zhang2018unreasonable}. To confirm this, we observe that averaging multiple VFM samples narrows this gap and even exceeds the baselines in some instances, e.g., on Gaussian deblurring. On LPIPS, which is a feature-space perceptual similarity metric, we find that VFM is competitive even without averaging; this is consistent with LPIPS being more aligned with the perceptual quality than PSNR or SSIM \cite{zhang2018unreasonable}.

We also note the significant speed advantage of VFM at inference time: we used 250 sampling steps for the guidance methods with an additional $\times 2$ cost for classifier-free guidance \cite{ho2022classifier}, while VFM requires only one step to achieve competitive results, as displayed. This results in around two orders of magnitude lower wall-clock time, e.g., DAPS \cite{zhang2025improving} has an inference cost close to a minute; in comparison, the $\sim 0.03$s cost of VFM is instantaneous.

\vspace{-5pt}

\paragraph{Benefits of joint training.}
The \texttt{frozen}-$\theta$ baseline, while fastest at inference time, performs poorly across all metrics, exhibiting visible artifacts and blurriness. This highlights the importance of jointly training the flow map $f_\theta$ and the adapter $q_\phi$, consistent with our observations from the 2D experiment that the flow map itself needs to adjust for the adapter to approximate the conditional distributions well. By training jointly, we observe surprisingly strong perceptual quality, despite the simple Gaussian structural assumption used in the variational posterior.

\vspace{-5pt}

\paragraph{Unconditional generation.}
To assess the robustness of VFM, we also evaluate unconditional generation from the trained flow map. In Figure \ref{fig:imagenet-unconditional}, we compare the FID on 50,000 unconditional samples generated from the flow map in VFM, against various baselines with similar architecture sizes \cite{song_improved_2023, frans2025stepdiffusionshortcutmodels,lee2025decoupled, zhou2025inductive}. We fine-tune the SiT-B/2 model (trained for 80 epochs) for an additional 100 epochs. We note, however, that the baselines are trained for longer ($\sim \! 240$ epochs). Unconditional generation of VFM remains competitive, with 2-step sampling results achieving FID below 10 (see Figure \ref{fig:imagenet-unconditional} for visual results). To achieve this result, we emphasize the important role of the $\alpha$ parameter; we observe that the adapter's noise outputs retain some structure from the observations (see Figure \ref{fig:imagenet-noises}, Appendix) and are therefore not representative of pure standard Gaussian noise. Thus, using $\alpha < 1$ is necessary to achieve good unconditional performance. In our experiments, we used $\alpha=0.5$.

\begin{figure}[t]
  \centering
  \begin{minipage}[t]{0.54\linewidth}
    \vspace{0pt}
    \centering
    \includegraphics[width=\linewidth]{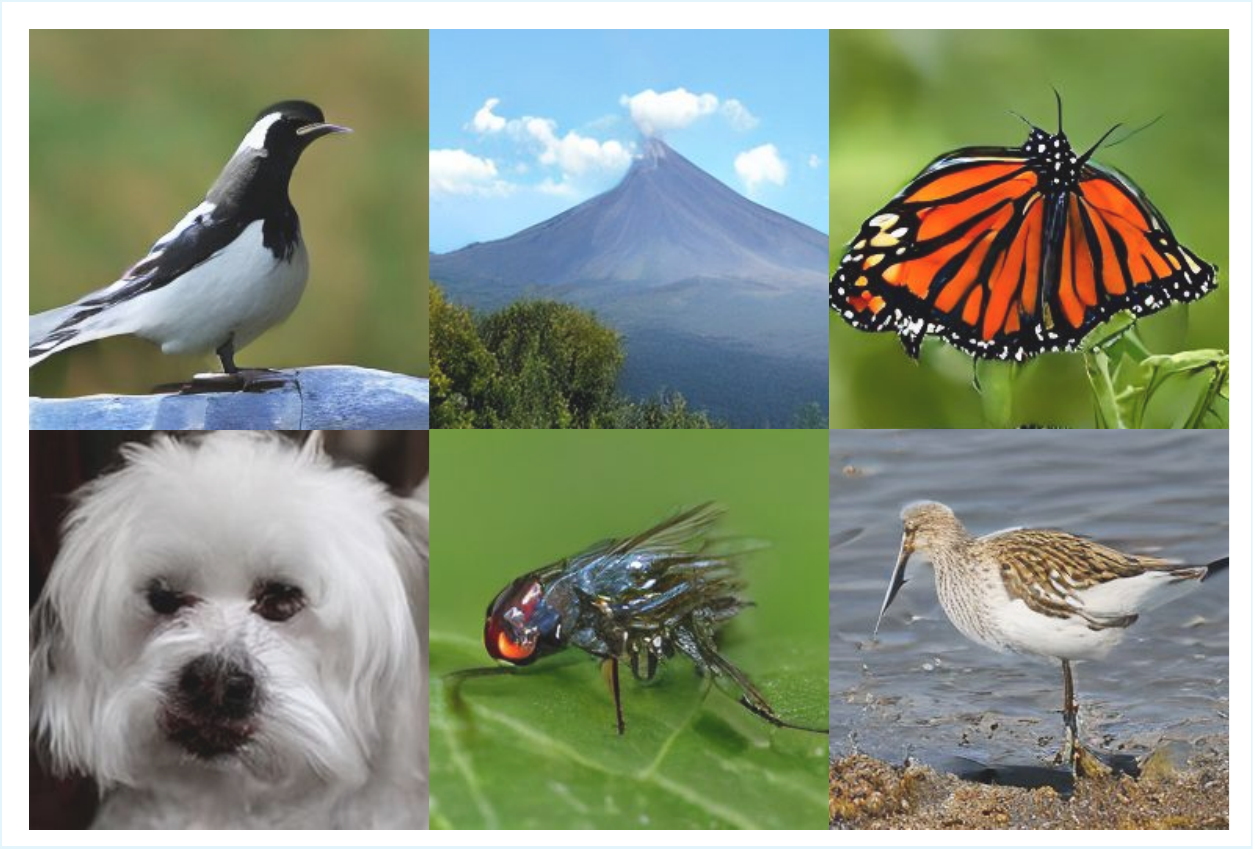}
  \end{minipage}\hfill
  \begin{minipage}[t]{0.45\linewidth}
    \vspace{-5pt}
    \centering
    \vspace{2mm}
    \scalebox{0.85}{
    \begin{tabular}{@{}l|cc@{}}
      \toprule
      & NFE & FID ($\downarrow$) \\
      \midrule
      iCT & 1 & 34.24 \\
      Shortcut-B/2 & 1 & 40.30 \\
      IMM-B/2 & 1$\times$2 & 9.60 \\
      MF-B/2 & 1 & 6.17 \\
      DMF-B/2  & 1 & 5.63 \\
      \midrule
      \textbf{VFM-B/2} & 1 & 10.77 \\
       & 2 & 9.22 \\
      \bottomrule
    \end{tabular}
    }
  \end{minipage}

  \caption{Unconditional generation on ImageNet $256\times256$. \textbf{Left:} unconditional samples from VFM-B/2. \textbf{Right:} unconditional FID comparison versus mean-flow baselines. VFM retains competitive performance despite it being trained for posterior sampling.}
  \vspace{-3mm}
  \label{fig:imagenet-unconditional}
\end{figure}

\subsection{General Reward Alignment via VFM Fine-Tuning}
\label{sec:reward_main}

\begin{figure}[t]
  \centering
  \includegraphics[width=\linewidth]{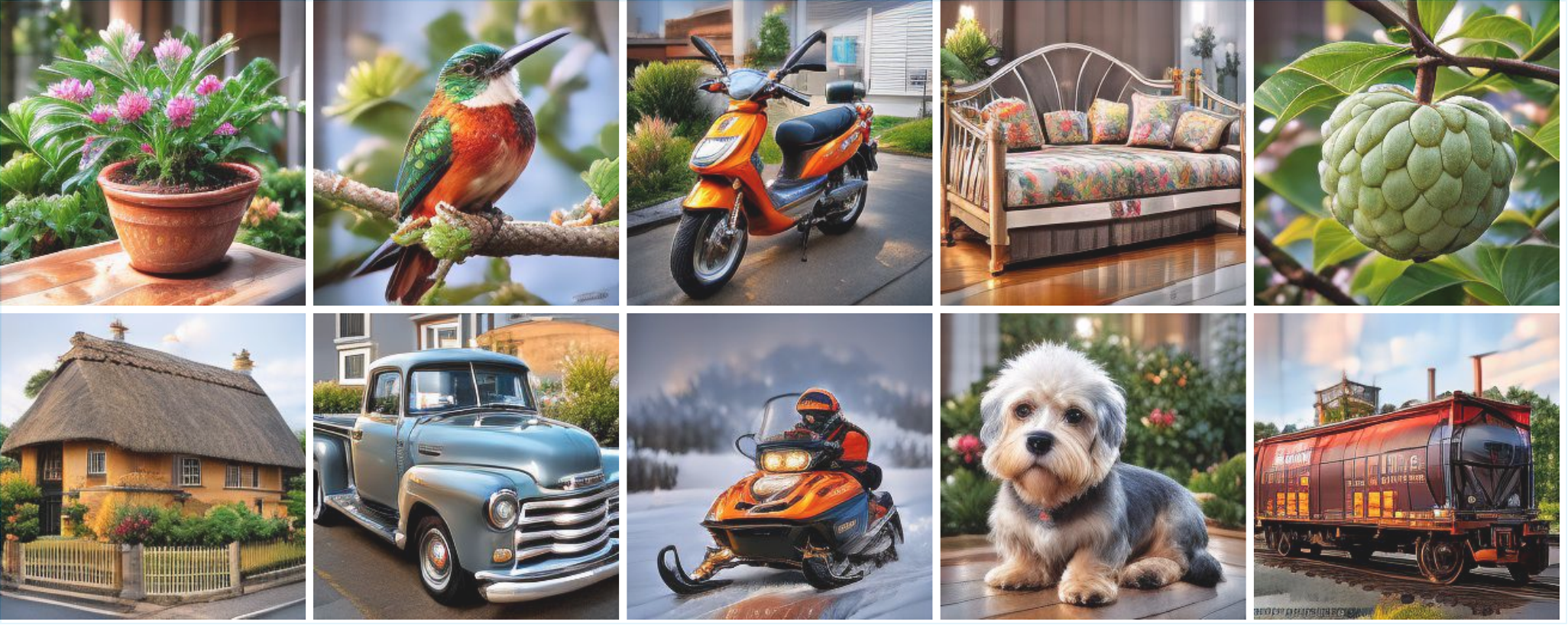}
  \caption{One-step reward-aligned generation using VFM fine-tuning. Starting from a pre-trained ImageNet flow map, VFM efficiently adapts the latent noise space and flow trajectories to sample from a reward-tilted distribution, achieving strong visual alignment with a target reward $R(x, c)$ in a single forward pass while preserving image quality.}
  \label{fig:reward_main}
\end{figure}    

Beyond solving standard inverse problems, the Variational Flow Map presents a highly efficient framework for general reward alignment. The goal is to fine-tune a pre-trained model such that its generated samples maximize a differentiable reward function $R(x,c)$ conditioned on a context $c$, while staying close to the original data distribution. This objective effectively corresponds to sampling from a reward-tilted distribution $p_{\text{reward}}(x|c) \propto p_{\text{data}}(x) \exp(\beta R(x, c))$. 

Traditional flow and diffusion reward fine-tuning methods require expensive backpropagation through iterative sampling trajectories \cite{denker2025deftefficientfinetuningdiffusion, domingoenrich2025adjointmatchingfinetuningflow, venkatraman2025amortizingintractableinferencediffusion} or rely on approximations \cite{clark2024directlyfinetuningdiffusionmodels, choi2026rethinkingdesignspacereinforcement}. In contrast, VFM achieves this by learning an amortized noise adapter $q_\phi(z|c)$ that directly maps the condition $c$ to a high-reward region of the latent space, while simultaneously fine-tuning the flow map $f_\theta$ to decode this noise into high-quality data. We formulate this by replacing the standard observation loss with a reward maximization objective:
\begin{align}
    \mathcal{L}_{\text{reward}}(\theta, \phi) = -\lambda \;\mathbb{E}_{c \sim p(c), z \sim q_\phi(z | c)}\left[R(f_\theta(z),c )\right]
\end{align}
where $\lambda$ controls the reward strength. In this context, the reward $R(x, c)$ can be viewed as the (unnormalized) log-likelihood of the context $c$ (e.g., a text prompt) given the generated sample.

To the best of our knowledge, this is the first rigorous, scalable framework for fine-tuning flow maps to arbitrary differentiable rewards. In particular, the fine-tuning process is very fast and stable. Starting from a pre-trained flow map, VFM achieves strong reward alignment in under $0.5$ epochs. The resulting model enables sampling from the reward-tilted distribution in a single neural function evaluation (1 NFE). We provide qualitative results in Figure~\ref{fig:reward_main} and present further training and generation details in Appendix~\ref{app:reward_appendix}.

\section{Related Works}
\label{sec:related}

Variational/amortized inference with diffusion-based priors has been explored in previous works: \cite{feng2023score} explores usage of score-based prior in variational inference to approximate posteriors $p(x|y)$ in data space and \cite{mammadov2024amortized} extends this to the amortized inference setting. However, these approaches rely on normalizing flows to ensure sufficient flexibility for the variational posterior, making scaling to high-resolution settings difficult. The work \cite{mardani2023variational} on the other hand, uses a Gaussian variational posterior similar to ours, but still performs variational inference in data space.

Noise space posterior inference for arbitrary generative models has been considered in \citep{venkatraman2025outsourced}. However, their method considers a frozen generator and compensates with a more flexible noise adapter based on neural SDEs, making training significantly more complex. In comparison, VFM uses a simpler adapter and instead unfreeze the generative flow map, so the model itself can adapt to the conditional task while keeping the objective simple.

We also note the work \citep{silvestri2025training}, which introduces Variational Consistency Training (VCT) to address instability issues in consistency model training by learning data-dependent noise couplings through a variational encoder that maps data into a better-behaved latent representation. While conceptually related to our work, the goal is different: VCT is aimed at improving stability of unconditional consistency training, whereas VFM is designed to amortize posterior sampling for conditional generation.

Finally, Noise Consistency Training (NCT) \cite{luo2025noise} also targets one-step conditional sampling, but via a different construction: they consider a diffusion process in $(z, y)$-space and learns a consistency map from intermediate states to $(x, y)$. This is strongly tied to consistency models and therefore do not generalize naturally to flow maps, considered state-of-the-art in one-step generative modeling.

\section{Conclusion}
We proposed Variational Flow Maps (VFMs) to enable highly efficient posterior sampling and reward fine-tuning with just a single (or few) sampling steps. VFM leverages a principled variational objective to jointly train a flow map alongside an amortized noise adapter, which infers optimal initial noise from noisy observations, class labels, or text prompts. A natural next step is to relax our current Gaussian adapter assumption by using more expressive noise models, such as normalizing flows or energy-transformers \cite{hoover2023energy}, which can capture richer, non-Gaussian conditional structures. Another exciting direction for future research is to extend the VFM framework to other distillation methods and modalities; for instance, one could tackle video inverse problems, where latent noise evolution could be leveraged to promote temporal coherence among frames.

\section*{Impact Statement}
The overarching goal of reducing the computational cost for conditional generation and posterior sampling has the potential not only to drive practical applications in scientific and engineering workflows that rely on fast generation of posterior samples, but also to help reduce the high energy cost for inference. This is especially valuable as generative models see widespread use in today's society; thus, the problem of lowering inference costs is an increasingly important challenge for machine learning. 
Variational flow maps take a step in this direction by enabling low-cost conditional sampling without sacrificing performance.

\section*{Acknowledgments}
The authors acknowledge the use of resources provided by the Isambard-AI National AI Research Resource (AIRR)~\cite{mcintoshsmith2024isambardaileadershipclasssupercomputer}. Isambard-AI is operated by the University of Bristol and is funded by the UK Government’s Department for Science, Innovation and Technology (DSIT) via UK Research and Innovation; and the Science and Technology Facilities Council [ST/AIRR/I-A-I/1023].
ST is supported by a Department of Defense Vannevar Bush Faculty Fellowship held by Prof. Andrew Stuart, and by the SciAI Center, funded by the Office of Naval Research (ONR), under Grant
Number N00014-23-1-2729.

\bibliography{references}
\bibliographystyle{icml2026}

\newpage
\appendix
\onecolumn

\section{Theory}
\subsection{Derivation of the loss}\label{app:loss-derivation}
We recall that the VFM objective is obtained by matching the following two representations of $p(x, y, z)$ using the KL divergence:
\begin{align}
    q_\phi(z | y) p(y | x) p(x) \approx p_\theta(x, y | z) p(z),
\end{align}
where we assumed that
\begin{align}\label{eq:xy-assump}
    p_\theta(x, y | z) = \mathcal{N}(x | f_\theta(z), \sigma^2 I)  \,\mathcal{N}(y | A f_\theta(z), \tau^2 I).
\end{align}
By direct computation, this yields
\begin{align}
    &\text{KL}(q_\phi(z | y) p(y | x) p(x) \,||\, p_\theta(x, y | z) p(z)) \\
    &= -\int \log \frac{p_\theta(x, y | z) p(z)}{q_\phi(z | y) p(y | x) p(x)} q_\phi(z | y) p(y | x) p(x) dx dy dz \\
    &= -\mathbb{E}_{q_\phi(z | y) p(y | x)p(x)}\left[\log p_\theta(x, y|z)\right] + \mathbb{E}_{p(y | x)p(x)}\left[\text{KL}\left(q_\phi(z|y) \,||\, p(z)\right)\right] + \underbrace{\mathbb{E}_{p(y|x) p(x)}[\log (p(y|x) p(x))]}_{\leq 0} \\
    &\leq -\mathbb{E}_{q_\phi(z | y) p(y | x)p(x)}\left[\log p_\theta(x, y|z)\right] + \mathbb{E}_{p(y)}\left[\text{KL}\left(q_\phi(z|y) \,||\, p(z)\right)\right] \\
    \begin{split}\label{eq:kl-split-form}
    &\stackrel{\eqref{eq:xy-assump}}{=} -\mathbb{E}_{q_\phi(z | y) p(y)}\left[\log \mathcal{N}(x | f_\theta(z), \sigma^2 I) + \log \mathcal{N}(y | \mathcal{A} f_\theta(z), \tau^2 I)\right] + \mathbb{E}_{p(y | x)p(x)}\left[\mathcal{KL}\left(q_\phi(z|y) \,||\, p(z)\right)\right],
    \end{split}
\end{align}
where we used that $\mathbb{E}_{p(y|x) p(x)}[\log (p(y|x) p(x))] \leq 0$ since this is the negative Shannon entropy of the joint distribution $H(p(x, y)) := -\mathbb{E}_{p(x, y)}[\log (p(x, y))] \geq 0$.
This yields
\begin{align}
&\text{KL}(q_\phi(z | y) p(y | x) p(x) \,||\, p_\theta(x, y | z) p(z)) \leq \frac{1}{2\tau^2}\mathcal{L}_{\text{data}}(\theta, \phi) + \frac{1}{2\sigma^2} \mathcal{L}_{\text{obs}}(\theta, \phi) + \mathcal{L}_{\text{KL}}(\phi), \nonumber
\end{align}
where
\begin{align}
    \mathcal{L}_{\text{data}}(\theta, \phi) &= \mathbb{E}_{q_\phi(z | y) p(y | x)p(x)}\left[\|x - f_\theta(z)\|^2\right], \label{eq:Lx-app} \\
    \mathcal{L}_{\text{obs}}(\theta, \phi) &= \mathbb{E}_{q_\phi(z | y) p(y)}\left[\|y - Af_\theta(z)\|^2\right], \\
    \mathcal{L}_{\text{KL}}(\phi) &= \mathbb{E}_{p(y)}\left[\text{KL}\left(q_\phi(z|y) \,||\, p(z)\right)\right].
\end{align}

\subsection{Proof of Proposition~\ref{prop:benefit_joint_training}}\label{app:proof_of_benefit_of_joint_training}

This section provides the formal proofs for Proposition~\ref{prop:benefit_joint_training} within a Linear-Gaussian framework. We analyze the interaction between the generative map $f_\theta$ and the variational posterior $q_\phi$ to demonstrate that joint optimization is necessary for exact posterior mean recovery under diagonal constraints. The derivation proceeds from characterizing the optimal parameters to proving the almost sure failure of separate training in Proposition~\ref{prop:mean_recovery_gap}. We conclude with Remark~\ref{rem:nonlinear_extension}, which discusses the extension of these results to non-linear cases through the lens of Jacobian alignment and symmetry restoration.

\paragraph{Data and Observation Model.}
We assume the ground truth data $x \in \mathbb{R}^d$ follows a Gaussian distribution:
\begin{align}
    x \sim p_{data}(x) = \mathcal{N}(x|m, C),
\end{align}
where $m \in \mathbb{R}^d$ is the data mean and $C \in \mathbb{R}^{d\times d}$ is the symmetric positive definite (SPD) covariance matrix.
The observation $y \in \mathbb{R}^{d_y}$ is obtained via a linear operator $A \in \mathbb{R}^{d_y \times d}$ with additive Gaussian noise:
\begin{align}
    y = Ax + \epsilon, \quad \epsilon \sim \mathcal{N}(0, \sigma^2 I),
\end{align}
where $\sigma > 0$ is the noise level. Consequently, the marginal distribution of observations is given by
\begin{align}
    p(y) = \mathcal{N}(y|\mu_y, \Sigma_y), \quad \text{where } \mu_y = Am, \quad \Sigma_y = ACA^\top + \sigma^2 I.
\end{align}

\paragraph{Generative Model.}
We define the generative model $f_\theta: \mathbb{R}^d \to \mathbb{R}^d$ as a linear map acting on a standard Gaussian latent variable $z$:
\begin{align}
    &z \sim p(z) = \mathcal{N}(z|0, I), \\
    &x = f_\theta(z) = K_\theta z + b_\theta,
\end{align}
where $\theta = \{K_\theta, b_\theta\}$ are the learnable parameters with $K_\theta \in \mathbb{R}^{d \times d}$ and $b_\theta \in \mathbb{R}^d$. The induced model distribution is $p_\theta(x) = \mathcal{N}(b_\theta, K_\theta K_\theta^\top)$.

\paragraph{Amortized Inference (Adapter).}
We parameterize the variational posterior (noise adapter) $q_\phi(z|y)$ as a multivariate Gaussian distribution:
\begin{align}
q_\phi(z|y) = \mathcal{N}(\mu_\phi(y), \Sigma_\phi(y)),
\end{align}
where $\mu_\phi: \mathbb{R}^{d_y} \to \mathbb{R}^d$ and $\Sigma_\phi: \mathbb{R}^{d_y} \to \mathbb{R}^{d\times d}$ are generally parameterized by neural networks.
While one may optionally restrict $\Sigma_\phi(y)$ to be a diagonal matrix for computational efficiency.

In the following sections, we will derive the optimal solutions for $\theta = \{K_\theta, b_\theta\}$ and $\phi$ under the separate training and joint training paradigms, respectively.

\paragraph{Training Objective.}
Recall that in the general framework, we minimized a joint objective consisting of a data matching term, observation matching term, and a KL divergence term:
\begin{equation}\label{eq:loss_linear_gaussian}
\begin{split}
    \mathcal{L}(\theta, \phi) = &\underbrace{\mathbb{E}_{y \sim p_{\mathrm{data}}(y)} \mathbb{E}_{z \sim q_\phi(z|y)} \left[ \frac{1}{2\sigma^2} \| y - A f_\theta(z) \|^2 \right]}_{\mathcal{L}_{\text{obs}}: \text{ Observation Loss}} \\
    &+ \underbrace{\mathbb{E}_{x \sim p_{\mathrm{data}}(x)} \mathbb{E}_{y \sim p(y|x)} \mathbb{E}_{z \sim q_\phi(z|y)} \left[ \frac{1}{2\tau^2} \| x - f_\theta(z) \|^2 \right]}_{\mathcal{L}_{\text{data}}: \text{ Data Fitting Loss}}\\
    &+ \underbrace{\mathbb{E}_{y \sim p_{\mathrm{data}}(y)} \left[ \mathrm{KL}(q_\phi(z|y) \,||\, p(z)) \right]}_{\mathcal{L}_{\mathrm{KL}}: \text{ KL Loss}}.
\end{split}
\end{equation}
In the linear-Gaussian theoretical analysis, the generative map $f_\theta(z) = K_\theta z + b_\theta$ is explicitly parameterized as a single-step affine transformation.
Note that $\mathcal{L}_{\text{data}}$ corresponds to the negative expected log-likelihood term $-\mathbb{E} \left[ \log \mathcal{N}(x|f_{\theta}(z), \tau^2 I) \right]$.

\begin{definition}[Matrix Sets and Measure]\label{def:matrix_and_measure}
We denote the set of $d \times d$ orthogonal matrices as the orthogonal group $\mathbb{O}(d) := \{ Q \in \mathbb{R}^{d \times d} \mid Q^\top Q = I \}$. The space $\mathbb{O}(d)$ is equipped with the unique normalized Haar measure $\nu_{\mathbb{O}(d)}$, representing the uniform distribution over the group. Furthermore, let $\mathbb{S}^d$ represent the space of $d \times d$ real symmetric matrices. The subsets of symmetric positive semi-definite (SPSD) and symmetric positive definite (SPD) matrices are denoted by $\mathbb{S}_+^d := \{ M \in \mathbb{S}^d \mid x^\top M x \ge 0, \forall x \in \mathbb{R}^d \}$ and $\mathbb{S}_{++}^d := \{ M \in \mathbb{S}^d \mid x^\top M x > 0, \forall x \in \mathbb{R}^d \setminus \{0\} \}$, respectively. We denote the set of $d \times d$ real diagonal matrices as $\mathbb{D}(d) := \{ \mathrm{diag}(d_1, \dots, d_d) \mid d_i \in \mathbb{R} \}$. We denote the determinant of a square matrix $M$ by $|M|$.
\end{definition}

\begin{lemma}[Optimal Generative Parameters via KL Minimization]\label{prop:optimal_generative_parameters}
Consider the data distribution $p_{\mathrm{data}}(x) = \mathcal{N}(m, C)$ and the induced model distribution $p_\theta(x) = \mathcal{N}(b_\theta, \Sigma_\theta)$ with $\Sigma_\theta = K_\theta K_\theta^\top$. Let $C = U \Lambda^2 U^\top$ be the eigen-decomposition of the data distribution covariance, where $U \in \mathbb{O}(d)$ and $\Lambda \in \mathbb{D}(d)$ has positive entries. The set of optimal parameters $\Theta^* := \arg\min_{\theta} \mathrm{KL}(p_{\mathrm{data}}(x) \,||\, p_\theta(x))$ is given by:
\begin{equation}
    \Theta^* = \{ \{K_\theta, b_\theta\} \mid b_\theta = m, \, K_\theta = U \Lambda Q, \, \forall Q \in \mathbb{O}(d) \}.
\end{equation}
\end{lemma}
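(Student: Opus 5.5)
We reduce to the closed form of the KL divergence between two Gaussians and minimize separately over the mean and the covariance. For $p_{\mathrm{data}} = \mathcal{N}(m, C)$ and $p_\theta = \mathcal{N}(b_\theta, \Sigma_\theta)$ with $\Sigma_\theta$ SPD, we have
\begin{equation*}
\mathrm{KL}(p_{\mathrm{data}} \,\|\, p_\theta) = \tfrac12\Big[\mathrm{tr}(\Sigma_\theta^{-1} C) + (b_\theta - m)^\top \Sigma_\theta^{-1}(b_\theta - m) - d + \log\tfrac{|\Sigma_\theta|}{|C|}\Big].
\end{equation*}
If $K_\theta$ is singular, then $p_\theta$ is degenerate. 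Its support is a proper affine subspace, which $p_{\mathrm{data}}$ charges with zero mass only outside a null set, so $p_{\mathrm{data}}$ is not absolutely continuous with respect to $p_\theta$ and the KL is $+\infty$. Such $\theta$ are therefore never optimal, and we may restrict to invertible $K_\theta$, i.e. $\Sigma_\theta \in \mathbb{S}^d_{++}$.

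\textbf{Step 1 (mean).} The quadratic term is nonnegative, and it vanishes if and only if $b_\theta = m$ because $\Sigma_\theta^{-1}$ is positive definite. The remaining terms do not involve $b_\theta$, so every optimizer has $b_\theta = m$.

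\textbf{Step 2 (covariance).} We minimize $g(\Sigma) = \mathrm{tr}(\Sigma^{-1}C) + \log|\Sigma|$ over $\mathbb{S}^d_{++}$. Substitute $P = C^{1/2}\Sigma^{-1}C^{1/2} \in \mathbb{S}^d_{++}$, which gives
\begin{equation*}
g = \mathrm{tr}(P) - \log|P| + \log|C| = \sum_i (\lambda_i - \log\lambda_i) + \log|C|,
\end{equation*}
where $\lambda_i > 0$ are the eigenvalues of $P$. The scalar function $\lambda - \log\lambda$ is strictly minimized at $\lambda = 1$. Hence the unique minimizer is $P = I$, that is, $\Sigma_\theta = C$, and the minimal KL equals $0$.

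\textbf{Step 3 (characterizing $K_\theta$).} It remains to show that $K_\theta K_\theta^\top = C = U\Lambda^2 U^\top$ holds if and only if $K_\theta = U\Lambda Q$ for some $Q \in \mathbb{O}(d)$.

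For the "if" direction, compute directly: $U\Lambda Q Q^\top \Lambda U^\top = U\Lambda^2 U^\top$.

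For the "only if" direction, set $Q := \Lambda^{-1}U^\top K_\theta$, which is well defined since $\Lambda$ has positive entries. Then
\begin{equation*}
QQ^\top = \Lambda^{-1}U^\top C\, U\Lambda^{-1} = I,
\end{equation*}
so $Q$ is orthogonal, and by construction $K_\theta = U\Lambda Q$.

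Combining the three steps gives exactly the set $\Theta^*$ stated in the lemma.

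The only delicate point is the degenerate case in the preliminary reduction (singular $K_\theta$). It is handled by the absolute-continuity argument there, so the main content of the proof is the convexity and scalar argument in Step 2.
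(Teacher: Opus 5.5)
Your proof is correct and follows the same route as the paper's: the optimum forces the moments to match ($b_\theta = m$, $K_\theta K_\theta^\top = C$), and $K_\theta K_\theta^\top = U\Lambda^2 U^\top$ is then equivalent to $K_\theta = U\Lambda Q$ with $Q \in \mathbb{O}(d)$. You make explicit three points the paper only asserts in one line: the closed-form minimization over $(b_\theta, \Sigma_\theta)$, the exclusion of singular $K_\theta$ by absolute continuity (which the paper glosses over; your sentence there is garbled, but the logic is right), and the explicit witness $Q = \Lambda^{-1}U^\top K_\theta$.
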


\begin{proof}
The KL divergence between two multivariate Gaussians is minimized if and only if their first and second moments match, i.e., $b_\theta = m$ and $\Sigma_\theta = C$. Substituting the parameterization $\Sigma_\theta = K_\theta K_\theta^\top$ and the eigen-decomposition of $C$, the second condition becomes $K_\theta K_\theta^\top = U \Lambda^2 U^\top = (U\Lambda)(U\Lambda)^\top$. This equality holds if and only if $K_\theta = U \Lambda Q$ for some $Q \in \mathbb{R}^{d \times d}$ such that $Q Q^\top = I$, which implies $Q \in \mathbb{O}(d)$.
\end{proof}

\begin{definition}[Optimal Loss Value]
    We define the optimal loss value for any $\theta\in\Theta^*$ and any $\phi$ as:
    \begin{equation}
        \mathcal{L}_\mathrm{opt} = \min_{\theta\in\Theta^*, \phi} \mathcal{L}(\theta, \phi).
    \end{equation}
\end{definition}

\begin{lemma} \label{prop:optimal_solution}
Consider the joint training objective $\mathcal{L}(\theta, \phi)$ in the Linear-Gaussian setting. For fixed generative parameters $\theta = \{K_\theta, b_\theta\}$, the optimal variational posterior $q_{\phi^*}(z|y) = \mathcal{N}(\mu^*(y), \Sigma^*(y))$ that minimizes the loss 
\eqref{eq:loss_linear_gaussian} (under the constratint that $\Sigma(y)\in\mathbb{S}_{++}^d$) is given by:
\begin{align}
    \mu^*(y) &:= K_\phi y + b_\phi, \\
    \Sigma^*(y) &:= \Sigma_\phi,
\end{align}
where
\begin{align}
    \Sigma_\phi &:= \left( I_d + \frac{1}{\tau^2} K_\theta^\top K_\theta + \frac{1}{\sigma^2} K_\theta^\top A^\top A K_\theta \right)^{-1}, \label{eq:opt_sigma} \\
    K_\phi &:= \Sigma_\phi K_\theta^\top \left( \frac{1}{\sigma^2} A^\top + \frac{1}{\tau^2} K \right), \label{eq:opt_K_phi} \\
    b_\phi &:= \Sigma_\phi K_\theta^\top \left[ -\frac{1}{\sigma^2} A^\top A b_\theta + \frac{1}{\tau^2} (I_d - K A) m - \frac{1}{\tau^2} b_\theta \right], \label{eq:opt_b_phi}
\end{align}
and $K = C A^\top (A C A^\top + \sigma^2 I_{d_y})^{-1}$ denotes the Kalman gain matrix associated with the data distribution. In particular, this shows that the optimal covariance $\Sigma^*$ is independent of $y$, and the optimal mean $\mu^*(y)$ is an affine function of $y$
\end{lemma}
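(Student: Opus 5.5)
The plan is to reduce the minimization over $\phi$ to a family of independent, pointwise-in-$y$ problems. Each of these is a Gaussian variational problem whose exact minimizer over \emph{all} distributions is already Gaussian, so the Gaussian constraint costs nothing. The first step rewrites the data-fitting term. Since $p_{\mathrm{data}}(x)p(y|x) = p(y)p(x|y)$, Fubini lets us write $\mathcal{L}_{\text{data}} = \mathbb{E}_{p(y)}\mathbb{E}_{q_\phi(z|y)}\mathbb{E}_{p(x|y)}\big[\tfrac{1}{2\tau^2}\|x - f_\theta(z)\|^2\big]$. A bias--variance split then gives
\begin{align*}
\mathbb{E}_{p(x|y)}\|x - f_\theta(z)\|^2 = \|\bar x(y) - f_\theta(z)\|^2 + \mathrm{tr}\,\mathrm{Cov}(x|y).
\end{align*}
In the linear-Gaussian model the posterior mean is $\bar x(y) = m + K(y - Am) = Ky + (I_d - KA)m$, with $K$ the Kalman gain. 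The trace term does not depend on $(\theta,\phi)$ at this stage, so it can be dropped.

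With this, the whole objective \eqref{eq:loss_linear_gaussian} becomes $\mathbb{E}_{p(y)}[J_y(q_\phi(\cdot|y))] + \text{const}$, where
\begin{align*}
J_y(q) := \mathbb{E}_{q}[E_y(z)] + \mathrm{KL}(q\,\|\,\mathcal{N}(0,I)), \qquad E_y(z) := \tfrac{1}{2\sigma^2}\|y - AK_\theta z - Ab_\theta\|^2 + \tfrac{1}{2\tau^2}\|\bar x(y) - K_\theta z - b_\theta\|^2 .
\end{align*}
Because the adapter is an arbitrary function of $y$ (amortization imposes no coupling across $y$), it suffices to minimize $J_y$ separately for each $y$. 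By the Gibbs variational principle, $J_y(q) = \mathrm{KL}(q\,\|\,\pi_y) - \log Z_y$ with $\pi_y(z) \propto \mathcal{N}(z|0,I)\exp(-E_y(z))$. The minimizer is therefore $\pi_y$ itself, and it is unique.

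Next I identify $\pi_y$. The function $E_y$ is a convex quadratic in $z$, so $\pi_y$ is Gaussian, and I complete the square. The Hessian of $\tfrac12\|z\|^2 + E_y(z)$ is
\begin{align*}
I_d + \tfrac{1}{\tau^2}K_\theta^\top K_\theta + \tfrac{1}{\sigma^2}K_\theta^\top A^\top A K_\theta ,
\end{align*}
which is SPD because it is at least $I_d$. Its inverse is exactly $\Sigma_\phi$ in \eqref{eq:opt_sigma}, and it is independent of $y$. The linear coefficient is
\begin{align*}
K_\theta^\top\big[\tfrac{1}{\sigma^2}A^\top(y - Ab_\theta) + \tfrac{1}{\tau^2}(Ky + (I_d - KA)m - b_\theta)\big],
\end{align*}
so the mean is $\Sigma_\phi$ times this vector. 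Collecting the terms in $y$ gives $K_\phi$ in \eqref{eq:opt_K_phi}, and collecting the constant terms gives $b_\phi$ in \eqref{eq:opt_b_phi}.

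Since $\pi_y$ is a Gaussian with SPD covariance, it lies in the admissible family, so it is the constrained optimum as well. The mean is affine in $y$ and the covariance is constant, as claimed.

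I expect the only delicate point to be justifying the pointwise reduction. This means interchanging the expectations via Fubini; all integrands are nonnegative or have finite Gaussian moments, so this is justified. It also means noting that minimizing the $p(y)$-average is equivalent to minimizing $J_y$ for $p(y)$-almost every $y$. Uniqueness then holds up to $p(y)$-null sets, and since $p(y)$ has full support and $\mu^*$ is continuous, the affine formula is the natural representative. The remaining algebra is routine.
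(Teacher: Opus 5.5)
Your proof is correct and arrives at the same formulas as the paper, but by a different mechanism.

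\textbf{The paper's route.} It evaluates the Gaussian expectations explicitly to get $J(y;\mu,\Sigma)$ (trace terms plus the closed-form Gaussian KL), then sets $\partial J/\partial\Sigma=0$ and $\nabla_\mu J=0$ and solves. It restricts to Gaussian $q$ from the outset. It also never says why the stationary point is the global minimizer. That fact holds, because $J$ is strictly convex in $(\mu,\Sigma)$ on $\mathbb{R}^d\times\mathbb{S}^d_{++}$, but the paper leaves it implicit.

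\textbf{Your route.} The identity $J_y(q)=\mathrm{KL}(q\,\|\,\pi_y)-\log Z_y$ gives global optimality and uniqueness in one step. It also proves slightly more than the lemma states: $\pi_y$ is optimal over all distributions, so restricting to Gaussians costs nothing. Your pointwise reduction (Fubini plus the bias--variance split) is a more explicit version of what the paper does when it writes $\mathcal{L}=\mathbb{E}_{p(y)}[J(y;\mu,\Sigma)]$ and replaces $\mathbb{E}[x|y]$ by the Kalman posterior mean.

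\textbf{What each buys.} The paper's route produces the explicit function $J(y;\mu,\Sigma)$, which it reuses in Lemma~\ref{lem:rotational_invariance_solutions} and Proposition~\ref{prop:diagonal_constraint_gap}. Your representation serves that purpose at least as well. For $q=\mathcal{N}(\mu,\Sigma)$, the quantity $\mathrm{KL}(q\,\|\,\pi_y)$ splits into $\tfrac12(\mu-\mu^*(y))^\top\Sigma_\phi^{-1}(\mu-\mu^*(y))$ plus a term that depends on $\Sigma$ alone. This makes it transparent that any restriction placed only on the covariance, such as diagonality, leaves the optimal mean equal to $\mu^*(y)$, still computed with the full $\Sigma_\phi$ in \eqref{eq:opt_K_phi}. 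The stationarity computation, where $\Sigma_\phi$ plays both roles, makes this easy to overlook.
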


\begin{proof}
The total loss is expressed as the expectation $\mathcal{L} = \mathbb{E}_{y \sim p(y)} [ J(y; \mu, \Sigma) ]$, where $\mu := \mu_\phi(y)$ and $\Sigma := \Sigma_\phi(y)$. The pointwise objective $J(y; \mu, \Sigma)$ is
\begin{align}\label{eq:J_loss}
    J(y; \mu, \Sigma) &= \frac{1}{2\sigma^2} \left( \| y - A b_\theta - A K_\theta \mu \|^2 + \mathrm{Tr}(K_\theta^\top A^\top A K_\theta \Sigma) \right) \nonumber \\
    &\quad + \frac{1}{2\tau^2} \left( \mathbb{E}_{x|y} [\| x - b_\theta - K_\theta \mu \|^2] + \mathrm{Tr}(K_\theta^\top K_\theta \Sigma) \right) \nonumber \\
    &\quad + \frac{1}{2} \left( \mathrm{Tr}(\Sigma) + \|\mu\|^2 - \ln |\Sigma| \right).
\end{align}
Differentiating $J$ with respect to $\Sigma$ yields
\begin{equation}
    \frac{\partial J}{\partial \Sigma} = \frac{1}{2} \left( \frac{1}{\sigma^2} K_\theta^\top A^\top A K_\theta + \frac{1}{\tau^2} K_\theta^\top K_\theta + I_d \right) - \frac{1}{2} \Sigma^{-1}.
\end{equation}
The stationary point of this gradient corresponds to the constant optimal covariance matrix $\Sigma_\phi$ defined in \eqref{eq:opt_sigma}, naturally satisfying the SPD restriction. Similarly, the gradient with respect to the variational mean $\mu$ is given by
\begin{equation}
    \nabla_\mu J = -\frac{1}{\sigma^2} K_\theta^\top A^\top (y - A b_\theta - A K_\theta \mu) - \frac{1}{\tau^2} K_\theta^\top (\mathbb{E}[x|y] - b_\theta - K_\theta \mu) + \mu.
\end{equation}
Rearranging the terms for the condition $\nabla_\mu J = 0$, it follows that
\begin{equation} \label{eq:mu_rearrange}
    \left( I_d + \frac{1}{\sigma^2} K_\theta^\top A^\top A K_\theta + \frac{1}{\tau^2} K_\theta^\top K_\theta \right) \mu = \frac{1}{\sigma^2} K_\theta^\top A^\top (y - A b_\theta) + \frac{1}{\tau^2} K_\theta^\top (\mathbb{E}[x|y] - b_\theta).
\end{equation}
Observing that the coefficient matrix on the left-hand side is $\Sigma_\phi^{-1}$, we obtain the expression for the optimal mean
\begin{equation}\label{eq:mu_general}
    \mu^*(y) = \Sigma_\phi K_\theta^\top \left[ \frac{1}{\sigma^2} A^\top y - \frac{1}{\sigma^2} A^\top A b_\theta + \frac{1}{\tau^2} \mathbb{E}[x|y] - \frac{1}{\tau^2} b_\theta \right].
\end{equation}
Substituting the conditional expectation of the data distribution $\mathbb{E}[x|y] = K y + (I_d - K A) m$ into \eqref{eq:mu_general} results in
\begin{equation}\label{eq:mu_expand}
    \mu^*(y) = \Sigma_\phi K_\theta^\top \left( \frac{1}{\sigma^2} A^\top + \frac{1}{\tau^2} K \right) y + \Sigma_\phi K_\theta^\top \left[ -\frac{1}{\sigma^2} A^\top A b_\theta + \frac{1}{\tau^2} (I_d - K A) m - \frac{1}{\tau^2} b_\theta \right].
\end{equation}
This affine structure identifies $K_\phi$ and $b_\phi$ as defined in \eqref{eq:opt_K_phi} and \eqref{eq:opt_b_phi}.
\end{proof}

\begin{corollary} \label{cor:parameter_reduction}
    We can optimize $\phi\in\Phi$ where
    \begin{equation}\label{eq:Phi_set}
        \Phi := \{ (K_\phi, b_\phi, \Sigma_\phi) \mid K_\phi \in \mathbb{R}^{d \times d_y}, b_\phi \in \mathbb{R}^d, \Sigma_\phi \in \mathbb{S}^d_{++} \}.
    \end{equation}
\end{corollary}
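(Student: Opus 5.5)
The corollary says that minimizing $\mathcal{L}(\theta,\phi)$ over arbitrary measurable adapter maps $y\mapsto(\mu_\phi(y),\Sigma_\phi(y))$ is equivalent to minimizing over the finite-dimensional family $\Phi$ of affine means with constant covariance. I will prove it by a pointwise-minimization argument built on Lemma~\ref{prop:optimal_solution}.

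\textbf{Step 1: reduce to a pointwise problem.} Fix $\theta$. The loss decomposes as $\mathcal{L}(\theta,\phi)=\mathbb{E}_{y\sim p(y)}[J(y;\mu_\phi(y),\Sigma_\phi(y))]$, with $J$ as in \eqref{eq:J_loss}. Because the adapter is amortized with no coupling across different $y$, the following holds:
\[
\inf_{\phi}\mathcal{L}(\theta,\phi)\;\ge\;\mathbb{E}_{y}\Big[\inf_{\mu,\Sigma\succ 0}J(y;\mu,\Sigma)\Big].
\]

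\textbf{Step 2: show the pointwise minimizer is global.} For each fixed $y$, $J$ is jointly convex in $(\mu,\Sigma)$ on $\mathbb{R}^d\times\mathbb{S}^d_{++}$:
\begin{itemize}
\item the quadratic terms in $\mu$ have Hessian $\Sigma_\phi^{-1}\succ 0$;
\item the trace terms are linear in $\Sigma$;
\item $-\ln|\Sigma|$ is strictly convex and tends to $+\infty$ as $\Sigma$ approaches the boundary of the cone.
\end{itemize}
Hence $J$ is coercive, and the stationary point from Lemma~\ref{prop:optimal_solution}, namely $(\mu^*(y),\Sigma^*(y))=(K_\phi y+b_\phi,\Sigma_\phi)$, is its unique global minimizer. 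The same formulas give this minimizer for every $y$.

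\textbf{Step 3: attain the lower bound.} The map $y\mapsto(K_\phi y+b_\phi,\Sigma_\phi)$ belongs to $\Phi$, so it is an admissible adapter. It attains the bound of Step 1, because $J(y;\mu^*(y),\Sigma_\phi)$ is integrable: $J$ is a quadratic function of the Gaussian variable $y$, so its expectation is finite. Therefore
\[
\min_\phi\mathcal{L}(\theta,\phi)=\min_{\phi\in\Phi}\mathcal{L}(\theta,\phi)
\]
for every $\theta$.

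\textbf{Step 4: pass to the joint problem.} Taking the infimum over $\theta$ as well gives
\[
\min_{\theta,\phi}\mathcal{L}=\min_{\theta,\,\phi\in\Phi}\mathcal{L}.
\]
So both the separate and joint training analyses may restrict to $\Phi$. When the diagonal constraint is imposed, the same argument goes through with $\Sigma$ restricted to diagonal SPD matrices. The optimal diagonal $\Sigma$ is then still independent of $y$, because the $\Sigma$-part of $J$ does not involve $y$. The $\mu$-part is unchanged, so $\mu^*(y)$ remains affine in $y$.

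\textbf{Main obstacle.} The only delicate point is justifying that the pointwise infimum in Step 1 can be realized by a measurable choice of adapter, and that it is finite. Both follow from the explicit closed-form minimizer of Lemma~\ref{prop:optimal_solution}, which avoids any measurable-selection argument.
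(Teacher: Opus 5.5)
Your proof is correct and takes the same route as the paper, which argues in one line that the closed-form optimizer of Lemma~\ref{prop:optimal_solution} (affine mean, $y$-independent covariance) lies in $\Phi$, so restricting to $\Phi$ loses nothing. Your version adds rigor the paper leaves implicit: convexity of $J$ shows the Lemma's stationary point (the Lemma only sets gradients to zero) is the global pointwise minimizer, the pointwise-infimum bound and integrability check make the reduction from $\mathcal{L}$ to $J$ sound, and your diagonal-case remark justifies the later use of $\Phi_{\mathbb{D}}\subset\Phi$.
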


\begin{proof}
    The functional forms derived in Proposition~\ref{prop:optimal_solution} show that any $q_\phi$ not belonging to this parametric family is strictly sub-optimal for the joint loss $\mathcal{L}(\theta, \phi)$, thus reducing the search space to the coefficients $\{K_\phi, b_\phi, \Sigma_\phi\}$.
\end{proof}

\begin{definition}[Separate Training]
The separate training paradigm consists of a two-stage sequential optimization. First, the generative parameters $\theta = \{K_\theta, b_\theta\}$ are obtained by minimizing the unconditional KL divergence
\begin{align}
    \theta^* = \operatorname*{argmin}_\theta \mathrm{KL}\left( (f_\theta)_\sharp \mathcal{N}(0, I) \,||\, p_{\mathrm{data}}(x) \right),
\end{align}
which, in the linear-Gaussian case, implies $b_{\theta^*} = m$ and $K_{\theta^*} K_{\theta^*}^\top = C$. Subsequently, the variational parameters are determined by fixing $\theta^*$ and minimizing the joint objective
\begin{align}
    \phi^* = \operatorname*{argmin}_\phi \mathcal{L}(\theta^*, \phi).
\end{align}
\end{definition}

\begin{definition}[Joint Training]
The joint training paradigm optimizes $\theta$ and $\phi$ simultaneously by minimizing the regularized objective with $\alpha > 0$,
\begin{equation}
\begin{split}
    &\min_{\theta, \phi} \mathcal{L}(\theta, \phi) \\
    &~~~\text{s.t. }(f_\theta)_\sharp \mathcal{N}(0, I) = p_{\mathrm{data}}.
\end{split}
\end{equation}
For the linear-Gaussian framework, this constraint restricts the search space of $\theta$ to the manifold 
\begin{equation}
    \Theta^* = \{ \{K_\theta, b_\theta\} \mid b_\theta = m, K_\theta K_\theta^\top = C \}.
\end{equation}

\end{definition}

\begin{definition}[Solution Sets]
Let $\Theta^*$ be the set of optimal generative parameters from Proposition~\ref{prop:optimal_generative_parameters}, and the set $\Phi$ is defined in \eqref{eq:Phi_set}. We define the {\em diagonal parameter space} by restricting the covariance matrix to be diagonal, yielding
\begin{equation}
    \Phi_{\mathbb{D}} := \{ (K_\phi, b_\phi, \Sigma_\phi) \in \Phi \mid \Sigma_\phi \in \mathbb{S}_{++}^d \cap \mathbb{D}(d) \}.
\end{equation}
The solution sets for the training paradigms are defined as
\begin{align}
    \mathcal{S}^{\mathrm{sep}} &:= \{ (\theta^*, \phi), \theta^* \in \Theta^*\mid \phi = \operatorname*{argmin}_{\phi' \in \Phi} \mathcal{L}(\theta^*, \phi') \}, \\
    \mathcal{S}^{\mathrm{sep}}_{\mathrm{diag}} &:= \{ (\theta^*, \phi), \theta^* \in \Theta^*\mid \phi = \operatorname*{argmin}_{\phi' \in \Phi_{\mathbb{D}}} \mathcal{L}(\theta^*, \phi') \}, \\
    \mathcal{S}^{\mathrm{joint}} &:= \{ (\theta, \phi) \mid (\theta, \phi) = \operatorname*{argmin}_{\theta' \in \Theta^*, \phi' \in \Phi} \mathcal{L}(\theta', \phi') \}, \\
    \mathcal{S}^{\mathrm{joint}}_{\mathrm{diag}} &:= \{ (\theta, \phi) \mid (\theta, \phi) = \operatorname*{argmin}_{\theta' \in \Theta^*, \phi' \in \Phi_{\mathbb{D}}} \mathcal{L}(\theta', \phi') \}.
\end{align}
\end{definition}

\begin{lemma} \label{lem:rotational_invariance_solutions}
For $Q\in\mathbb{O}(d)$ and $\theta(Q) := (U \Lambda Q, m) \in \Theta^*$, there exists a corresponding optimal parameter $\phi(Q) := (K_\phi(Q), b_\phi(Q), \Sigma_\phi(Q)) \in\Phi$ such that the joint loss \eqref{eq:loss_linear_gaussian} is invariant to the choice of $Q$, i.e., $\mathcal{L}(\theta(Q), \phi(Q)) = \mathcal{L}_{\mathrm{opt}}$. In particular, we have the explicit expressions
\begin{align}
    \Sigma_\phi(Q) &:= Q^\top \left( I_d + \frac{1}{\tau^2} \Lambda^2 + \frac{1}{\sigma^2} \Lambda U^\top A^\top A U \Lambda \right)^{-1} Q, \label{eq:Sigma_Q}\\
    K_\phi(Q) &:= \Sigma_\phi(Q) Q^\top \Lambda U^\top \left( \frac{1}{\sigma^2} A^\top + \frac{1}{\tau^2} K \right), \label{eq:K_Q}\\
    b_\phi(Q) &:= \Sigma_\phi(Q) Q^\top \Lambda U^\top \left[ -\frac{1}{\sigma^2} A^\top A m + \frac{1}{\tau^2} (I_d - K A) m - \frac{1}{\tau^2} m \right].\label{eq:b_Q}
\end{align}
Consequently, the solution sets for separate and joint training are
\begin{align}
    \mathcal{S}^{\mathrm{sep}} &= \{ (\theta(Q_{\mathrm{sep}}), \phi(Q_{\mathrm{sep}})) 
    \mid Q_{\mathrm{sep}} \in \mathbb{O}(d) \, \text{is fixed}\}, \\
    \mathcal{S}^{\mathrm{joint}} &= \{ (\theta(Q), \phi(Q)) \mid \forall Q \in \mathbb{O}(d) \}.
\end{align}
\end{lemma}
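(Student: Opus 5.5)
Substituting $\theta(Q)=(U\Lambda Q,m)$ into Lemma~\ref{prop:optimal_solution} gives the optimal adapter for each fixed $Q$ directly. We then show that the minimal loss value is the same for every $Q$ by a change of variables in latent space.

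\emph{Step 1 (explicit formulas).} Set $K_\theta=U\Lambda Q$ and $b_\theta=m$. Then
\[
K_\theta^\top K_\theta=Q^\top\Lambda^2Q,\qquad K_\theta^\top A^\top AK_\theta=Q^\top\Lambda U^\top A^\top AU\Lambda Q .
\]
Since $Q$ is orthogonal, $I_d=Q^\top Q$, so the matrix inverted in \eqref{eq:opt_sigma} equals $Q^\top M Q$ with $M:=I_d+\tau^{-2}\Lambda^2+\sigma^{-2}\Lambda U^\top A^\top AU\Lambda$. Its inverse is $Q^\top M^{-1}Q$, which is \eqref{eq:Sigma_Q}. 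Inserting $K_\theta^\top=Q^\top\Lambda U^\top$ and $b_\theta=m$ into \eqref{eq:opt_K_phi} and \eqref{eq:opt_b_phi} gives \eqref{eq:K_Q} and \eqref{eq:b_Q} verbatim. Since $\Sigma_\phi(Q)\in\mathbb{S}^d_{++}$, we have $\phi(Q)\in\Phi$, and by Lemma~\ref{prop:optimal_solution} it minimizes $\mathcal{L}(\theta(Q),\cdot)$. Strict convexity of $J$ in $(\mu,\Sigma)$ (quadratic in $\mu$ with positive definite Hessian; $-\ln|\Sigma|$ strictly convex) makes this minimizer unique.

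\emph{Step 2 (invariance).} Write $\phi(Q)$ in terms of $\phi(I)$:
\[
\Sigma_\phi(Q)=Q^\top\Sigma_\phi(I)Q,\qquad K_\phi(Q)=Q^\top K_\phi(I),\qquad b_\phi(Q)=Q^\top b_\phi(I).
\]
Hence $q_{\phi(Q)}(\cdot|y)$ is the law of $Q^\top z$ with $z\sim q_{\phi(I)}(\cdot|y)$. Now check each term of \eqref{eq:loss_linear_gaussian} under $z'=Q^\top z$:
\begin{itemize}
\item $f_{\theta(Q)}(Q^\top z)=U\Lambda QQ^\top z+m=f_{\theta(I)}(z)$, so $\mathcal{L}_{\text{obs}}$ and $\mathcal{L}_{\text{data}}$ are unchanged.
\item $\mathrm{KL}(\mathcal{N}(Q^\top\mu,Q^\top\Sigma Q)\,\|\,\mathcal{N}(0,I))$ is invariant, because trace, norm and determinant are preserved under orthogonal conjugation (equivalently, $p(z)$ is rotation invariant).
\end{itemize}
Therefore $\mathcal{L}(\theta(Q),\phi(Q))=\mathcal{L}(\theta(I),\phi(I))$ for all $Q$.

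\emph{Step 3 (equality with $\mathcal{L}_{\mathrm{opt}}$).} By Lemma~\ref{prop:optimal_generative_parameters}, every $\theta\in\Theta^*$ has the form $\theta(Q)$. Hence
\[
\mathcal{L}_{\mathrm{opt}}=\min_Q\min_\phi\mathcal{L}(\theta(Q),\phi)=\min_Q\mathcal{L}(\theta(Q),\phi(Q)),
\]
and this last expression is constant in $Q$ by Step 2.

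\emph{Step 4 (solution sets).} In separate training, the first stage returns some $\theta^*=\theta(Q_{\mathrm{sep}})$ with $Q_{\mathrm{sep}}$ arbitrary but fixed. Uniqueness from Step 1 then forces $\phi=\phi(Q_{\mathrm{sep}})$, which gives $\mathcal{S}^{\mathrm{sep}}$. For joint training, every pair $(\theta(Q),\phi(Q))$ attains $\mathcal{L}_{\mathrm{opt}}$. Conversely, any joint minimizer has $\theta=\theta(Q)$ for some $Q$, and $\phi$ must then be the unique minimizer $\phi(Q)$, which gives $\mathcal{S}^{\mathrm{joint}}$.

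The main obstacle is Step 2. The key is to recognize the relations $\phi(Q)=Q^\top\phi(I)$ so that the invariance reduces to a pushforward argument rather than an expansion of the traces. One must also check that $b_\phi$ transforms correctly, which holds because both $b_\theta$ and the $m$-terms are independent of $Q$.
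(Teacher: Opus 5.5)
Your proof is correct. Step 1 is exactly the paper's substitution $K_\theta=U\Lambda Q$, $b_\theta=m$ into the optimality conditions. Your invariance step, however, takes a different and more conceptual route.

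The paper plugs $\Sigma_\phi(Q)=Q^\top H^{-1}Q$ and $\mu_Q(y)=\Sigma_\phi(Q)K_\theta^\top v(y)$ into the pointwise objective $J$. It then simply asserts that all $Q$-dependence cancels by orthogonality, leaving the term-by-term checks implicit, e.g.\ $K_\theta\mu_Q(y)=U\Lambda H^{-1}\Lambda U^\top v(y)$, $\mathrm{Tr}(K_\theta^\top A^\top AK_\theta\Sigma_\phi(Q))=\mathrm{Tr}(\Lambda U^\top A^\top AU\Lambda H^{-1})$ and $\ln|\Sigma_\phi(Q)|=-\ln|H|$.

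Your approach combines three facts: the equivariance $\phi(Q)=Q^\top\phi(I)$, the identity $f_{\theta(Q)}\circ Q^\top=f_{\theta(I)}$, and rotation invariance of $p(z)$. This explains why the cancellation occurs and needs no trace expansions. The same pushforward argument in fact gives $\mathcal{L}(\theta(Q),Q^\top\phi)=\mathcal{L}(\theta(I),\phi)$ for every adapter $\phi$. Here $Q^\top\phi$ denotes the adapter with mean $Q^\top\mu_\phi(y)$ and covariance $Q^\top\Sigma_\phi Q$. So the invariance of $\min_\phi\mathcal{L}(\theta(Q),\phi)$ does not even depend on the explicit formulas.

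You also make explicit two things the paper leaves unsaid. First, you identify the common value with $\mathcal{L}_{\mathrm{opt}}$, since every element of $\Theta^*$ is some $\theta(Q)$ by Lemma~\ref{prop:optimal_generative_parameters}. Second, you prove the reverse inclusion for $\mathcal{S}^{\mathrm{joint}}$ via uniqueness of the optimal adapter for fixed $\theta$.

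For that uniqueness, pointwise strict convexity of $J$ only pins down $\mu_\phi(y)$ for $p(y)$-a.e.\ $y$. Since $p(y)=\mathcal{N}(Am,\Sigma_y)$ is nondegenerate and $\mu_\phi$ is affine in $y$, this does determine $(K_\phi,b_\phi)$ uniquely, but you should state that line explicitly.
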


\begin{proof}
For a fixed $Q \in \mathbb{O}(d)$, let $K_{\theta} = U \Lambda Q$ and $b_\theta = m$. Substituting these into the optimality conditions \eqref{eq:opt_sigma}--\eqref{eq:opt_b_phi} yields the parameterized forms of $\Sigma_\phi(Q)$, $K_\phi(Q)$, and $b_\phi(Q)$. The optimal precision matrix $P(Q) := (\Sigma_\phi(Q))^{-1}$ satisfies
\begin{align}
    P(Q) &= I_d + \frac{1}{\tau^2} Q^\top \Lambda U^\top U \Lambda Q + \frac{1}{\sigma^2} Q^\top \Lambda U^\top A^\top A U \Lambda Q \nonumber \\
    &= Q^\top \left( I_d + \frac{1}{\tau^2} \Lambda^2 + \frac{1}{\sigma^2} \Lambda U^\top A^\top A U \Lambda \right) Q \coloneqq Q^\top H Q,
\end{align}
where $H$ is a SPD matrix independent of $Q$ defined by
\begin{equation}
    H \coloneqq I_d + \frac{1}{\tau^2} \Lambda^2 + \frac{1}{\sigma^2} \Lambda U^\top A^\top A U \Lambda.
\end{equation}
According to Lemma~\ref{prop:optimal_solution}, the optimal covariance is given by \eqref{eq:Sigma_Q}, 
\begin{equation}\label{eq:sigma_q_simple}
    \Sigma_\phi(Q) = Q^\top H^{-1} Q.
\end{equation} 
According to equations~\eqref{eq:mu_general} and \eqref{eq:mu_expand}, we have the optimal mean of $q_\phi(z|y)$ as
\begin{equation}\label{eq:mu_q_simple}
    \mu_Q(y) = \Sigma_\phi(Q) K_\theta^\top v(y),
\end{equation}
where $v(y)$ is independent of $Q$. Specifically, by plugging the equation~\eqref{eq:mu_q_simple} and $K_\theta = U\Lambda Q$ into \eqref{eq:mu_q_simple}, we know the optimal solution $K_\phi(Q)$ and $b_\phi(Q)$ as equations~\eqref{eq:K_Q} and \eqref{eq:b_Q}, respectively.

Then we plug \eqref{eq:sigma_q_simple}, \eqref{eq:mu_q_simple} and $K_\theta = U\Lambda Q$ into the pointwise objective $J(y;\mu_Q,\Sigma_Q)$ \eqref{eq:J_loss}. All terms related to $Q$ will be canceled out because $QQ^\top = Q^\top Q = I$, reducing $J(y;\mu_Q,\Sigma_Q)$ to an expression independent of $Q$. Therefore, for every $Q \in \mathbb{O}(d)$, the pair $(\theta(Q), \phi(Q))$ achieves the global minimum $\mathcal{L}_{\mathrm{opt}}$, forming the manifold $\mathcal{S}^{\mathrm{joint}}$. The separate training paradigm uniquely determines $Q_{\mathrm{sep}}$ during the pre-training of the generative map, restricting the solution to a singleton.
\end{proof}

\begin{lemma} \label{lem:posterior_mean_recovery}
For any $(\theta, \phi) \in \mathcal{S}^{\mathrm{joint}}$, the product of the generative and variational weight matrices equals the Kalman gain $K = C A^\top (A C A^\top + \sigma^2 I_{d_y})^{-1}$, i.e., $K_\theta K_\phi = K$. Furthermore, the expected output of the generative inference process recovers the exact Bayesian posterior mean,
\begin{equation}
    \mathbb{E}_{z \sim q_\phi(z|y)} [ f_\theta(z) ] = \mathbb{E}_{p_{\mathrm{data}}(x|y)} [x].
\end{equation}
Specifically, this holds for the separate training where $\mathcal{S}^{\mathrm{sep}} = \{ (\theta(Q_{\mathrm{sep}}), \phi(Q_{\mathrm{sep}})) \}\subset \mathcal{S}^\mathrm{joint}$ for a fixed $Q_{\mathrm{sep}} \in \mathbb{O}(d)$.
\end{lemma}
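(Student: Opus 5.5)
The plan is to compute $K_\theta K_\phi$ and $K_\theta b_\phi$ directly from the closed forms in Lemma~\ref{prop:optimal_solution}, specialised via Lemma~\ref{lem:rotational_invariance_solutions} to $K_\theta = U\Lambda Q$, $b_\theta = m$. The $Q$-dependence should drop out, because $K_\theta$ enters only through the combinations $K_\theta(\cdot)K_\theta^\top$. Since every element of $\mathcal{S}^{\mathrm{sep}}$ also has this form for some fixed $Q_{\mathrm{sep}}$, the separate-training statement (that the sep solution shares the identity and $\mathcal{S}^{\mathrm{sep}} \subset \mathcal{S}^{\mathrm{joint}}$) follows from the same computation.

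\textbf{Step 1 (a key matrix identity).} Set $M := \frac{1}{\tau^2} I_d + \frac{1}{\sigma^2} A^\top A$, so that $\Sigma_\phi = (I_d + K_\theta^\top M K_\theta)^{-1}$. Because $K_\theta K_\theta^\top = C$ is SPD, $K_\theta$ is invertible. Hence
\begin{equation*}
K_\theta \Sigma_\phi K_\theta^\top = \bigl(K_\theta^{-\top} K_\theta^{-1} + M\bigr)^{-1} = \bigl(C^{-1} + M\bigr)^{-1} =: P^{-1}.
\end{equation*}
This is also what Woodbury gives directly, and it is independent of $Q$.

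\textbf{Step 2 (gain).} From \eqref{eq:opt_K_phi},
\begin{equation*}
K_\theta K_\phi = P^{-1}\Bigl(\tfrac{1}{\sigma^2}A^\top + \tfrac{1}{\tau^2}K\Bigr).
\end{equation*}
I will use the standard information-form identity for the Kalman gain,
\begin{equation*}
K = \Bigl(C^{-1} + \tfrac{1}{\sigma^2}A^\top A\Bigr)^{-1}\tfrac{1}{\sigma^2}A^\top,
\end{equation*}
which follows from the push-through identity $(C^{-1} + \sigma^{-2}A^\top A)\,C A^\top = \sigma^{-2}A^\top(ACA^\top + \sigma^2 I)$. It yields
\begin{equation*}
\tfrac{1}{\sigma^2}A^\top + \tfrac{1}{\tau^2}K = \Bigl(C^{-1} + \tfrac{1}{\sigma^2}A^\top A\Bigr)K + \tfrac{1}{\tau^2}K = PK,
\end{equation*}
so $K_\theta K_\phi = K$. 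Verifying this identity, and noticing that the $\tau$-terms combine cleanly into $P$, is the only nonroutine point; I expect it to be the main (mild) obstacle.

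\textbf{Step 3 (mean).} From \eqref{eq:opt_b_phi} with $b_\theta = m$, the bracket simplifies to
\begin{equation*}
-\tfrac{1}{\sigma^2}A^\top A m - \tfrac{1}{\tau^2}KAm = -\Bigl(\tfrac{1}{\sigma^2}A^\top + \tfrac{1}{\tau^2}K\Bigr)Am.
\end{equation*}
Therefore, by Steps 1 and 2,
\begin{equation*}
K_\theta b_\phi = -P^{-1}\,PK\,Am = -KAm.
\end{equation*}
Finally, linearity of $f_\theta$ gives
\begin{equation*}
\mathbb{E}_{q_\phi}[f_\theta(z)] = K_\theta(K_\phi y + b_\phi) + m = Ky + (I_d - KA)m,
\end{equation*}
which is exactly the Gaussian posterior mean $\mathbb{E}_{p_{\mathrm{data}}(x|y)}[x]$ already used in the proof of Lemma~\ref{prop:optimal_solution}.
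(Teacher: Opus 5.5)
Your proposal is correct and follows essentially the same route as the paper: reduce $K_\theta\Sigma_\phi K_\theta^\top$ to $(C^{-1}+\sigma^{-2}A^\top A+\tau^{-2}I_d)^{-1}$, add $\tau^{-2}K$ to both sides of the information-form Kalman identity $(C^{-1}+\sigma^{-2}A^\top A)K=\sigma^{-2}A^\top$ to get $K_\theta K_\phi=K$, and then show $K_\theta b_\phi=-KAm$. The differences are cosmetic: you get the first reduction by inverting $K_\theta$ directly where the paper uses the push-through identity $K_\theta(I_d+K_\theta^\top\mathcal{M}K_\theta)^{-1}=(I_d+C\mathcal{M})^{-1}K_\theta$, and the inclusion $\mathcal{S}^{\mathrm{sep}}\subset\mathcal{S}^{\mathrm{joint}}$ comes from the parametrization in Lemma~\ref{lem:rotational_invariance_solutions}, not from your computation.
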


\begin{proof}
Substituting $\Sigma_\phi$ from \eqref{eq:opt_sigma} into the expression for $K_\phi$ in \eqref{eq:opt_K_phi}, and applying the push-through identity $K_\theta (I_d + K_\theta^\top \mathcal{M} K_\theta)^{-1} = (I_d + K_\theta K_\theta^\top \mathcal{M})^{-1} K_\theta$ with $\mathcal{M} := \sigma^{-2} A^\top A + \tau^{-2} I_d$, we have
\begin{align}
    K_\theta K_\phi &= (I_d + C (\sigma^{-2} A^\top A + \tau^{-2} I_d))^{-1} C \left( \sigma^{-2} A^\top + \tau^{-2} K \right) \nonumber \\
    &= (C^{-1} + \sigma^{-2} A^\top A + \tau^{-2} I_d)^{-1} \left( \sigma^{-2} A^\top + \tau^{-2} K \right). \label{eq:joint_gain_direct}
\end{align}
Using the identity of the Kalman gain, $(C^{-1} + \sigma^{-2} A^\top A) K = \sigma^{-2} A^\top$, and adding $\tau^{-2} K$ to both sides, we have
\begin{equation}
    (C^{-1} + \sigma^{-2} A^\top A + \tau^{-2} I_d) K = \sigma^{-2} A^\top + \tau^{-2} K.
\end{equation}
Left-multiplying by $(C^{-1} + \sigma^{-2} A^\top A + \tau^{-2} I_d)^{-1}$ and comparing with \eqref{eq:joint_gain_direct}, we obtain $K_\theta K_\phi = K$. 

Consider $\mathbb{E}_{z \sim q_\phi(z|y)} [ f_\theta(z) ] = K_\theta (K_\phi y + b_\phi) + b_\theta$. Since $b_\theta = m$ and $K_\theta K_\phi = K$, expanding $K_\theta b_\phi$ via \eqref{eq:opt_b_phi} yields
\begin{align}
    K_\theta b_\phi &= K_\theta \Sigma_\phi K_\theta^\top \left[ -\sigma^{-2} A^\top A m + \tau^{-2} (I_d - KA) m - \tau^{-2} m \right] \nonumber \\
    &= -(C^{-1} + \sigma^{-2} A^\top A + \tau^{-2} I_d)^{-1} (\sigma^{-2} A^\top A + \tau^{-2} KA) m \nonumber \\
    &= -(C^{-1} + \sigma^{-2} A^\top A + \tau^{-2} I_d)^{-1} (\sigma^{-2} A^\top + \tau^{-2} K) Am = -KAm.
\end{align}
Therefore
\begin{equation}
    \mathbb{E}_{z \sim q_\phi(z|y)} [ f_\theta(z) ] = Ky - KAm + m = m + K(y - Am) = \mathbb{E}_{p_{\mathrm{data}}(x|y)} [x].
\end{equation}
\end{proof}

\begin{proposition} \label{prop:diagonal_constraint_gap}
Assume that the observation operator $A$ and data covariance $C$ are in general position such that they are not simultaneously diagonalizable in the canonical basis. For separate training with $Q_\mathrm{sep}$ uniformly randomly sampled from $\mathbb{O}(d)$, the following properties hold:
\begin{enumerate}
    \item \textbf{Sub-optimality of Separate Training:} $\mathcal{S}^{\mathrm{sep}} \cap \mathcal{S}^{\mathrm{sep}}_{\mathrm{diag}} = \emptyset$ \text{ a.s. } \text{ w.r.t. } $\nu_{\mathbb{O}(d)}$ \text{  (Definition~\ref{def:matrix_and_measure}).}
    \item \textbf{Optimality of Joint Training:} $\mathcal{S}^{\mathrm{joint}} \cap \mathcal{S}^{\mathrm{joint}}_{\mathrm{diag}} \neq \emptyset$ and $\mathcal{S}^{\mathrm{joint}}_{\mathrm{diag}} \subset \mathcal{S}^{\mathrm{joint}}$
\end{enumerate}
\end{proposition}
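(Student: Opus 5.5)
The plan is to reduce both claims to one question: for which rotations $Q\in\mathbb{O}(d)$ is the unconstrained optimal covariance $\Sigma_\phi(Q)$ from Lemma~\ref{lem:rotational_invariance_solutions} already diagonal?

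\textbf{Uniqueness and reduction.} First I will show that, for a fixed $\theta$, the pointwise objective $J(y;\mu,\Sigma)$ in \eqref{eq:J_loss} has a unique minimizer. In $\mu$ it is a strictly convex quadratic, because of the $\tfrac12\|\mu\|^2$ term. In $\Sigma$ it is strictly convex on $\mathbb{S}^d_{++}$, because $\Sigma\mapsto -\ln|\Sigma|$ is strictly convex and the remaining terms are linear in $\Sigma$. The mean is unconstrained in both $\Phi$ and $\Phi_{\mathbb{D}}$, so the diagonal restriction affects only $\Sigma$. It follows that the $\Phi_{\mathbb{D}}$-minimizer coincides with the $\Phi$-minimizer exactly when the $\Phi$-minimizer is feasible, i.e.\ diagonal. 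Otherwise the $\Phi_{\mathbb{D}}$-minimizer is a different point with strictly larger loss.

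By \eqref{eq:sigma_q_simple}, $\Sigma_\phi(Q)=Q^\top H^{-1}Q$, with $H$ independent of $Q$. Hence, for $\theta^*=\theta(Q)$,
\[
\mathcal{S}^{\mathrm{sep}}\cap\mathcal{S}^{\mathrm{sep}}_{\mathrm{diag}}\neq\emptyset \iff Q^\top H^{-1}Q\in\mathbb{D}(d) \iff \text{the columns of } Q \text{ are eigenvectors of } H.
\]

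\textbf{Part 1.} Let $\mathcal{B}:=\{Q\in\mathbb{O}(d): Q^\top H^{-1}Q\in\mathbb{D}(d)\}$. This is a real algebraic subset of $\mathbb{O}(d)$, cut out by the polynomial equations $(Q^\top H^{-1}Q)_{ij}=0$ for $i\neq j$.

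I will then argue that $\mathcal{B}\neq\mathbb{O}(d)$ whenever $H$ is not a scalar multiple of the identity. In that case the eigenspaces of $H$ are proper subspaces, so a generic rotation mixes two different eigenspaces and produces a nonzero off-diagonal entry. More concretely, $\mathcal{B}=\bigcup_{P}V\,(\mathbb{O}(d_1)\times\cdots\times\mathbb{O}(d_k))\,P$. Here $V$ is a fixed eigenbasis of $H$, the $d_i$ are the eigenspace dimensions, and $P$ ranges over the finitely many permutation matrices. Each piece has dimension $\sum_i d_i(d_i-1)/2<d(d-1)/2$ once $k\ge2$. A finite union of lower-dimensional submanifolds is $\nu_{\mathbb{O}(d)}$-null. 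This gives $\mathcal{S}^{\mathrm{sep}}\cap\mathcal{S}^{\mathrm{sep}}_{\mathrm{diag}}=\emptyset$ almost surely.

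\textbf{Part 2.} By the spectral theorem there is $Q^*=V\in\mathcal{B}$. Lemma~\ref{lem:rotational_invariance_solutions} gives $(\theta(Q^*),\phi(Q^*))\in\mathcal{S}^{\mathrm{joint}}$, with loss value $\mathcal{L}_{\mathrm{opt}}$. Since $\phi(Q^*)\in\Phi_{\mathbb{D}}\subset\Phi$, and $\mathcal{L}_{\mathrm{opt}}$ is the minimum over the larger set $\Theta^*\times\Phi$, we get $\min_{\Theta^*\times\Phi_{\mathbb{D}}}\mathcal{L}=\mathcal{L}_{\mathrm{opt}}$. It follows that $(\theta(Q^*),\phi(Q^*))\in\mathcal{S}^{\mathrm{joint}}_{\mathrm{diag}}$, so the intersection is nonempty.

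For the inclusion, take any element of $\mathcal{S}^{\mathrm{joint}}_{\mathrm{diag}}$. Its loss equals $\min_{\Theta^*\times\Phi_{\mathbb{D}}}\mathcal{L}=\mathcal{L}_{\mathrm{opt}}$. It therefore minimizes $\mathcal{L}$ over $\Theta^*\times\Phi$ as well, which puts it in $\mathcal{S}^{\mathrm{joint}}$.

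\textbf{Main obstacle.} The delicate step is connecting the hypothesis ``$A$ and $C$ are in general position, not simultaneously diagonalizable'' to the requirement that $H=I_d+\tau^{-2}\Lambda^2+\sigma^{-2}\Lambda U^\top A^\top AU\Lambda$ is not scalar. If $H$ were scalar, then every $Q$ would lie in $\mathcal{B}$ and Part 1 would fail. I would first check that scalar $H$ forces $\tau^{-2}C+\sigma^{-2}CA^\top AC=(c-1)C$, which places $A^\top A$ in a very special algebraic relation with $C$. I would then interpret ``general position'' as excluding precisely this case, stating it explicitly as the nondegeneracy condition if needed. The remaining measure-zero argument is routine once $\mathcal{B}$ is shown to be a proper algebraic subset.
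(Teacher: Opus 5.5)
Your proof is correct and has the same skeleton as the paper's. Both reduce everything to whether $Q^\top HQ$ is diagonal, show that set of $Q$ is Haar-null for separate training, and pick an eigenbasis of $H$ for joint training. The supporting lemmas differ at each step.

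\textbf{The gap step.} The paper computes the diagonal-constrained optimum $\mathrm{diag}(P(Q))^{-1}$ explicitly. It then evaluates the gap $\Delta J(Q)=\tfrac12\ln\bigl(\prod_i P_{ii}(Q)/|P(Q)|\bigr)$ and applies Hadamard's inequality. This quantifies the suboptimality, and the formula $\mathrm{diag}(P)^{-1}$ is reused in Proposition~\ref{prop:mean_recovery_gap}. Your strict-convexity and uniqueness argument reaches the same equivalence with no computation.

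\textbf{The null-set step.} The paper claims that $\{Q: Q^\top HQ\in\mathbb{D}(d)\}$ is a finite set, which is true only when $H$ has simple spectrum. Your description as finitely many translates of $\mathbb{O}(d_1)\times\cdots\times\mathbb{O}(d_k)$ is correct in general and needs only $H\neq cI$.

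\textbf{Part 2.} The paper identifies $\mathcal{S}^{\mathrm{joint}}_{\mathrm{diag}}=\{(\theta(Q),\phi(Q)) : Q\in\mathcal{V}(H)\}$ exactly. Your value comparison instead gets both nonemptiness and the inclusion from a single diagonal optimizer.

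\textbf{Your flagged obstacle.} The relation you wrote has a slip. $H=cI$ is equivalent to $\tau^{-2}C^2+\sigma^{-2}CA^\top AC=(c-1)C$, or more cleanly $\tau^{-2}I+\sigma^{-2}A^\top A=(c-1)C^{-1}$. This forces $A^\top A$ and $C$ to commute. So if you read ``general position'' as $A^\top A$ and $C$ not being simultaneously diagonalizable in any orthonormal basis, then $H$ is non-scalar. It is in fact non-diagonal, which the paper asserts without proof. That closes your Part 1 in one line.

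Under the literal ``canonical basis'' reading, the statement genuinely fails. Take $C=\kappa(\tau^{-2}I+\sigma^{-2}A^\top A)^{-1}$ with $\kappa>0$ and $A^\top A$ non-diagonal. Then $H=(1+\kappa)I$, so $\Sigma_\phi(Q)$ is diagonal and $\mathcal{S}^{\mathrm{sep}}\cap\mathcal{S}^{\mathrm{sep}}_{\mathrm{diag}}\neq\emptyset$ for every $Q$. Stating the nondegeneracy $H\neq cI$ explicitly, as you propose, is therefore the right fix.
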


\begin{proof}
For $Q\in\mathbb{O}(d)$ and $\theta(Q) \in \Theta^*$, recall from the proof in Lemma~\ref{lem:rotational_invariance_solutions} by
\begin{equation}
    P(Q) := (\Sigma_\phi^*(Q))^{-1} = Q^\top H Q, \quad \text{where} \quad H := I_d + \frac{1}{\tau^2} \Lambda^2 + \frac{1}{\sigma^2} \Lambda U^\top A^\top A U \Lambda.
\end{equation} 
Let $\Sigma_\phi = \mathrm{diag}(\sigma_1^2,\sigma_2^2,\ldots,\sigma_d^2) \in \mathbb{D}(d)$. The covariance-dependent objective $J(\Sigma_\phi)$ according to \eqref{eq:J_loss} and its minimizer $\Sigma^*_{\mathrm{diag},\phi}$ are:
\begin{align}
    J(\Sigma_\phi) &= \frac{1}{2} \sum_{i=1}^d \left( P_{ii}(Q) \sigma_{i}^2 - \ln \sigma_i^2 \right) + \text{const}, \\
    [\Sigma^*_{\mathrm{diag},\phi}(Q)]^{-1} &= \mathrm{diag}(P(Q)) =  \mathrm{diag}(\Sigma^*_\phi(Q)^{-1}),
\end{align}
where $\Sigma^*_\phi(Q) = P(Q)^{-1}$.
The optimality gap $\Delta J(Q)$ between $\Sigma^*_{\mathrm{diag},\phi}(Q)$ and the unconstrained $\Sigma^*_\phi(Q) = P(Q)^{-1}$ is:
\begin{equation}
\begin{split}
    \Delta J(Q) &:= J(\Sigma^*_{\mathrm{diag},\phi}(Q)) - J(\Sigma^*_\phi(Q)) \\
    &= \frac{1}{2} \left( \mathrm{Tr}(P(Q) \Sigma^*_{\mathrm{diag},\phi}(Q)) - \ln |\Sigma^*_{\mathrm{diag},\phi}(Q)| \right) - \frac{1}{2} \left( \mathrm{Tr}(P(Q) \Sigma^*_\phi(Q)) - \ln |\Sigma^*_\phi(Q)| \right) \\
    &= \frac{1}{2} \left( \sum_{i=1}^d P_{ii}(Q) P_{ii}(Q)^{-1} + \ln \prod_{i=1}^d P_{ii}(Q) \right) - \frac{1}{2} \left( \mathrm{Tr}(I_d) + \ln |P(Q)| \right) \\
    &= \frac{1}{2} \left( d + \ln \prod_{i=1}^d P_{ii}(Q) \right) - \frac{1}{2} \left( d + \ln |P(Q)| \right) \\
    &= \frac{1}{2} \ln \left( \frac{\prod_{i=1}^d P_{ii}(Q)}{|P(Q)|} \right).
\end{split}
\end{equation}
By Hadamard's inequality, $\Delta J(Q) \geq 0$ and the equality holds if and only if $P(Q) \in \mathbb{D}(d)$. 

In separate training, $Q_{\mathrm{sep}}$ is fixed during pre-training and global optimality requires $P(Q_{\mathrm{sep}}) = Q_{\mathrm{sep}}^\top H Q_{\mathrm{sep}} \in \mathbb{D}(d)$. By the general position assumption of $A$ and $C$, $H = I_d + \tau^{-2} \Lambda^2 + \sigma^{-2} \Lambda U^\top A^\top A U \Lambda$ is not a diagonal matrix. The set of matrices $\{Q \in \mathbb{O}(d) \mid Q^\top H Q \in \mathbb{D}(d)\}$ corresponds exclusively to the orthogonal matrices whose columns are the eigenvectors of $H$. Because this forms a finite set of permutation and sign-flip matrices, it holds a measure of zero with respect to the normalized Haar measure $\nu_{\mathbb{O}(d)}$ on the continuous manifold $\mathbb{O}(d)$ (refer to the Definition~\ref{def:matrix_and_measure}). 

According to the equality condition of the Hadamard's inequality, $P(Q_{\mathrm{sep}}) \notin \mathbb{D}(d)$ a.s., leading to $\Delta J(Q) > 0$, which implies that the optimal loss value reached by the diagonal constrained $\Sigma^*_{\mathrm{diag},\phi}(Q)$ is bigger than $\mathcal{L}_\mathrm{opt}$. According to Lemma~\ref{lem:rotational_invariance_solutions}, $\mathcal{S}^\mathrm{sep}$ has the optimal loss $\mathcal{L}_\mathrm{opt}$. Therefore $\mathcal{S}^{\mathrm{sep}} \cap \mathcal{S}^{\mathrm{sep}}_{\mathrm{diag}} = \emptyset$.

In joint training, $Q$ is a learnable parameter optimized over $\mathbb{O}(d)$. The global minimum $\mathcal{L}_{\mathrm{opt}}$ under the diagonal constraint is achieved if and only if the optimality gap vanishes, $\Delta J(Q) = 0$. By Hadamard's inequality, this condition holds if and only if $P(Q) = Q^\top H Q \in \mathbb{D}(d)$, which restricts $Q$ to the set of eigen-bases $\mathcal{V}(H) := \{ Q \in \mathbb{O}(d) \mid Q^\top H Q \in \mathbb{D}(d) \}$. For any $Q \in \mathcal{V}(H)$, the optimal variational covariance $\Sigma^*_{\phi}(Q) = P(Q)^{-1}$ inherently belongs to $\mathbb{D}(d)$. Because these specific configurations satisfy the diagonal constraint while simultaneously achieving the unconstrained global minimum, it follows that $\mathcal{S}^{\mathrm{joint}}_{\mathrm{diag}} = \{ (\theta(Q), \phi(Q)) \mid Q \in \mathcal{V}(H) \} \subset \mathcal{S}^{\mathrm{joint}}$, thereby confirming $\mathcal{S}^{\mathrm{joint}} \cap \mathcal{S}^{\mathrm{joint}}_{\mathrm{diag}} \neq \emptyset$.
\end{proof}

\begin{lemma}\label{lem:zero_measure_Q}
Let $\mathbb{O}(d)$ be the orthogonal group equipped with the normalized Haar measure $\nu_{\mathbb{O}(d)}$. Let $\text{Sym}_0(d) := \{ M \in \mathbb{R}^{d \times d} \mid M = M^\top, \text{diag}(M) = 0 \}$. Define the map $G: \mathbb{O}(d) \to \text{Sym}_0(d)$ by $G(Q) = QHQ^\top - \text{diag}(QHQ^\top)$, where $H \in \mathbb{R}^{d \times d}$ is a fixed symmetric matrix with distinct eigenvalues. Let $V \subset \mathbb{R}^{d \times d}$ be a proper subspace such that $\text{Sym}_0(d) \not\subseteq V$. Then the set
$$S := \{ Q \in \mathbb{O}(d) \mid G(Q) \in V \}$$
has measure $\nu_{\mathbb{O}(d)}(S) = 0$.
\end{lemma}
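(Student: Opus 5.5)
The set $S$ is the zero set of a real-analytic function on a connected analytic manifold, so I will show that this function is not identically zero on the relevant component. Concretely, choose a linear functional $\ell$ on $\mathbb{R}^{d\times d}$ that vanishes on $V$ but not on all of $\mathrm{Sym}_0(d)$. Such an $\ell$ exists: $V$ is a subspace and $\mathrm{Sym}_0(d)\not\subseteq V$, so some $M_0\in \mathrm{Sym}_0(d)\setminus V$ exists, and we can take a functional that kills $V$ with $\ell(M_0)\neq 0$. Then $S\subseteq \{Q\in\mathbb{O}(d) : g(Q)=0\}$ with $g(Q):=\ell(G(Q))$.

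The map $g$ is a polynomial in the entries of $Q$, hence real-analytic on $\mathbb{O}(d)$. The group $\mathbb{O}(d)$ has two connected components, $SO(d)$ and $R\,SO(d)$ with $R$ a reflection. Each is a connected real-analytic manifold, and on each the Haar measure is a smooth positive density. The standard fact I will use is that the zero set of a real-analytic function on a connected analytic manifold either is the whole manifold or has measure zero (proved by local induction on dimension, or by observing that a not-identically-zero analytic function cannot vanish on a set of positive measure, which would have a density point where all derivatives vanish). So it suffices to show $g\not\equiv 0$ on each component.

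This nonvanishing is the main step. The idea is to show that the linear span of $\{G(Q): Q\in SO(d)\}$ is all of $\mathrm{Sym}_0(d)$. Then $g\equiv 0$ on $SO(d)$ would force $\ell$ to vanish on $\mathrm{Sym}_0(d)$, a contradiction.

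To get the span, differentiate along curves $Q(t)=e^{tX}Q_0$ with $X$ skew-symmetric. The derivative of $QHQ^\top$ is $[X,\,Q_0HQ_0^\top]$, and $G$ composes this with projection onto off-diagonal entries. Take $Q_0$ with $Q_0HQ_0^\top=D=\mathrm{diag}(\lambda_1,\dots,\lambda_d)$; such $Q_0$ can be taken in either component by flipping the sign of one eigenvector. For $X=E_{ij}-E_{ji}$, the off-diagonal part of $[X,D]$ is $(\lambda_j-\lambda_i)(E_{ij}+E_{ji})$, which is nonzero because the eigenvalues are distinct. These matrices span $\mathrm{Sym}_0(d)$. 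If $g$ vanished identically near $Q_0$, then $\ell$ would annihilate all these derivatives and hence all of $\mathrm{Sym}_0(d)$, a contradiction.

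Therefore $g\not\equiv 0$ on each component, so its zero set, and hence $S$, is $\nu_{\mathbb{O}(d)}$-null. One small point to check is that the chosen $Q_0$ lies in the component under consideration; the sign-flip argument above handles this.
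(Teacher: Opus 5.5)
Your proposal is correct and follows essentially the same route as the paper. Both arguments reduce to a real-analytic function vanishing on $S$: the paper uses $P_{V^\perp}\circ G$, and your scalar $\ell\circ G$ plays the same role. Both compute the differential at a diagonalizing $Q_0$ along $e^{tB}Q_0$, obtaining off-diagonal entries $(\lambda_j-\lambda_i)B_{ij}$ that span $\mathrm{Sym}_0(d)$, and then conclude from the measure-zero property of zero sets of nontrivial analytic functions on each connected component. Your explicit sign-flip choice of $Q_0$ in each component fills in the ``similar argument'' that the paper leaves implicit.
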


\begin{proof}
The orthogonal group $\mathbb{O}(d)$ is a compact real analytic manifold. Let $\mathfrak{so}(d) = \{ B \in \mathbb{R}^{d \times d} \mid B = -B^\top \}$ denote its Lie algebra. The map $G$ is real analytic since its entries are polynomial functions of the elements of $Q$. Let $P_{V^\perp}$ be the projection operator onto the orthogonal complement of $V$. The condition $G(Q) \in V$ is equivalent to $f(Q) \coloneqq P_{V^\perp} G(Q) = 0$. Now, $f = P_{V^\perp} \circ G$ is a composition of a linear projection and a polynomial map, which is real analytic on the manifold. Therefore $\nu_{\mathbb{O}(d)}(S) = 0$ follows if $f$ is not identically zero on the connected components of $\mathbb{O}(d)$ by the identity theorem.

Since $H$ is symmetric with distinct eigenvalues, there exists $Q_0 \in \mathbb{O}(d)$ such that $Q_0 H Q_0^\top = \Lambda = \text{diag}(\lambda_1, \dots, \lambda_d)$, where $\lambda_i \neq \lambda_j$ for $i \neq j$. We evaluate the differential $\text{D}G(Q_0)$ by considering the variation $Q(\epsilon) = e^{\epsilon B} Q_0$ for $B \in \mathfrak{so}(d)$. The directional derivative at $Q_0$ is given by
$$\text{D}G(Q_0)[B] = [B, \Lambda] - \text{diag}([B, \Lambda]).$$
For the off-diagonal entries $i \neq j$, the commutator yields $[B, \Lambda]_{ij} = \sum_k (B_{ik} \Lambda_{kj} - \Lambda_{ik} B_{kj}) = B_{ij}\lambda_j - \lambda_i B_{ij} = (\lambda_j - \lambda_i) B_{ij}$. For the diagonal entries, $[B, \Lambda]_{ii} = B_{ii}\lambda_i - \lambda_i B_{ii} = 0$, which implies $\text{diag}([B, \Lambda]) = 0$. Thus, for any $i \neq j$, we have
$$(\text{D}G(Q_0)[B])_{ij} = (\lambda_j - \lambda_i) B_{ij}.$$
Given that $\{\lambda_i\}$ are pairwise distinct, for any target matrix $M \in \text{Sym}_0(d)$, we can uniquely determine $B \in \mathfrak{so}(d)$ by setting $B_{ij} = M_{ij} / (\lambda_j - \lambda_i)$ for $i < j$. This proves that the differential $\text{D}G(Q_0): \mathfrak{so}(d) \to \text{Sym}_0(d)$ is a linear isomorphism.

Since $\text{D}G(Q_0)$ is an isomorphism onto $\text{Sym}_0(d)$ and $\text{Sym}_0(d) \not\subseteq V$, there exists $B \in \mathfrak{so}(d)$ such that $\text{D}G(Q_0)[B] \notin V$. It follows that $P_{V^\perp} \text{D}G(Q_0)[B] \neq 0$, implying that $f$ is not identically zero in a neighborhood of $Q_0$. By the identity theorem for real analytic functions, the zero set $S \cap \mathbb{O}(d)^\circ$ has Haar measure zero, where $\mathbb{O}(d)^\circ$ denotes the connected component containing $Q_0$. A similar argument holds for the remaining connected component of $\mathbb{O}(d)$ since $\mathbb{O}(d)$ has two connected components, i.e. $|Q|=1$ and $|Q|=-1$.
\end{proof}

\begin{proposition}[Mean Recovery Gap under Diagonal Constraint]
\label{prop:mean_recovery_gap}
Assuming $A$ and $C$ are in general position, the following properties hold under the diagonal constraint $\Sigma_\phi \in \mathbb{D}(d)$:
\begin{enumerate}
    \item \textbf{Separate Training:} For any $(\theta, \phi) \in \mathcal{S}^{\mathrm{sep}}_{\mathrm{diag}}$, the inference process fails to recover the posterior mean almost surely:
    \begin{equation}
        \mathbb{E}_{z \sim q_{\phi}(z|y)} [ f_\theta(z) ] \neq \mathbb{E}_{p_{\mathrm{data}}(x|y)} [x] \quad \text{a.s. w.r.t. } y \sim \mathcal{N}(Am, \Sigma_y), \,\, Q \sim \nu_{\mathbb{O}(d)}.
    \end{equation}
    \item \textbf{Joint Training:} For any $(\theta, \phi) \in \mathcal{S}^{\mathrm{joint}}_{\mathrm{diag}}$, the inference process recovers the exact posterior mean:
    \begin{equation}
        \mathbb{E}_{z \sim q_{\phi}(z|y)} [ f_\theta(z) ] = \mathbb{E}_{p_{\mathrm{data}}(x|y)} [x].
    \end{equation}
\end{enumerate}
\end{proposition}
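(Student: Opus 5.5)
The plan is to reduce both parts to the behaviour of the optimal mean map $\mu^*(y)$ when the covariance is restricted to be diagonal. The key point is the structure of the pointwise objective $J(y;\mu,\Sigma)$ in \eqref{eq:J_loss}: it is a sum of a term depending only on $\mu$ and a term depending only on $\Sigma$. Therefore the diagonal constraint on $\Sigma_\phi$ does not change the optimal mean for a fixed $\theta$. For any $\theta=\theta(Q)\in\Theta^*$, the mean minimizer is still given by \eqref{eq:mu_general}, namely $\mu^*(y)=\Sigma_\phi K_\theta^\top v(y)$, with the \emph{unconstrained} precision $\Sigma_\phi=P(Q)^{-1}$ arising from the first-order condition \eqref{eq:mu_rearrange}, whatever $\Sigma$ is. I would check this decoupling carefully first.

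\textbf{Joint training.} By Proposition~\ref{prop:diagonal_constraint_gap}, $\mathcal{S}^{\mathrm{joint}}_{\mathrm{diag}}\subset\mathcal{S}^{\mathrm{joint}}$. Lemma~\ref{lem:posterior_mean_recovery} then gives $\mathbb{E}_{q_\phi}[f_\theta(z)]=K_\theta\mu^*(y)+m=m+K(y-Am)$, which is the posterior mean. So this part is immediate.

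\textbf{Separate training.} A literal reading of the decoupling would again give exact recovery, since the mean optimum does not involve $\Sigma$. The statement must therefore be read with the paper's intended coupling: when $\mu$ and $\Sigma$ are jointly restricted to the diagonal family, the quantity playing the role of $\Sigma_\phi$ in the mean formula is the diagonal $\Sigma^*_{\mathrm{diag},\phi}(Q)=\mathrm{diag}(P(Q))^{-1}$. This is the same structure as in \eqref{eq:opt_K_phi}, where $K_\phi$ is built from the same $\Sigma_\phi$. I would adopt this interpretation, so that the adapter gains are $K_\phi^{\mathrm{diag}}=D(Q)^{-1}Q^\top\Lambda U^\top(\sigma^{-2}A^\top+\tau^{-2}K)$ with $D(Q)=\mathrm{diag}(P(Q))$, and similarly for $b_\phi$.

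The mean error is then
\[
K_\theta\bigl(D^{-1}-P^{-1}\bigr)K_\theta^\top w(y),
\]
where $w(y)=\sigma^{-2}A^\top(y-Am)+\tau^{-2}K(y-Am)$. This uses the computation in Lemma~\ref{lem:posterior_mean_recovery}, which shows the affine parts combine into $y-Am$. Since $K_\theta=U\Lambda Q$ is invertible, the error vanishes iff $(D^{-1}-P^{-1})Q^\top\Lambda U^\top w(y)=0$. Note that $w(y)=M(y-Am)$ with $M:=(C^{-1}+\sigma^{-2}A^\top A+\tau^{-2}I)K$, which is nonzero when $A\neq0$. Because $y\sim\mathcal{N}(Am,\Sigma_y)$ has a density and $\Sigma_y\succ0$, the set of $y$ with zero error is a proper affine subspace, hence $y$-null, as soon as the matrix $(D^{-1}-P^{-1})Q^\top\Lambda U^\top M\neq0$. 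This in turn holds whenever $D(Q)^{-1}\neq P(Q)^{-1}$ and the range of $M$ is not killed. I would argue the generic case using full-rank considerations.

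The remaining task is to show $P(Q)\notin\mathbb{D}(d)$ for $\nu_{\mathbb{O}(d)}$-a.e.\ $Q$. This is exactly Lemma~\ref{lem:zero_measure_Q} with $V=\{0\}$, applied to $G(Q)=Q^\top HQ-\mathrm{diag}(\cdot)$ (replace $Q$ by $Q^\top$, which preserves Haar measure). It requires $H$ to have distinct eigenvalues, which I take as part of ``general position''. Finally, Fubini over $(y,Q)$ gives the joint almost-sure statement.

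The main obstacle is the middle step. I need to justify that the diagonal-constrained separate solution uses the diagonal covariance in its mean map, contrary to the naive decoupling, and then show that the error matrix has enough rank for the zero set in $y$ to be null. I would attempt the rank part by noting that if $(D^{-1}-P^{-1})$ times an invertible-on-range map were zero for a.e.\ $Q$, a further analytic-function argument over $\mathbb{O}(d)$ would force $D=P$ on a positive-measure set, contradicting Lemma~\ref{lem:zero_measure_Q}.
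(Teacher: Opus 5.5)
Your decoupling observation is correct. By \eqref{eq:J_loss} the pointwise objective is a $\mu$-part with Hessian $P(Q)=Q^\top HQ$ plus a separate $\Sigma$-part, and $K_\phi,b_\phi$ are free parameters in $\Phi_{\mathbb{D}}$. So the diagonal constraint leaves the optimal mean map unchanged, and by Lemma~\ref{lem:posterior_mean_recovery} separate training still recovers the posterior mean exactly. Part~1 is therefore false under the literal definitions, and the justification you list as your first obstacle cannot be supplied. The paper's proof makes exactly your choice without comment: it simply asserts $K_{\mathrm{diag}}=K_\theta[\mathrm{diag}(P)]^{-1}K_\theta^\top(\sigma^{-2}A^\top+\tau^{-2}K)$. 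You should state this coupling as an explicit modelling assumption, namely an adapter mean of the form $\Sigma_\phi K_\theta^\top v(y)$. Granting it, your route is the paper's. Your error $K_\theta(D^{-1}-P^{-1})K_\theta^\top w(y)$ equals its $K_\theta EK_\theta^{-1}K(y-Am)$, with $E=D^{-1}P-I$, because $K_\theta^\top M=PK_\theta^{-1}K$. Your joint-training argument is identical to the paper's.

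The genuine gap is the rank step. Showing $P(Q)\notin\mathbb{D}(d)$ a.s.\ (Lemma~\ref{lem:zero_measure_Q} with $V=\{0\}$) is not enough. When $d_y<d$, which is the typical case, $Q^\top\Lambda U^\top M$ has rank at most $d_y$, and nothing in your argument prevents a nonzero $D^{-1}-P^{-1}$ from annihilating its range. Your proposed contradiction fails for the same reason: a vanishing product does not force $D=P$.

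The paper handles this by putting the exact vanishing condition into $V=\{S\in\mathrm{Sym}_0(d): SK_\theta^{-1}K=0\}$, checking that $V$ is proper, and invoking Lemma~\ref{lem:zero_measure_Q}. However, $K_\theta^{-1}K=Q^\top W$ with $W:=\Lambda^{-1}U^\top K\neq0$, so this $V$ moves with $Q$. The clean way to run the argument is on the analytic map $F(Q):=\mathrm{offdiag}(Q^\top HQ)\,Q^\top W$, whose zero set is exactly where the error matrix vanishes (since $DE=P-D$).

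Take $Q_0$ with $Q_0^\top HQ_0=\Lambda_H$ diagonal. There the first factor of $F$ vanishes, so along $Q_0e^{\epsilon B}$ the derivative of $F$ is $[\Lambda_H,B]\,Q_0^\top W$. The map $B\mapsto[\Lambda_H,B]$ sends $\mathfrak{so}(d)$ onto $\mathrm{Sym}_0(d)$, and $Q_0^\top W$ has a column $w$ with some $w_i\neq0$. Choosing $B$ with $[\Lambda_H,B]=e_ie_j^\top+e_je_i^\top$ for some $j\neq i$ makes the derivative nonzero. Flipping the sign of one column of $Q_0$ handles the other connected component, and the identity theorem then gives $\nu_{\mathbb{O}(d)}(F^{-1}(0))=0$.

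Alternatively, your full-rank idea can be made rigorous by showing $\det(D^{-1}-P^{-1})\not\equiv0$. Near $Q_0$ one has $D^{-1}-P^{-1}=\epsilon\Lambda_H^{-1}[\Lambda_H,B]\Lambda_H^{-1}+O(\epsilon^2)$. Choosing $B$ with $[\Lambda_H,B]=\mathbf{1}\mathbf{1}^\top-I$, which is invertible, shows the determinant is not identically zero. By the same identity-theorem argument, $D^{-1}-P^{-1}$ is then a.s.\ invertible, and the error matrix is nonzero whenever $M\neq0$, i.e.\ whenever $A\neq0$.
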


\begin{proof}
Let $\text{Sym}_0(d) := \{ M \in \mathbb{R}^{d \times d} \mid M = M^\top, \text{diag}(M) = 0 \}$. 
The expected reconstruction is $\hat{x} = \mathbb{E}_{z \sim q_{\phi}(z|y)} [ f_\theta(z) ]=K_\theta K_\phi y + K_\theta b_\phi + b_\theta$, while the analytical posterior mean is $\mathbb{E}[x|y] = m + K(y - Am)$. In the separate training paradigm, where $(\theta(Q_{\mathrm{sep}}), \phi(Q_{\mathrm{sep}})) \in \mathcal{S}^{\mathrm{sep}}_{\mathrm{diag}}$, the rotation $Q_{\mathrm{sep}}$ is fixed. Let $P := Q_{\mathrm{sep}} H Q_{\mathrm{sep}}^\top$ and denote $E := [\mathrm{diag}(P)]^{-1} P - I$. Under the diagonal constraint, the gain matrix becomes $K_{\mathrm{diag}} = K_\theta [\mathrm{diag}(P)]^{-1} K_\theta^\top (\sigma^{-2} A^\top + \tau^{-2} K)$. The recovery error simplifies to
\begin{equation}
    \hat{x} - \mathbb{E}[x|y] = (K_{\mathrm{diag}} - K)(y - Am) = K_\theta E K_\theta^{-1} K (y - Am).
\end{equation}
Since $A$ and $C$ are in general position, $Q_{\mathrm{sep}}$ does not diagonalize $H$ almost surely, implying that $P$ is non-diagonal and thus $E$ is a non-zero matrix with a vanishing diagonal. In addition, this general position assumption implies that $H$ has distinct eigenvalues and that $K = C A^\top (A C A^\top + \sigma^2 I)^{-1}$ has rank $d_y$.

Now define $V := \{M\in \text{Sym}_0(d) \mid K_\theta M K_\theta^{-1} K=0\}$ as a subspace of $\mathbb{R}^{d\times d}$. Since $K_\theta$ is invertible due to $K_\theta K_\theta^\top = C$, the condition $M \in V$ is equivalent to $M (K_\theta^{-1} K) = 0$. By the general position assumption, $K$ is non-zero, meaning the matrix $K_\theta^{-1} K$ contains at least one non-zero column $w$. If $M w = 0$ for all $M \in \text{Sym}_0(d)$, then applying symmetric matrices $M$ with a single pair of off-diagonal ones (and zeros elsewhere) would force all components of $w$ to be zero, contradicting $w \neq 0$. Thus, there exists some $M \in \text{Sym}_0(d)$ such that $M K_\theta^{-1} K \neq 0$, ensuring that $V$ is a proper subspace of $\text{Sym}_0(d)$ (i.e., $V \subsetneq \text{Sym}_0(d)$). Therefore we can use the Lemma~\ref{lem:zero_measure_Q} for $H$ and $V$ to show that
\begin{equation}\label{eq:zero_measure_Q}
    \nu_{\mathbb{O}(d)} (\{Q\in\mathbb{O}(d) | QHQ^\top - \mathrm{diag}(QHQ^\top) \in V\}) = 0,
\end{equation}

To connect this with the recovery error, let $M_\mathrm{sep} = Q_{\mathrm{sep}} H Q_{\mathrm{sep}}^\top - \mathrm{diag}(Q_{\mathrm{sep}} H Q_{\mathrm{sep}}^\top)$. By definition, the error matrix $E$ satisfies $E = [\mathrm{diag}(P)]^{-1} M_\mathrm{sep}$. Because $[\mathrm{diag}(P)]^{-1}$ is an invertible diagonal matrix, the condition $K_\theta E K_\theta^{-1} K = 0$ holds if and only if $M_\mathrm{sep} K_\theta^{-1} K = 0$, which is exactly $M_\mathrm{sep} \in V$. According to \eqref{eq:zero_measure_Q}, we have
\begin{equation}
    \nu_{\mathbb{O}(d)} (\{ Q_{\mathrm{sep}} \in \mathbb{O}(d) \mid K_\theta E K_\theta^{-1} K = 0 \}) = 0.
\end{equation}
This ensures that for almost every $Q_{\mathrm{sep}}$ sampled from $\mathbb{O}(d)$, the linear mapping matrix $K_\theta E K_\theta^{-1} K$ is strictly non-zero. Consequently, the null space $\{ y \in \mathbb{R}^{d_y} \mid K_\theta E K_\theta^{-1} K(y - Am) = 0 \}$ constitutes a proper affine subspace of $\mathbb{R}^{d_y}$ with dimension strictly less than $d_y$. Since the marginal distribution $p(y) = \mathcal{N}(y|Am, \Sigma_y)$ is a non-degenerate continuous Gaussian, it assigns zero probability mass to any strictly lower-dimensional subspace. It follows directly that the recovery error $\hat{x} - \mathbb{E}[x|y] \neq 0$ almost surely with respect to the joint measure of $p(y)$ and $\nu_{\mathbb{O}(d)}$, i.e.
\begin{equation}
    \mathbb{E}_{z \sim q_{\phi}(z|y)} [ f_\theta(z) ] \neq \mathbb{E}_{p_{\mathrm{data}}(x|y)} [x] \quad \text{a.s. w.r.t. } y \sim \mathcal{N}(Am, \Sigma_y), \,\, Q \sim \nu_{\mathbb{O}(d)}.
\end{equation}

For joint training, Proposition~\ref{prop:diagonal_constraint_gap} establishes that $\mathcal{S}^{\mathrm{joint}}_{\mathrm{diag}} \subset \mathcal{S}^{\mathrm{joint}}$. Since every pair in $\mathcal{S}^{\mathrm{joint}}$ satisfies the unconstrained optimality condition $\hat{x} = \mathbb{E}[x|y]$ by Lemma~\ref{lem:posterior_mean_recovery}, the identity holds for all $(\theta, \phi) \in \mathcal{S}^{\mathrm{joint}}_{\mathrm{diag}}$ for all $y \in \mathbb{R}^{d_y}$, i.e.
\begin{equation}
    \mathbb{E}_{z \sim q_{\phi}(z|y)} [ f_\theta(z) ] = \mathbb{E}_{p_{\mathrm{data}}(x|y)} [x].
\end{equation}

\end{proof}

\begin{remark}[Coordinate Alignment and Non-linear Extensions]\label{rem:nonlinear_extension}
    Propositions~\ref{prop:diagonal_constraint_gap} and \ref{prop:mean_recovery_gap} characterize the interaction between the generative map and the amortized inference network under structural constraints. In the separate training paradigm, the fixed generative map imposes a rigid coordinate system in the latent space. Restricting the variational posterior to a diagonal covariance $\Sigma_\phi$ forces it to approximate a structurally dense precision matrix $P(Q_{\mathrm{sep}})$, which inherently induces a systematic recovery gap. Joint training resolves this limitation by optimizing the orthogonal matrix $Q \in \mathbb{O}(d)$ to align the principal axes of the posterior precision with the canonical basis of the prior. This alignment guarantees that the diagonal parameterization attains the unconstrained global minimum $\mathcal{L}_{\mathrm{opt}}$.

    Furthermore, this geometric alignment property extends to non-linear generative models. During joint optimization, the generator adapts its representation such that the local geometry of the data distribution corresponds with the inductive bias of the variational distribution. By adjusting its Jacobian $\nabla_z f_\theta(z)$, the generator can approximately diagonalize the pull-back metric in the latent space, providing a mathematical justification for the deployment of factorized posterior approximations in more general inference settings.
\end{remark}

\subsection{Proof of Proposition \ref{eq:upper-bound}}\label{app:proof-upper-bound}
First, noting that $f_\theta(z) = z - u_\theta(z, 0, 1)$, we have
\begin{align}
    \left\|x - f_\theta(z)\right\|^2 = \left\|x - (z - u_\theta(z, 0, 1))\right\|^2= \left\|u_\theta(z, 0, 1) - (z - x)\right\|^2.
\end{align}
Then, by Jensen's inequality, and recalling that $\psi_{t}(x, z) := tz + (1-t)x$, we get
\begin{align}
    &\int^1_0\|\partial_t \mathcal{E}_\theta(x, z, 0,t)\|^2 dt \\
    &= \int^1_0\left\|\frac{d}{dt} \left[t u_\theta(\psi_{t}(x, z), 0, t) - \int^t_0 \dot{\psi}_{t}(x, z) ds\right]\right\|^2 dt \\
    & \stackrel{\text{Jensen}}{\geq} \left\|\int^1_0 \frac{d}{dt} \left[t u_\theta(\psi_{t}(x, z), 0, t) - t (z - x) \right] dt \right\|^2 \\
    &= \left\|u_\theta(z, 0, 1) - (z - x)\right\|^2.
\end{align}
Putting these together, we establish our desired bound. \hfill\qedsymbol

\subsection{Proof of Proposition \ref{eq:posterior-sampling}}\label{app:proof-posterior-sampling}
From our assumptions, we can compute
\begin{align}\label{eq:x-given-y}
    p_\tau(x | y) := \int_{\mathbb{R}^d} \mathcal{N}(x | f_\theta(z), \tau^2 I) p(z | y) d z,
\end{align}
where
\begin{align}
    p(z | y) := \frac{\mathcal{N}(y | Af_\theta(z), \sigma^2 I) p(z)}{\int_{\mathbb{R}^d} \mathcal{N}(y | Af_\theta(z), \sigma^2 I) p(z) dz}.
\end{align}
Denoting by $\mu^y_\tau(dx) := p_\tau(x | y)dx$ and $\nu^y(dz) := p(z | y)dz$ the posterior measures in $x$ and $z$ spaces, respectively, for any $g \in C_b(\mathbb{R}^d)$, we have
\begin{align}
    \int_{\mathbb{R}^d} g(x) \mu^y_\tau(dx) &\stackrel{\eqref{eq:x-given-y}}{=} \int_{\mathbb{R}^d} \int_{\mathbb{R}^d} g(x) \mathcal{N}(x | f_\theta(z), \tau^2 I) \nu^y(dz) dx \\
    &\stackrel{\tau \rightarrow 0}{\longrightarrow} \int_{\mathbb{R}^d} g\big(f_\theta(z)\big) \nu^y(dz) \\
    &= \int_{\mathbb{R}^d} g\big(x\big) (f_\theta)_\sharp \nu^y(dx),
\end{align}
where we used the standard result that $\mathcal{N}(x | f_\theta(z), \tau^2 I)$ converges weakly to the delta measure around $f_\theta(z)$ as $\tau \rightarrow 0$ \cite{billingsley2013convergence}, and we used the dominated convergence theorem and Fubini's theorem, both justified by the bound
\begin{align}
    \left|\int_{\mathbb{R}^d} g(x) \mathcal{N}(x | f_\theta(z), \tau^2 I) dx\right| \leq \|g\|_\infty.
\end{align}
This proves the weak convergence  of measures $\mu^y_\tau \Rightarrow (f_\theta)_\sharp \nu^y$ as $\tau \rightarrow 0$. \hfill\qedsymbol

\section{Experimental Details}
\subsection{2D Checkerboard Data}\label{app:checkerboard-exp}
We use a 2D checkerboard distribution supported on alternating squares in $[-2,2]^2$.
To sample, we first draw $u \sim \mathrm{Unif}([0,1]^2)$ and partition the unit square into a $4\times4$ uniform grid. Then, we accept samples that lie on one of the checkerboard cells.
Finally, we center and scale via $x = 4(u-(0.5,0.5))$, so the support lies in $[-2,2]^2$ and each retained square has side length $1$. We used $20,000$ samples from this distribution to train our models.

\subsubsection{Model architectures}
For the mean-flow network $u_\theta$, we use a SiLU MLP with six layers and width $512$. We initialize this model from a flow-matching velocity network pretrained on the checkerboard samples. 
The noise adapter is a smaller SiLU MLP with four layers and width $256$, trained from scratch. Each model is trained for $50{,}000$ iterations with batch size $2048$ using the \texttt{AdamW} optimizer with learning rate $2\times10^{-4}$ and weight decay $1 \times 10^{-4}$.

\subsubsection{Problem formulation}
The task in this experiment is to solve the Bayesian inverse problem
\begin{align}
    p(x | y) \propto \exp\left(-\frac{|y - Ax|^2}{2\sigma^2}\right) p(x),
\end{align}
where $p(x)$ is the 2D checkerboard distribution and the forward operator is given by $A=\begin{pmatrix}1 & 0\end{pmatrix}$, that is, observing only the first component. For the observation noise, we take $\sigma=0.1$.

\subsubsection{Metrics}\label{app:metrics}
To evaluate our results, we use the following metrics.

\paragraph{Negative log predictive density (NLPD).} Given an observation $y\in\mathbb{R}$ and posterior samples $\{x^{(j)}\}_{j=1}^J$ with $x^{(j)} \sim p(x | y)$, the predictive density is approximated by Monte Carlo:
\begin{equation}
p(y'|y) \;=\; \int p(y'\mid x)\,p(x | y)\,dx \;\approx\; \frac{1}{J}\sum_{j=1}^J \mathcal{N}\!\big(y' | Ax^{(j)},\ \sigma^2\big),
\end{equation}
where $y'$ is a fresh observation independent of $y$.
We report the {\em negative log predictive density (NLPD)},
\begin{equation}
\mathrm{NLPD}(y'; y) \;=\; -\log p(y' | y) \;\approx\; -\log\!\left(\frac{1}{J}\sum_{j=1}^J \mathcal{N}\!\big(y' | Ax^{(j)},\ \sigma^2\big)\right),
\end{equation}
which is a proper scoring rule. To sample from $p(x | y)$ approximately using VFM, we first sample $z^{(j)} \sim q_\phi(z | y)$ and then set $x^{(j)} = f_\theta(z^{(j)})$.
We report the averaged NLPD over a batch $\{y_b', y_b, \{x_b^{(j)}\}_{j=1}^J\}_{b=1}^B$. We take $B = 10,000$ and $J = 100$.

\paragraph{Continuous ranked probability score (CRPS).}
Given ground-truth targets $x^\dagger \in\mathbb{R}^2$ and $J$ predictive samples
$\{x^{(j)}\}_{j=1}^J$ corresponding to an observation $y^\dagger = Ax^\dagger + \varepsilon^\dagger$ for some noise realisation $\varepsilon^\dagger$ (i.e. we take $x^{(j)} = f_\theta(z^{(j)})$ for $z^{(j)} \sim q_\phi(z | y^\dagger)$), we estimate the CRPS as:
\begin{equation}
\mathrm{CRPS}(x^\dagger; y^\dagger) \;\approx\;
\frac{1}{J}\sum_{j=1}^J \bigl\|x^{(j)} - x^\dagger\bigr\|
\;-\;
\frac{1}{2J^2}\sum_{j=1}^J\sum_{k=1}^J \bigl\|x^{(j)} - x^{(k)}\bigr\| .
\end{equation}
The first term measures the average distance of samples to the truth, while the second term rewards diversity.
We report the averaged CRPS over a batch $\{x_b^\dagger, y_b^\dagger, \{x^{(j)}_b\}_{j=1}^J\}_{b=1}^B$. We take $B = 10,000$ and $J = 100$.

\paragraph{Maximum mean discrepancy (MMD).}
To compare two measures $\mu_P$ and $\mu_Q$, we can compute their maximum mean discrepancy, which is a distance on the space of measures, whose square is given by \cite{gretton2012kernel}
\begin{equation}
\mathrm{MMD}^2(X,Y)
= \mathbb{E}[k(x,x')] + \mathbb{E}[k(y,y')] - 2\,\mathbb{E}[k(x,y)],
\end{equation}
with $x,x'\sim \mu_P$ and $y,y'\sim \mu_Q$ i.i.d., and $k(\cdot, \cdot)$ is a choice of kernel such as the squared exponential kernel
\begin{equation}
k(u,v) := \exp\!\left(-\frac{\|u-v\|_2^2}{2\ell^2}\right).
\end{equation}

In practice, we use the unbiased estimator:
\begin{equation}
\widehat{\mathrm{MMD}}^2
=
\frac{1}{N(N-1)}\sum_{i\neq i'} k(x^{(i)},x^{(i')})
+
\frac{1}{M(M-1)}\sum_{j\neq j'} k(y^{(j)},y^{(j')})
-
\frac{2}{NM}\sum_{i=1}^N\sum_{j=1}^M k(x^{(i)},y^{(j)}),
\end{equation}
For the lengthscale hyperparameter $\ell$, we choose the median heuristic computed from pairwise distances between samples. In our computations, we choose $N = M = 10,000$ samples to compare the prior distributions and the posterior distributions. Here, our true prior distribution is the checkerboard distribution, and our approximate prior is obtained by $\{f_\theta(z)\}_{z \sim \mathcal{N}(0, I)}$. For the true posterior, we compute it using rejection sampling (see Algorithm \ref{alg:rejection-sampling}) and the approximate posterior is obtained by $\{f_\theta(z)\}_{z \sim q_\phi(z | y)}$.

\begin{algorithm}[t]
\caption{Rejection sampling for $p(x\mid y)$}
\label{alg:rejection-sampling}
\begin{algorithmic}[1]
\STATE {\bf Input:} observation $y\in\mathbb{R}$, noise $\sigma>0$, number of samples $J$, prior $p(x)$
\STATE Initialize accepted set $\mathcal{S}\gets \varnothing$
\WHILE{$|\mathcal{S}| < J$}
    \STATE Propose $x \sim p(x)$
    \STATE Compute $a \gets \exp\!\left(-\frac{(y - Ax)^2}{2\sigma^2}\right)$
    \STATE Draw $u \sim \mathrm{Unif}(0,1)$
    \IF{$u < a$}
        \STATE Append $x$ to $\mathcal{S}$
    \ENDIF
\ENDWHILE
\STATE {\bf Output:} $\{x^{(j)}\}_{j=1}^J \gets \mathcal{S}$
\end{algorithmic}
\end{algorithm}

\paragraph{Support accuracy (SACC).}
We measure \emph{support accuracy} as the proportion (percentage) of generated samples that fall inside one of the filled checkerboard squares.
Concretely, for samples $\{x^{(j)}\}_{j=1}^J$, we compute
\begin{equation}
\mathrm{Acc}(\{x^{(j)}\}_{j=1}^J) \;=\; \frac{1}{J}\sum_{j=1}^J \mathbf{1}\!\left[x^{(j)} \text{ lies in a checkerboard cell}\right].
\end{equation}
We compute the support accuracy for both prior samples $\{f_\theta(z)\}_{z \sim \mathcal{N}(0, I)}$ and posterior samples $\{f_\theta(z)\}_{z \sim q_\phi(z | y)}$.

\subsubsection{Ablation plots}\label{app:checkerboard-ablation}
\begin{itemize}
    \item Figure~\ref{fig:tau-ablation-alpha-0.5}: Ablation of VFM for all metrics with respect to the parameter $\tau$. The parameter $\alpha$ is set to $0.5$. We also display the results of the \texttt{frozen-$\theta$} baseline for reference.
    \item Figure~\ref{fig:tau-ablation-alpha-1.0}: Ablation of VFM for all the metrics with respect to $\tau$. The parameter $\alpha$ is set to $1.0$. We also display the results of the \texttt{frozen-$\theta$} baseline for reference.
    \item Figure~\ref{fig:various-configs}: Plots displaying the noise-to-data alignment in VFM with or without various modeling choices in the loss to isolate their effects on the final results. In particular, we consider: (1) \texttt{frozen-$\theta$}, (2) \texttt{unconstrained-$\theta$}, (3) VFM with no EMA, (4) VFM without KL loss.
    \item Figure~\ref{fig:tau-ablation}: Plots displaying how the noise-to-data alignment for VFM changes with respect to $\tau$. Here, $\alpha$ is set to $1.0$. 
    \item Figure~\ref{fig:alpha-ablation}: Plots displaying how the noise-to-data alignment for VFM changes with respect to $\alpha$. Here, $\tau$ is set to $100.0$. 
\end{itemize}

\begin{figure*}[h]
\centering

\includegraphics[width=0.6\textwidth]{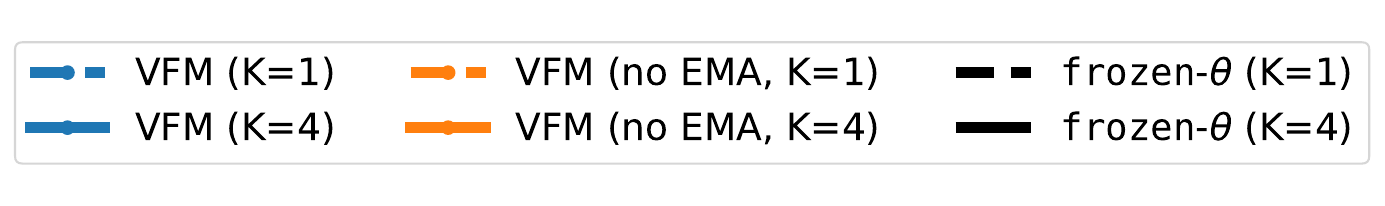}
\vspace{0.8em}

\begin{minipage}[h]{0.48\textwidth}
\centering
\setcounter{subfigure}{0} 

\begin{subfigure}[t]{0.48\linewidth}
  \includegraphics[width=\linewidth]{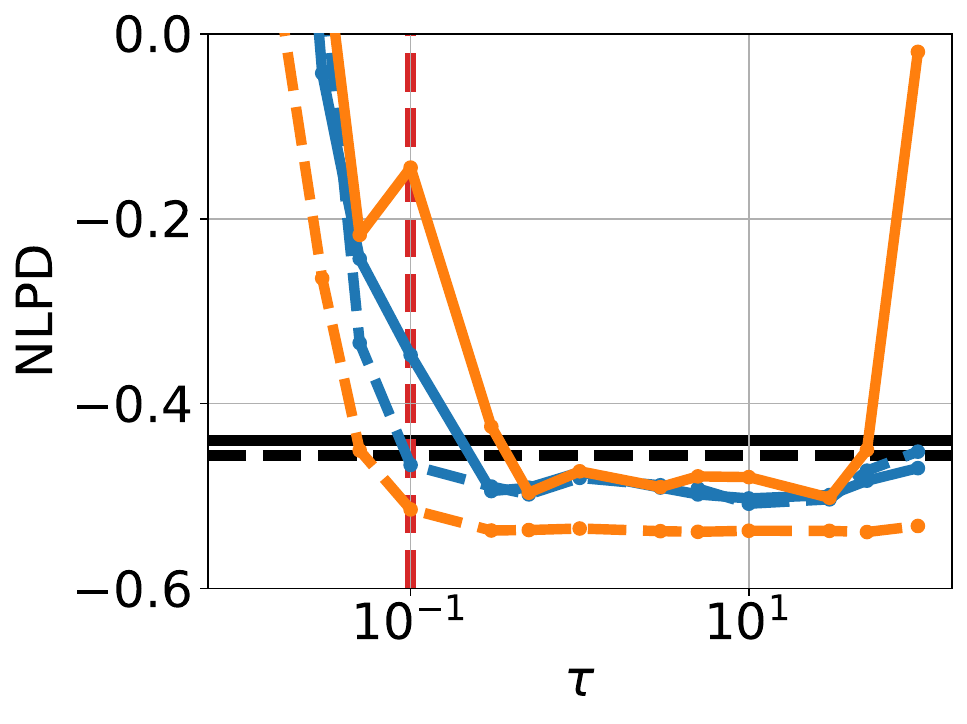}
  \caption{NLPD ($\downarrow$)}
\end{subfigure}\hfill
\begin{subfigure}[t]{0.48\linewidth}
  \includegraphics[width=\linewidth]{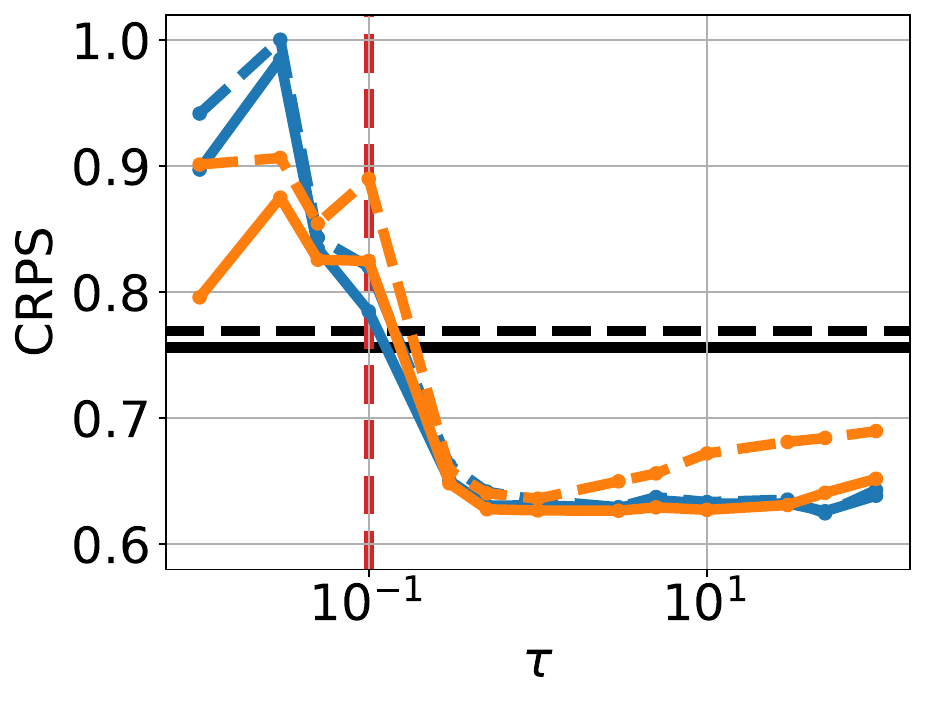}
  \caption{CRPS ($\downarrow$)}
\end{subfigure}

\vspace{0.6em}
\begin{subfigure}[t]{0.48\linewidth}
  \includegraphics[width=\linewidth]{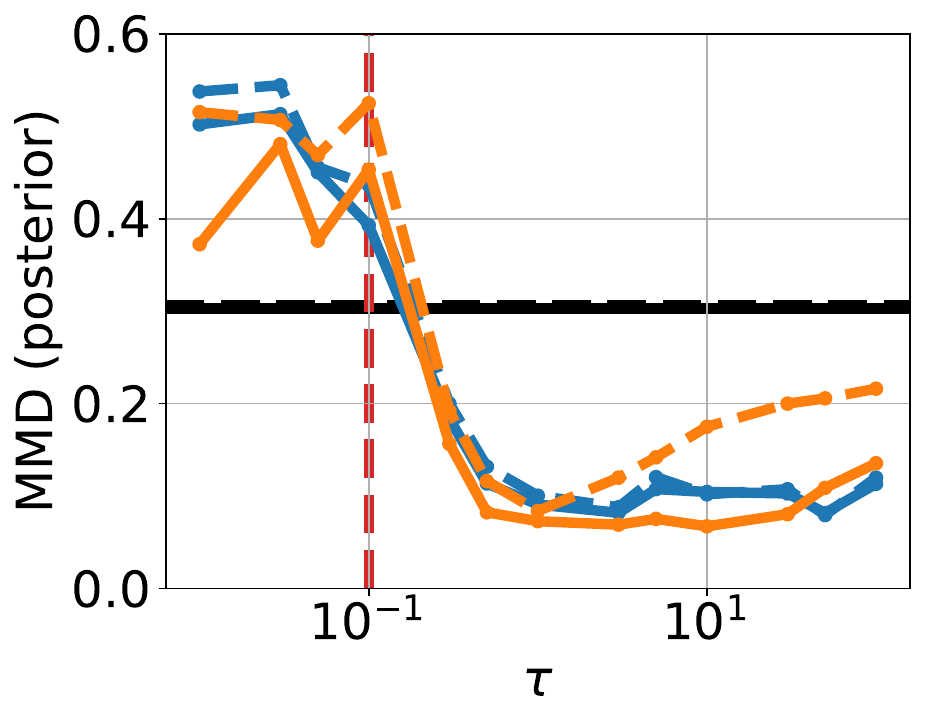}
  \caption{Posterior MMD ($\downarrow$)}
  \label{fig:posterior-mmd-alpha-0.5}
\end{subfigure}\hfill
\begin{subfigure}[t]{0.48\linewidth}
  \includegraphics[width=\linewidth]{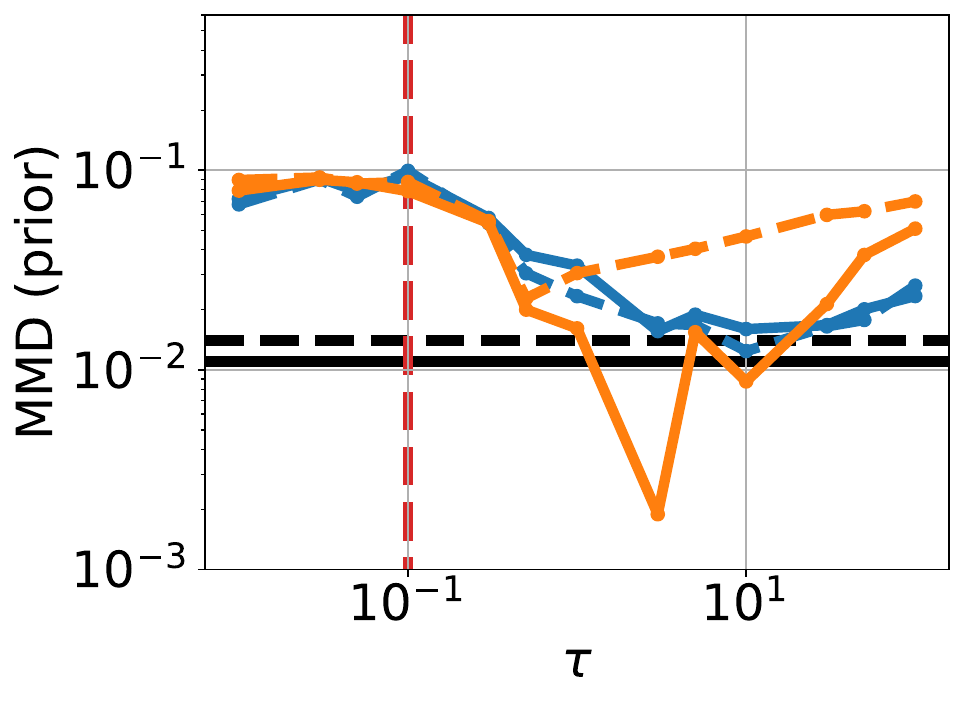}
  \caption{Prior MMD ($\downarrow$)}
  \label{fig:prior-mmd-alpha-0.5}
\end{subfigure}

\vspace{0.6em}
\begin{subfigure}[t]{0.48\linewidth}
  \includegraphics[width=\linewidth]{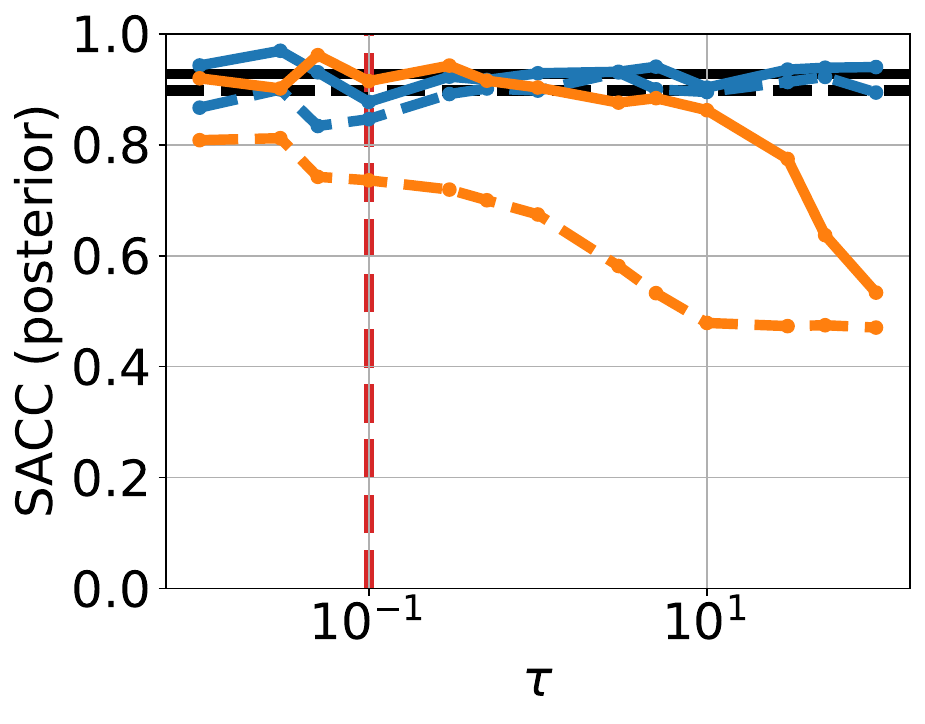}
  \caption{Posterior SACC ($\uparrow$)}
\end{subfigure}\hfill
\begin{subfigure}[t]{0.48\linewidth}
  \includegraphics[width=\linewidth]{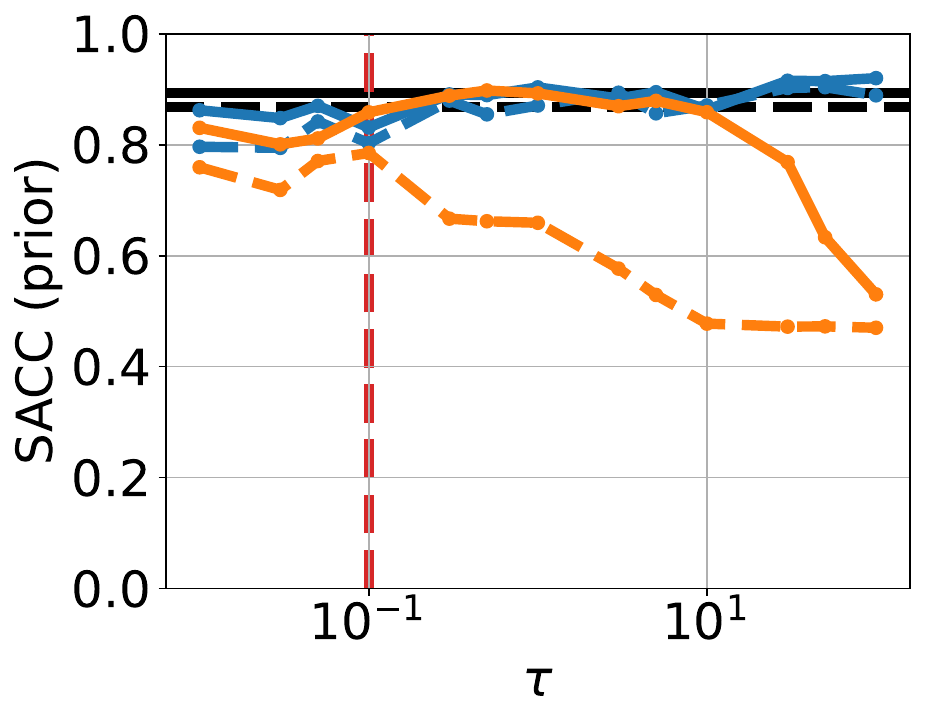}
  \caption{Prior SACC ($\uparrow$)}
\end{subfigure}

\vspace{0.6em}
\caption{Metrics for VFM with $\boldsymbol{\alpha = 0.5}$ and varying $\tau$. Dashed vertical (red) line indicates the reference value $\sigma = 0.1$. The baseline model (black lines) is \texttt{frozen-$\theta$}. We compare the results of VFM with EMA used in the observation loss term (blue lines) vs. without using EMA (orange line) for $K=1, 4$.}
\label{fig:tau-ablation-alpha-0.5}

\end{minipage}
\hfill
\begin{minipage}[h]{0.48\textwidth}
\centering
\setcounter{subfigure}{0} 

\begin{subfigure}[t]{0.48\linewidth}
  \includegraphics[width=\linewidth]{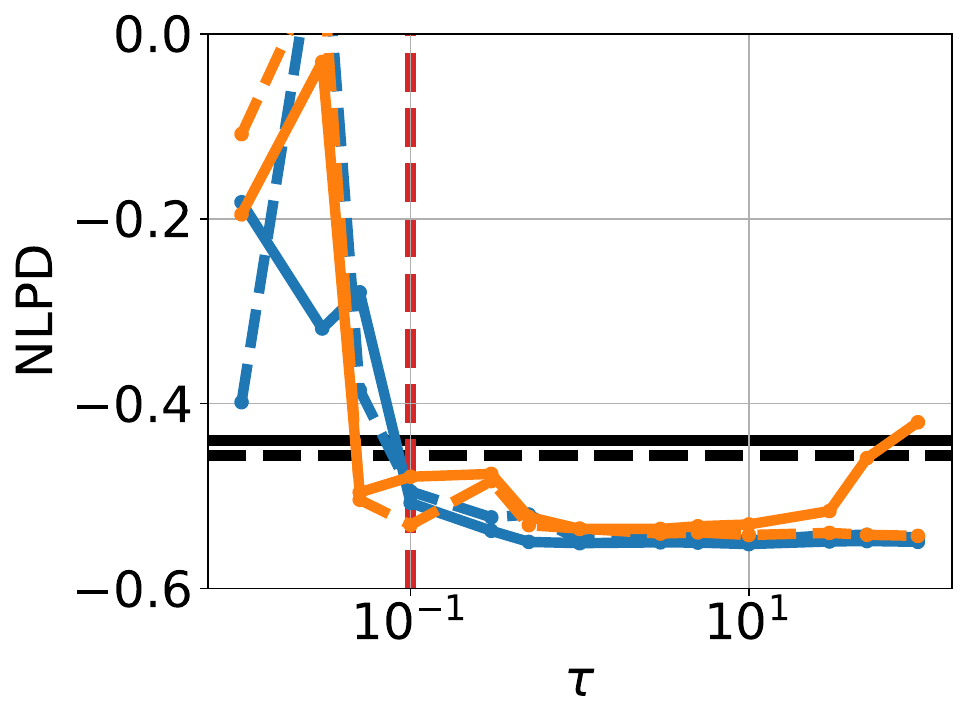}
  \caption{NLPD ($\downarrow$)}
\end{subfigure}\hfill
\begin{subfigure}[t]{0.48\linewidth}
  \includegraphics[width=\linewidth]{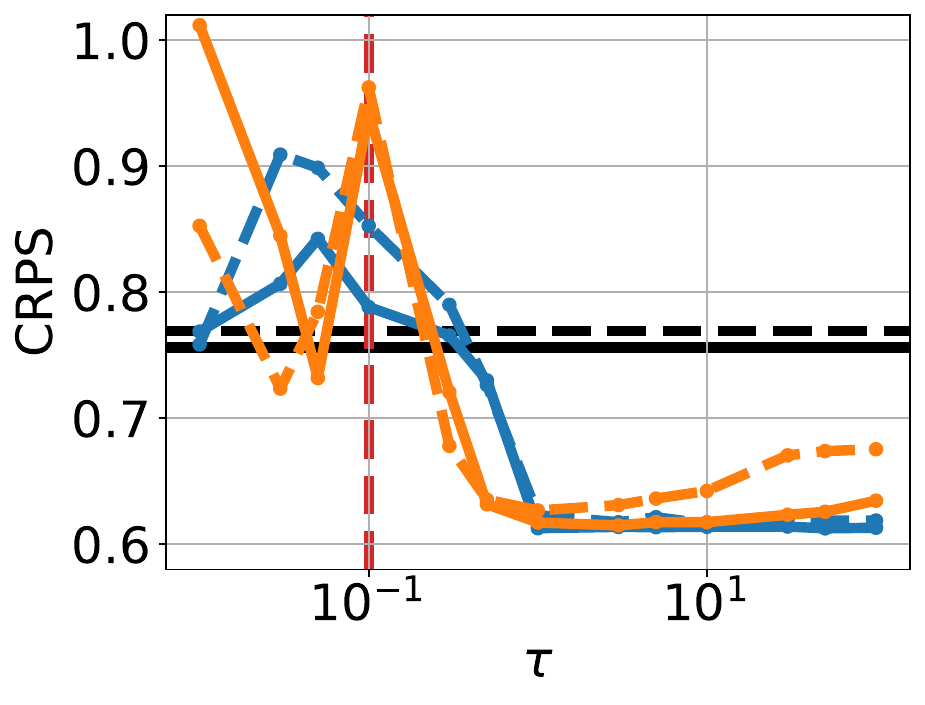}
  \caption{CRPS ($\downarrow$)}
\end{subfigure}

\vspace{0.6em}
\begin{subfigure}[t]{0.48\linewidth}
  \includegraphics[width=\linewidth]{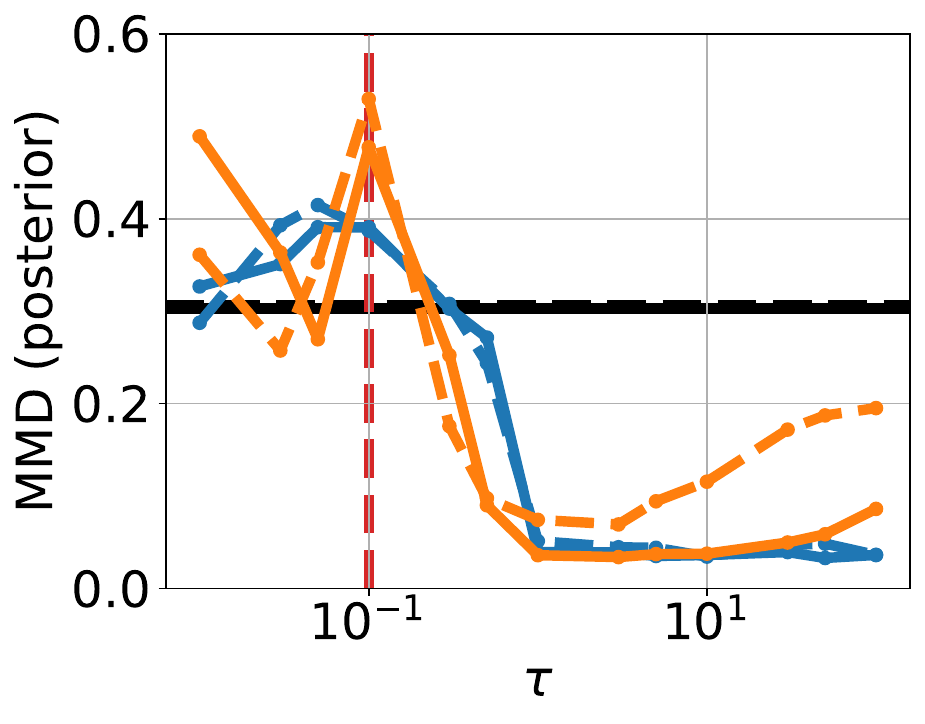}
  \caption{Posterior MMD ($\downarrow$)}
  \label{fig:posterior-mmd-alpha-1.0}
\end{subfigure}\hfill
\begin{subfigure}[t]{0.48\linewidth}
  \includegraphics[width=\linewidth]{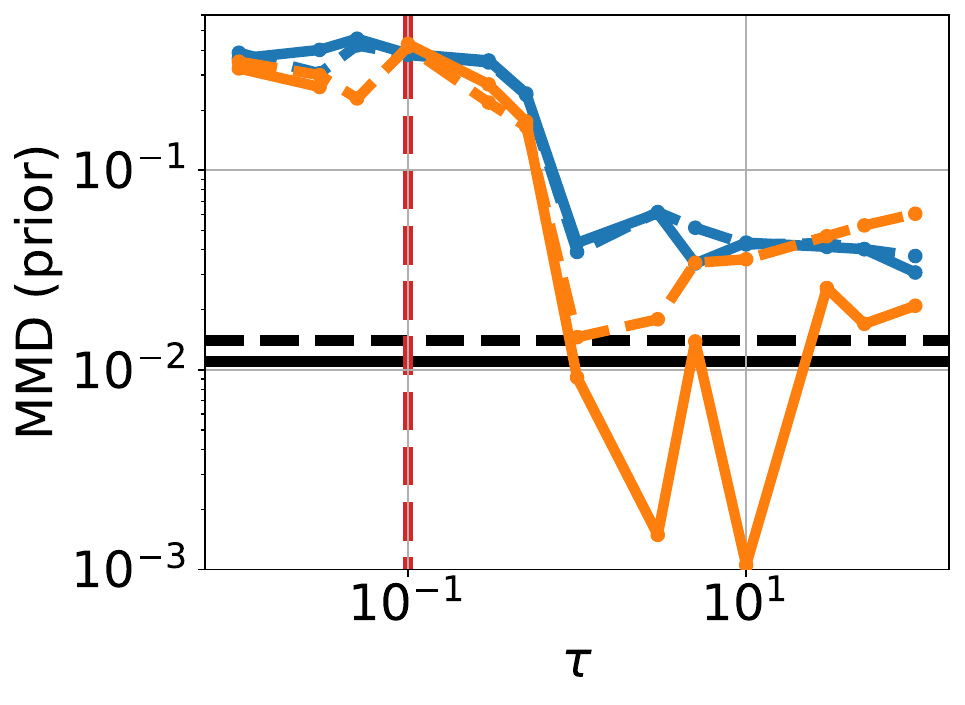}
  \caption{Prior MMD ($\downarrow$)}
  \label{fig:prior-mmd-alpha-1.0}
\end{subfigure}

\vspace{0.6em}
\begin{subfigure}[t]{0.48\linewidth}
  \includegraphics[width=\linewidth]{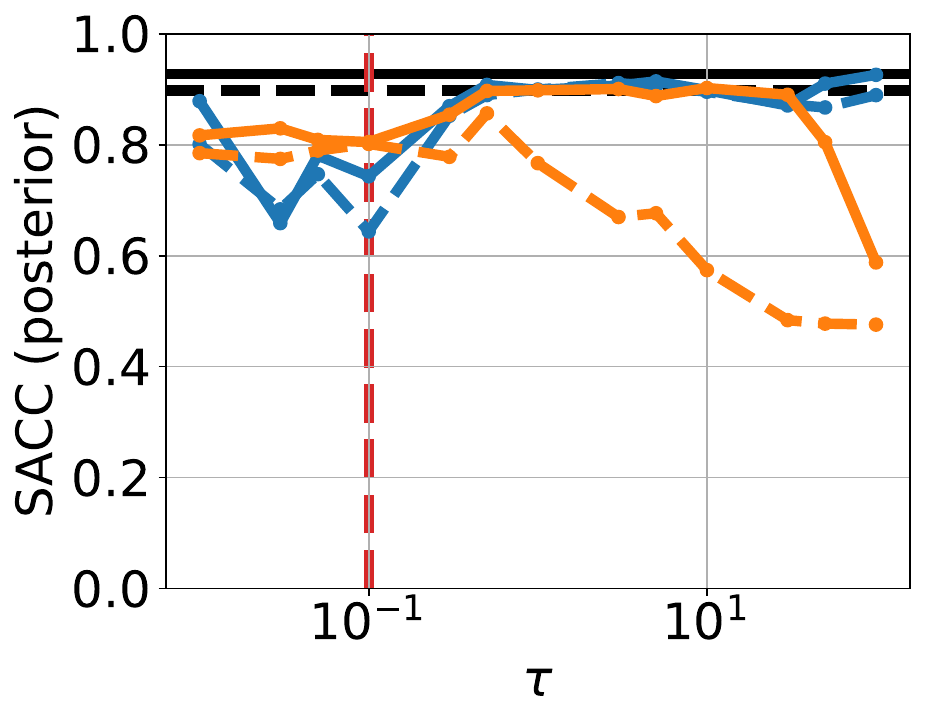}
  \caption{Posterior SACC ($\uparrow$)}
\end{subfigure}\hfill
\begin{subfigure}[t]{0.48\linewidth}
  \includegraphics[width=\linewidth]{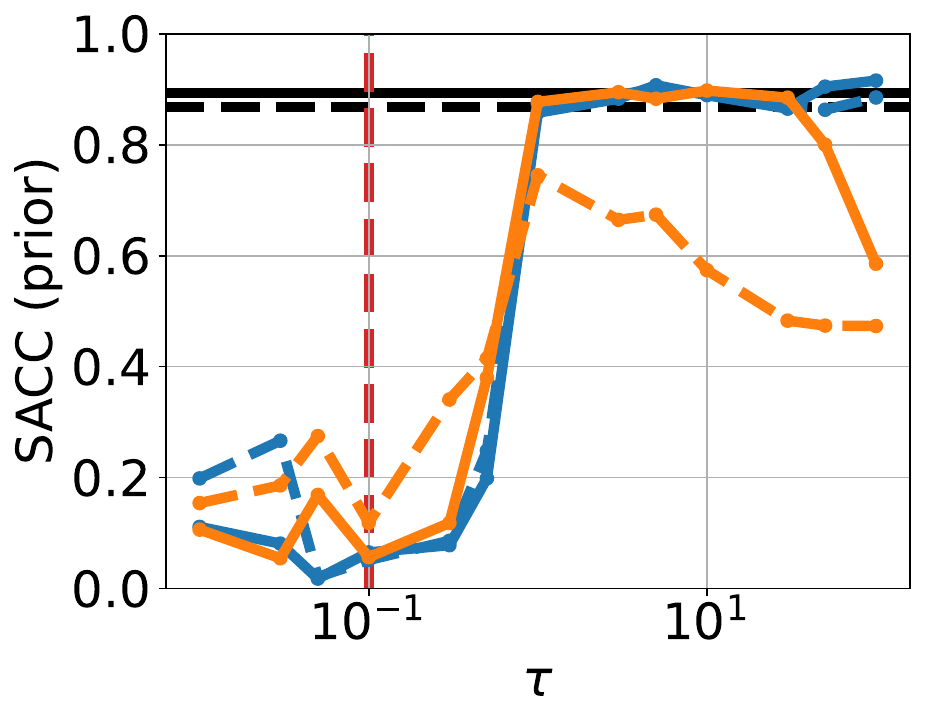}
  \caption{Prior SACC ($\uparrow$)}
\end{subfigure}

\vspace{0.6em}
\caption{Metrics for VFM with $\boldsymbol{\alpha = 1.0}$ and varying $\tau$. Dashed vertical (red) line indicates the reference value $\sigma = 0.1$. The baseline model (black lines) is \texttt{frozen-$\theta$}. We compare the results of VFM with EMA used in the observation loss term (blue lines) vs. without using  EMA (orange line) for $K=1, 4$.}
\label{fig:tau-ablation-alpha-1.0}
\end{minipage}

\label{fig:checkerboard-metrics}
\end{figure*}

\begin{figure}[t]
\centering
\begin{subfigure}[t]{0.18\linewidth}
  \centering
  \includegraphics[width=\linewidth]{figures/checkerboard/viz/main/adapter_only_y_0.5_data.pdf}
\end{subfigure}\hfill
\begin{subfigure}[t]{0.18\linewidth}
  \centering
  \includegraphics[width=\linewidth]{figures/checkerboard/viz/main/naive_joint_y_0.5_data.pdf}
\end{subfigure}\hfill
\begin{subfigure}[t]{0.18\linewidth}
  \centering
  \includegraphics[width=\linewidth]{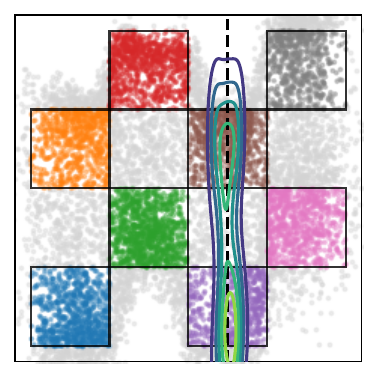}
\end{subfigure}\hfill
\begin{subfigure}[t]{0.18\linewidth}
  \centering
  \includegraphics[width=\linewidth]{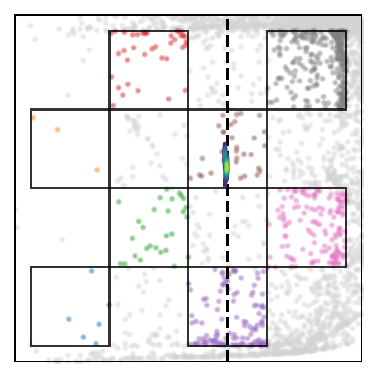}
\end{subfigure}\hfill
\begin{subfigure}[t]{0.18\linewidth}
  \centering
  \includegraphics[width=\linewidth]{figures/checkerboard/viz/main/vfm_ema_y_0.5_data.pdf}
\end{subfigure}

\begin{subfigure}[t]{0.18\linewidth}
  \centering
  \includegraphics[width=\linewidth]{figures/checkerboard/viz/main/adapter_only_y_0.5_latent.pdf}
  \caption{{\scriptsize\texttt{frozen-$\theta$}}}
  \label{fig:frozen}
\end{subfigure}\hfill
\begin{subfigure}[t]{0.18\linewidth}
  \centering
  \includegraphics[width=\linewidth]{figures/checkerboard/viz/main/naive_joint_y_0.5_latent.pdf}
  \caption{{\scriptsize \texttt{unconstrained-$\theta$}}}
  \label{fig:unconstrained}
\end{subfigure}\hfill
\begin{subfigure}[t]{0.18\linewidth}
  \centering
  \includegraphics[width=\linewidth]{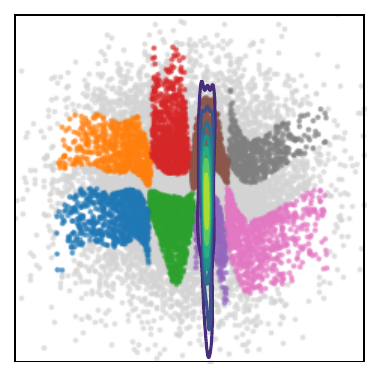}
  \caption{{\scriptsize VFM (no EMA)}}
  \label{fig:vfm-no-ema}
\end{subfigure}\hfill
\begin{subfigure}[t]{0.18\linewidth}
  \centering
  \includegraphics[width=\linewidth]{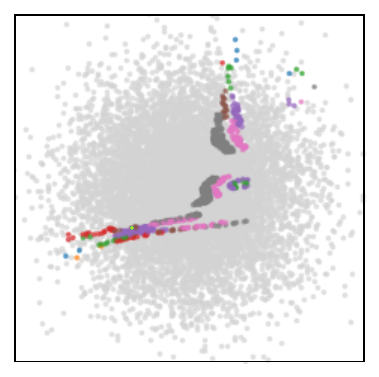}
  \caption{{\scriptsize VFM (no KL)}}
  \label{fig:vfm-no-kl}
\end{subfigure}\hfill
\begin{subfigure}[t]{0.18\linewidth}
  \centering
  \includegraphics[width=\linewidth]{figures/checkerboard/viz/main/vfm_ema_y_0.5_latent.pdf}
  \caption{{\scriptsize VFM}}
  \label{fig:vfm}
\end{subfigure}

\caption{Ablation of VFM with respect to key modeling choices in the loss. Observation in black dots and $\sigma = 0.1$. For VFM (\ref{fig:vfm-no-ema}, \ref{fig:vfm-no-kl}, \ref{fig:vfm}), we used $\tau = 100.0$, $\alpha = 1.0$ and $K = 4$. We observe that \ref{fig:frozen}: \texttt{frozen-$\theta$} fails to capture the bimodal nature of the posterior; \ref{fig:unconstrained}: \texttt{unconstrained-$\theta$} produces many off-manifold samples; \ref{fig:vfm-no-ema}: removing EMA from the term $\mathcal{L}_{\text{obs}}$ in VFM also produces many off-manifold samples when $\tau$ is large; \ref{fig:vfm-no-kl}: removing the KL term $\mathcal{L}_{\text{KL}}$ in the VFM loss leads to unstable optimization and results in poor approximations of both the prior and posterior.}
\label{fig:various-configs}
\end{figure}

\begin{figure}[h]
\centering
\begin{subfigure}[t]{0.18\linewidth}
  \centering
  \includegraphics[width=\linewidth]{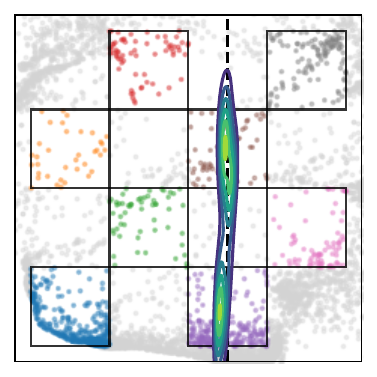}
\end{subfigure}\hfill
\begin{subfigure}[t]{0.18\linewidth}
  \centering
  \includegraphics[width=\linewidth]{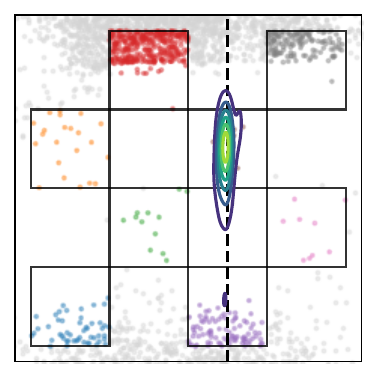}
\end{subfigure}\hfill
\begin{subfigure}[t]{0.18\linewidth}
  \centering
  \includegraphics[width=\linewidth]{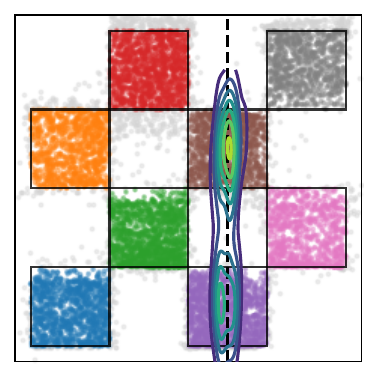}
\end{subfigure}\hfill
\begin{subfigure}[t]{0.18\linewidth}
  \centering
  \includegraphics[width=\linewidth]{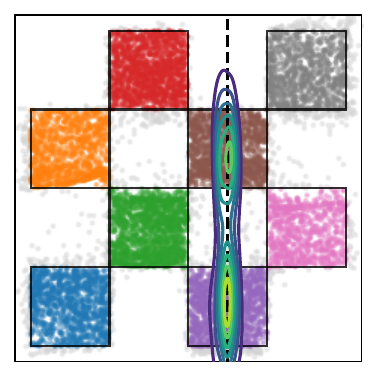}
\end{subfigure}
\hfill
\begin{subfigure}[t]{0.18\linewidth}
  \centering
  \includegraphics[width=\linewidth]{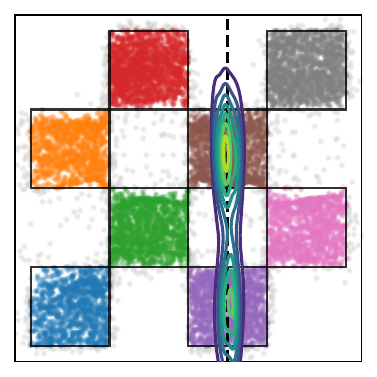}
\end{subfigure}

\begin{subfigure}[t]{0.18\linewidth}
  \centering
  \includegraphics[width=\linewidth]{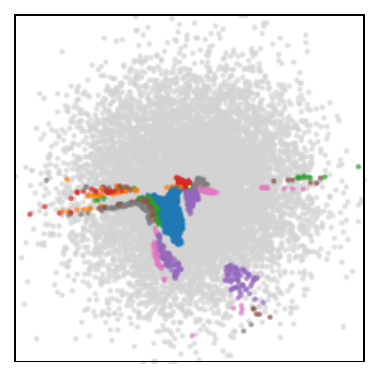}
  \caption{$\tau = 0.01$}
  \label{fig:tau-0.01}
\end{subfigure}\hfill
\begin{subfigure}[t]{0.18\linewidth}
  \centering
  \includegraphics[width=\linewidth]{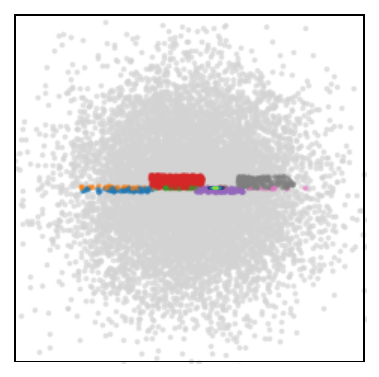}
  \caption{$\tau = 0.1$}
  \label{fig:tau-0.1}
\end{subfigure}\hfill
\begin{subfigure}[t]{0.18\linewidth}
  \centering
  \includegraphics[width=\linewidth]{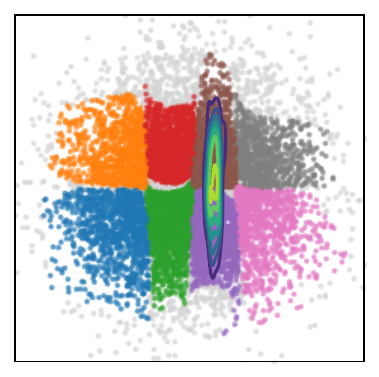}
  \caption{$\tau = 1.0$}
  \label{fig:tau-1.0}
\end{subfigure}\hfill
\begin{subfigure}[t]{0.18\linewidth}
  \centering
  \includegraphics[width=\linewidth]{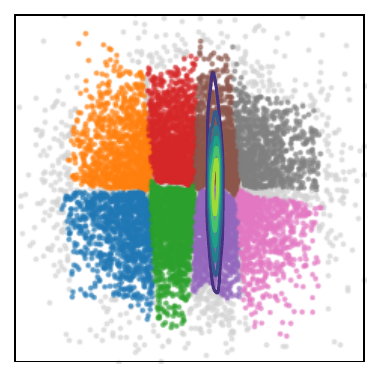}
  \caption{$\tau = 10.0$}
  \label{fig:tau-10.0}
\end{subfigure}\hfill
\begin{subfigure}[t]{0.18\linewidth}
  \centering
  \includegraphics[width=\linewidth]{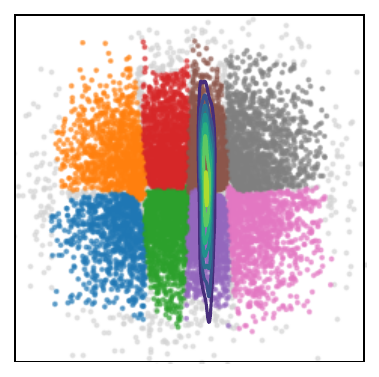}
  \caption{$\tau = 100.0$}
  \label{fig:tau-100.0}
\end{subfigure}

\caption{Ablation of VFM with respect to the $\tau$ parameter. For each plot, we set $\alpha = 1.0$ and $K=4$. We fix $y=0.5$ and $\sigma=0.1$. We observe that for $\tau \lesssim \sigma$, the quality of prior/posterior approximations are poor, yielding many off-manifold samples. This is likely due to the difficulty of optimization as we tighten the correspondence between $x$ and $z$. For $\tau \geq 1$, we observe significant improvements in results and surprising robustness with respect to large values of $\tau$.}
\label{fig:tau-ablation}
\end{figure}

\begin{figure}[h]
\centering
\begin{subfigure}[t]{0.18\linewidth}
  \centering
  \includegraphics[width=\linewidth]{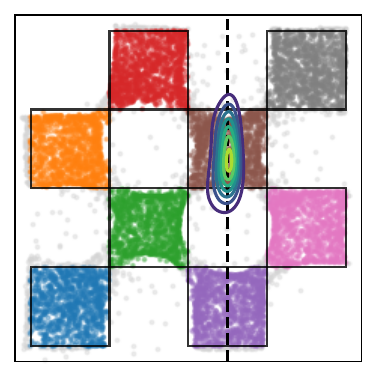}
\end{subfigure}\hfill
\begin{subfigure}[t]{0.18\linewidth}
  \centering
  \includegraphics[width=\linewidth]{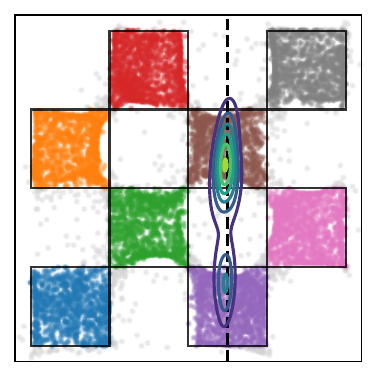}
\end{subfigure}\hfill
\begin{subfigure}[t]{0.18\linewidth}
  \centering
  \includegraphics[width=\linewidth]{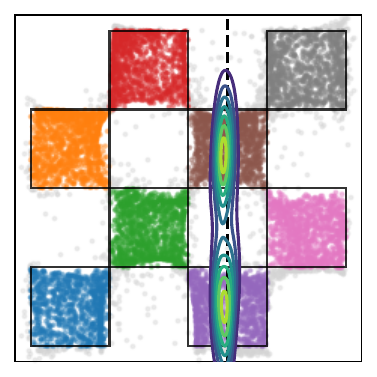}
\end{subfigure}\hfill
\begin{subfigure}[t]{0.18\linewidth}
  \centering
  \includegraphics[width=\linewidth]{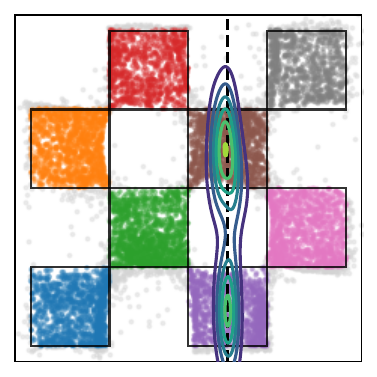}
\end{subfigure}
\hfill
\begin{subfigure}[t]{0.18\linewidth}
  \centering
  \includegraphics[width=\linewidth]{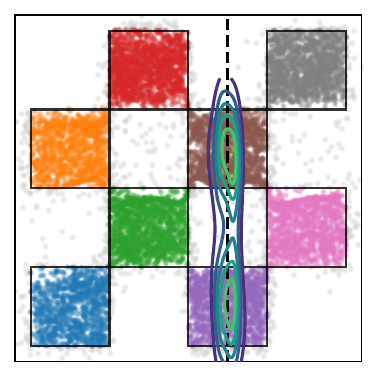}
\end{subfigure}

\begin{subfigure}[t]{0.18\linewidth}
  \centering
  \includegraphics[width=\linewidth]{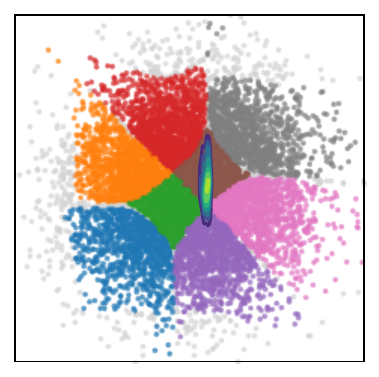}
  \caption{$\alpha = 0.0$}
\end{subfigure}\hfill
\begin{subfigure}[t]{0.18\linewidth}
  \centering
  \includegraphics[width=\linewidth]{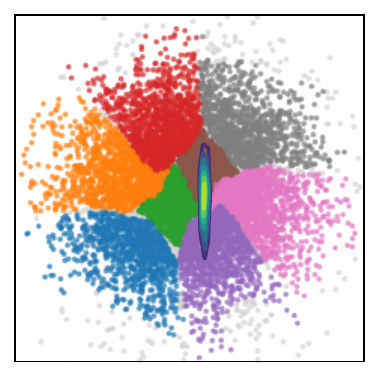}
  \caption{$\alpha = 0.25$}
\end{subfigure}\hfill
\begin{subfigure}[t]{0.18\linewidth}
  \centering
  \includegraphics[width=\linewidth]{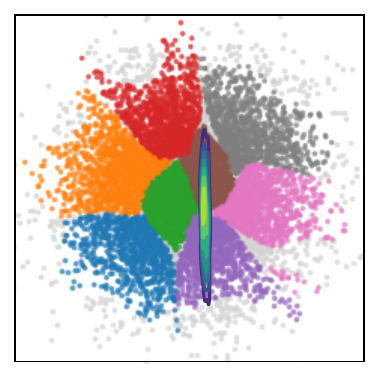}
  \caption{$\alpha = 0.5$}
\end{subfigure}\hfill
\begin{subfigure}[t]{0.18\linewidth}
  \centering
  \includegraphics[width=\linewidth]{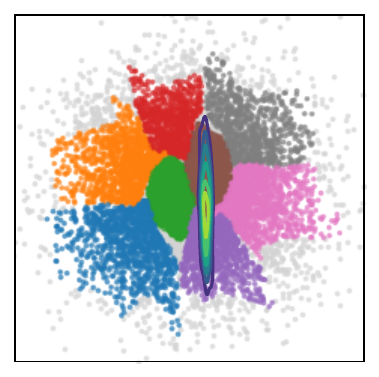}
  \caption{$\alpha = 0.75$}
\end{subfigure}\hfill
\begin{subfigure}[t]{0.18\linewidth}
  \centering
  \includegraphics[width=\linewidth]{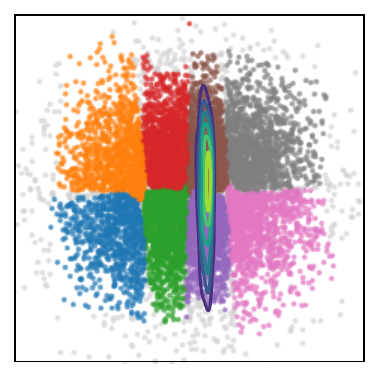}
  \caption{$\alpha = 1.0$}
\end{subfigure}

\caption{Ablation of VFM with respect to the $\alpha$ parameter. For each plot, we set $\tau = 100.0$ and $K=4$. We observe that the warping of the latent space becomes stronger as $\alpha \rightarrow 1$, making it easier to sample from the bimodal posterior using the simple Gaussian variational posterior in latent space.}
\label{fig:alpha-ablation}
\end{figure}

\clearpage

\subsection{ImageNet experiment}\label{app:imagenet-exp}

In this section, we provide a detailed breakdown of the architectures, training objectives, and the extensive tuning process conducted for the baselines used in the ImageNet $256 \times 256$ experiments.

\subsubsection{Model Architectures}

\paragraph{Flow Map Backbone ($f_\theta$).} We employ a SiT-B/2 architecture \cite{ma_sit_2024} (130M parameters) initialized from a flow-matching model pre-trained for 80 epochs. Following the design of Decoupled Mean Flow (DMF) \cite{lee2025decoupled}, we utilize decoupled encoder/decoder embeddings for the timesteps to better capture the flow dynamics. Our fine-tuning is performed for 100 epochs, making the total training process to be 180 epochs.

\paragraph{Noise Adapter ($q_\phi$).} To map high-dimensional observations $y$ and inverse problem classes $c$ to the latent noise distribution $q_\phi(z|y, c)$, we design a lightweight U-Net style adapter (10M parameters). The adapter is conditioned on the inverse problem class $c$ using Feature-wise Linear Modulation (FiLM) \cite{perez2017filmvisualreasoninggeneral}. The class embedding modifies the features at multiple resolutions via affine transformations $\gamma \cdot x + \beta$. The network processes the $256 \times 256$ input observation through a series of residual blocks and downsampling layers (channel multipliers: 1, 2, 4, 4), which compresses the spatial resolution to $32 \times 32$. The final projection layer outputs the mean $\mu$ and log-variance $\log \sigma^2$ (clamped between -10.0 and 2.0) for the latent distribution, from which we sample $z$ using the reparameterization trick.

\paragraph{Latent space encoding.}
Modern generative models often operate in a lower-dimensional latent space obtained via an autoencoder or similar compression mechanism \cite{rombach2022high}. We adopt this setting in this experiment by defining the flow map and adapter in the latent space of SD-VAE \cite{rombach2022high} rather than pixel space, and applying the forward operator to decoded samples. Measurement encoding can be incorporated directly into the adapter architecture; specifically, our U-Net-based architecture for the adapter maps the high-dimensional input observations into a lower-dimensional latent representation, which allows us to estimate the mean ($\mu$) and variance ($\sigma$) within the latent space.

\paragraph{Training.} For VFM training, we employ standard model guidance techniques during the flow map training phase. Following previous works \cite{tang2025diffusion, lee2025decoupled}, we utilize a prefixed CFG probability to redefine the target velocity, which allows us to perform robust one-step generation during sampling. After extensive experiments and ablations on the $\tau$ parameter, we found that setting the coefficient of $\mathcal{L}_{data}$ to $1.0$ works best in practice. All training and inference were conducted using $8$ and $1$ NVIDIA GH$200$ GPUs, respectively.

\subsubsection{Baselines and Tuning}

We compare VFM against a comprehensive suite of guidance-based solvers. A major challenge in this comparison is the high sensitivity of these methods to hyperparameters. To ensure a fair comparison, we performed an exhaustive hyperparameter sweep for every baseline, task, and backbone. We found that most inference-time methods require significant per-task tuning, which makes them computationally burdensome compared to the one-step nature of VFM.

Unless otherwise stated, all baselines use 250 ODE steps. To maximize their performance, we also applied Classifier-Free Guidance (CFG) with a scale of 2.0, which we found empirically boosts results across methods, even those that do not originally prescribe it.

\paragraph{Latent DPS \cite{chung2024diffusionposteriorsamplinggeneral}.} We extend Diffusion Posterior Sampling (DPS)  to the latent flow matching setting. Through extensive sweeping, we identified a novel gradient scaling technique that provided the best stability. We normalize the likelihood gradient update to have a magnitude of 1, i.e., using a step size of $1/||\nabla_z \log p(y|z)||$.

\paragraph{Latent DAPS \cite{zhang2025improving}.} We implemented DAPS  in the latent flow matching space, strictly following the original paper's settings. This involves 5 ODE rollout steps followed by 50 annealing steps and 50 Langevin steps, which makes the optimization extremely slow.

\paragraph{PSLD \cite{rout2023solving}} Our implementation follows the original paper. We tuned the coefficients and found the optimal values to match the original recommendations, where DPS and gluing coefficients are chosen to be 1.0 and 0.1, respectively.

\paragraph{MPGD \cite{he2023manifoldpreservingguideddiffusion}.} We extended Manifold Preserving Guidance (MPGD)  to flow matching, which approximates the Jacobian as identity. We utilized DDIM-type deterministic velocity maps and, similar to DPS, found that a gradient scaling of $1/||\nabla||$ yielded the best performance.

\paragraph{FlowChef \cite{patel2025flowchef}.} We followed the exact implementation from the original paper. After heavy tuning, we found that it behaved similarly to MPGD and performed best with the $1/||\nabla||$ gradient scaling.

\paragraph{FlowDPS \cite{kim2025flowdps}.} We followed the official implementation. Tuning revealed that a larger step size coefficient of $10.0/||\nabla||$ was optimal. We adhered to the original protocol of repeating the update 3 times per ODE iteration. Importantly, we disabled the stochasticity parameter as it was found to degrade performance, instead we relied on deterministic velocity updates.

\subsubsection{Metrics and Evaluation}
We evaluate performance using two distinct categories of metrics:

\paragraph{Pixel-Space Fidelity (PSNR/SSIM).} While we report these standard metrics, we note that inference-time optimization methods (like DPS) tend to produce smooth estimates that maximize these scores by converging toward the conditional mean. This often results in a loss of high-frequency texture and realistic detail \cite{zhang2018unreasonable}.

\paragraph{Semantic and Distributional Fidelity (LPIPS, FID, MMD, CRPS).} To assess whether the model captures the true posterior distribution rather than just the mean, we prioritize metrics in embedding space. We evaluate methods by using standard LPIPS and FID by using 1024 reconstructions from the validation set of ImageNet. We further evaluate Maximum Mean Discrepancy (MMD) metric in the embedding space of Inception network (used also in FID). This measures the distance between the true and approximate posterior distributions in the semantic space. To evaluate the generation quality along with its diversity (which is very important in posterior sampling and uncertainty quantification), we also use the Continuous Ranked Probability Score (CRPS) scoring rule. It assesses the calibration and coverage of the posterior. We compute this in the embedding spaces of both Inception and DINO models to ensure semantic consistency. Refer to Appendix~\ref{app:metrics} for further details on the computation of MMD and CRPS.

We evaluate PSNR, SSIM, LPIPS, FID, and MMD on the randomly selected 1024 samples from the validation set of the ImageNet dataset. As for CRPS metric, we generate 10 different reconstructions of 128 samples from validation set. We follow this recipe for all the baselines and our VFM experiments, except for Latent DAPS, where, due to the slower generation we only generated 128 samples instead of 1024 (all the rest of the settings are followed as stated above).

Our results show that while baselines may achieve high PSNR/SSIM due to mean-seeking behavior, VFM significantly outperforms them on distributional metrics (FID, MMD, CRPS), which indicates superior perceptual quality and a more accurate approximation of the complex posterior. We also observe that generating multiple samples through VFM in 1-step and then taking the average smoothes the reconstructions, which achieves competitive or better PSNR/SSIM values as well. 

\paragraph{Projection trick} Measurement space projection is very common to improve the pixel-wise metrics (PSNR/SSIM) in guidance world. In most of the methods (also gluing term in PSLD), it is common to use projection to guide the samples further towards measurement space \cite{rout2023solving, chung2022improving, wang2022zero}. Specifically, given that we have a generation $z_0$ and observation $y$, we can project generated samples by applying $\hat z_0 = \mathcal{E}(A^Ty + (I-A^TA)\mathcal{D}(z_0))$, where $\mathcal{E}$ and $\mathcal{D}$ denotes encoder and decoder, respectively. We found this useful in inpainting and gaussian debluring tasks, where the 1-step output of VFM is corrected by this formula.

\subsubsection{Inverse Problems and Evaluation Setup}

We evaluate VFM and all baselines on a diverse set of standard linear inverse problems frequently used in the literature. Our VFM model was trained jointly to handle denoising, random inpainting, box inpainting, super-resolution, Gaussian deblurring, and motion deblurring via the amortized conditioning mechanism described in Section~\ref{sec:amortization}.

For quantitative evaluation, we focus on the structurally challenging tasks (inpainting, super-resolution, and deblurring) and omit pure denoising. To ensure a rigorous and fair comparison, all baselines utilize the exact same pre-trained SiT-B/2 backbone that was used to initialize VFM. This strictly isolates the performance differences to the sampling method (iterative guidance vs. one-step VFM) rather than the generative prior quality. Consequently, the reported numbers for VFM can serve as a reliable reference for future benchmarking on the SiT-B/2 architecture. We also followed the best practices from SiT-B/2 unconditional sampling to get the best results.

The specific forward operators for the evaluated tasks are defined as follows. For random inpainting, we apply a random noise mask where the occlusion probability is sampled uniformly from the interval $(0.3, 0.7)$ for each image. In the case of box inpainting, we utilize a rectangular mask with a random location and aspect ratio, where the height and width are sampled independently from the interval $(32, 128)$. Super-resolution (x4) is implemented by downsampling the input image by a factor of 4 using bicubic interpolation. Finally, for the deblurring tasks, we employ a $61 \times 61$ kernel size, using a standard deviation of $\sigma=3.0$ for Gaussian deblurring and an intensity value of $0.5$ for motion deblurring. Additionally, for all inverse problems, the measurements are further corrupted by additive Gaussian noise with a standard deviation of $\sigma=0.05$.

\begin{table*}[t]
\centering
\resizebox{\textwidth}{!}{%
\begin{tabular}{l|l|c|cccccccc}
\toprule
\textbf{Task} & \textbf{Method} & \textbf{NFE} & \textbf{PSNR ($\uparrow$)} & \textbf{SSIM ($\uparrow$)} & \textbf{LPIPS ($\downarrow$)} & \textbf{FID ($\downarrow$)} & \textbf{MMD ($\downarrow$)} & \textbf{CRPS\textsubscript{DINO} ($\downarrow$)} & \textbf{CRPS\textsubscript{Inc} ($\downarrow$)} & \textbf{Time (s) ($\downarrow$)} \\
\midrule
\multirow{10}{*}{\shortstack[l]{Inpaint\\(random)}} 
& Latent DPS   & 250$\times$2 & \underline{26.01} & \underline{0.721} & \underline{0.337} & 55.81 & 0.113 & 0.472 & 0.363 & 7.2164 \\
& Latent DAPS  & 250$\times$2 & 25.09 & 0.671 & 0.384 & -- & -- & 0.474 & \textbf{0.356} & 44.347 \\
& PSLD         & 250$\times$2 & 25.63 & 0.713 & 0.338 & 56.13 & 0.123 & \underline{0.462} & 0.386 & 10.286 \\
& MPGD         & 250$\times$2 & \textbf{26.03} & 0.720 & 0.339 & 55.82 & 0.112 & 0.470 & 0.363 & 7.3512 \\
& FlowChef     & 250$\times$2 & \underline{26.01} & 0.720 & 0.338 & \underline{55.73} & \underline{0.111} & 0.471 & 0.364 & 7.3885 \\
& FlowDPS      & 250$\times$2 & 25.80 & \textbf{0.729} & 0.344 & 62.62 & 0.139 & 0.557 & 0.453 & 14.054 \\
\cmidrule{2-11}
& \texttt{frozen}-$\theta$ & 1 & 21.07 & 0.534 & 0.530 & 126.45 & 0.236 & 0.787 & 0.580 & \textbf{0.015} \\
\rowcolor{lightpink}
& VFM (ours)  & 1 / 10 & 23.59 / 24.89 & 0.598 / 0.677 & 0.367 / \textbf{0.336} & \textbf{51.35} & \textbf{0.110} & \textbf{0.447} & \underline{0.444} & \underline{0.025} / 0.252 \\
\midrule
\multirow{8}{*}{\shortstack[l]{Super-res.\\($\times$4)}} 
& Latent DPS   & 250$\times$2 & 23.91 & 0.641 & \underline{0.388} & 68.73 & \underline{0.154} & 0.554 & 0.447 & 7.4195 \\
& Latent DAPS  & 250$\times$2 & 21.73 & 0.511 & 0.473 & -- & -- & 0.575 & \underline{0.400} & 44.424 \\
& PSLD         & 250$\times$2 & 23.92 & 0.639 & 0.401 & 74.59 & 0.169 & 0.565 & 0.453 & 10.375 \\
& MPGD         & 250$\times$2 & 23.93 & 0.642 & \underline{0.388} & 69.01 & 0.157 & \underline{0.553} & 0.446 & 7.3801 \\
& FlowChef     & 250$\times$2 & 23.91 & 0.641 & \underline{0.388} & \underline{68.63} & \underline{0.154} & \underline{0.553} & 0.447 & 7.4914 \\
& FlowDPS      & 250$\times$2 & \underline{24.13} & \underline{0.655} & 0.413 & 81.47 & 0.193 & 0.633 & 0.547 & 14.303 \\
\cmidrule{2-11}
& \texttt{frozen}-$\theta$ & 1 & 20.61 & 0.469 & 0.557 & 148.50 & 0.270 & 0.837 & 0.637 & \textbf{0.015} \\
\rowcolor{lightpink}
& VFM (ours)  & 1 / 10 & 22.69 / \textbf{24.16} & 0.600 / \textbf{0.658} & \textbf{0.382} & \textbf{47.61} & \textbf{0.068} & \textbf{0.539} & \textbf{0.392} & \textbf{0.015} / \underline{0.148} \\
\midrule
\multirow{10}{*}{\shortstack[l]{Motion\\deblur}} 
& Latent DPS   & 250$\times$2 & 22.17 & 0.555 & 0.478 & 103.35 & \underline{0.203} & 0.716 & 0.519 & 7.5214 \\
& Latent DAPS  & 250$\times$2 & 21.26 & 0.499 & 0.480 & -- & -- & \textbf{0.558} & \textbf{0.392} & 46.691 \\
& PSLD         & 250$\times$2 & 21.62 & 0.537 & 0.516 & 136.63 & 0.260 & 0.819 & 0.588 & 10.129 \\
& MPGD         & 250$\times$2 & \underline{22.20} & \underline{0.557} & 0.478 & \underline{102.97} & \underline{0.203} & 0.715 & 0.519 & 7.5031 \\
& FlowChef     & 250$\times$2 & 22.18 & 0.556 & \underline{0.477} & 103.35 & \underline{0.203} & 0.715 & 0.519 & 7.4681 \\
& FlowDPS      & 250$\times$2 & \textbf{22.31} & \textbf{0.579} & 0.498 & 122.09 & 0.240 & 0.804 & 0.597 & 14.715 \\
\cmidrule{2-11}
& \texttt{frozen}-$\theta$ & 1 & 18.30 & 0.348 & 0.651 & 214.29 & 0.365 & 1.099 & 0.720 & \textbf{0.015} \\
\rowcolor{lightpink}
& VFM (ours)  & 1 / 10 & 18.72 / 20.22 & 0.400 / 0.506 & 0.480 / \textbf{0.471} & \textbf{60.28} & \textbf{0.098} & \underline{0.683} & \underline{0.421} & \textbf{0.015} / \underline{0.148} \\ 
\bottomrule
\end{tabular}%
}
\vspace{1mm}
\caption{Quantitative comparison on ImageNet for various inverse problems. Best results are in \textbf{bold}, second best are \underline{underlined}. $\uparrow$: higher is better, $\downarrow$: lower is better.}
\label{tab:main_results-extended}
\end{table*}

\subsection{General Reward Alignment with VFM}
\label{app:reward_appendix}

In Section~\ref{sec:reward_main}, we introduced Variational Flow Maps for general reward alignment. Given a base data distribution $p_{data}(x)$ and a differentiable reward model $R(x,c)$ conditioned on context $c$ (e.g., a class label or text prompt), the goal of reward alignment is to sample from the reward-tilted distribution:
\begin{align}
        p_\text{reward}(x|c) \propto p_\text{data}(x) \exp(\beta R(x, c)),
\end{align}
where $\beta>0$ is a temperature parameter controlling the strength of the reward. In the context of a flow map $x=f_\theta(z)$, this target data distribution induces a corresponding target posterior in the latent noise space:
\begin{align}
    p(z|c) \propto p(z) \exp(\beta R(f_\theta(z), c)).
\end{align}

In the reward-alignment setting, there is no standard structural degradation $y = A(x)+\varepsilon$. However, reward maximization can still be naturally cast as an inverse problem. In this view, the context $c$ (e.g., a text prompt) takes the place of the observation. Probabilistically, we treat the evaluated reward $R(\cdot, c)$ as the (unnormalized) log-likelihood of this context given the generated sample. Substituting the standard inverse problem observation loss $-\frac{1}{2\sigma^2}\|y - A(f_\theta(z))\|^2$ with this reward-based log-likelihood $\lambda R(f_\theta(z), c)$ naturally yields the objective:

\begin{align}
    \mathcal{L}(\theta, \phi) = - \lambda \, \mathbb{E}_{c \sim p(c), z \sim q_\phi(z|c)}[R(f_\theta(z), c)] + \mathcal{L}_\text{KL}(\phi) + \frac{1}{2\tau^2}\mathcal{L}_\text{data}(\theta; \phi).
\end{align}

\textbf{Motivation.} Crucially, this fine-tuning objective is a principled Variational Inference (VI) formulation derived directly from the Evidence Lower Bound (ELBO). At its global optimum, it recovers the true reward-tilted distribution $p_\text{reward}(x|c)$. The $-\lambda R(f_\theta(z),c)$ term corresponds to maximizing the expected log-likelihood of the ideal observation, pushing the adapter to find, and the flow map to decode, regions of high reward. The $\mathcal{L}_\text{KL}$ term matches the variational posterior to the prior $p(z)$, preventing the latent space from collapsing to a single deterministic point. Finally, the $\mathcal{L}_\text{data}$ term ensures that the generator $f_\theta$ remains a valid transport map anchored to the true data manifold. By minimizing this principled objective, the framework naturally prevents the generator from collapsing into an adversarial state purely to cheat the reward model.

\begin{figure*}[t]
    \centering
    \includegraphics[width=\linewidth]{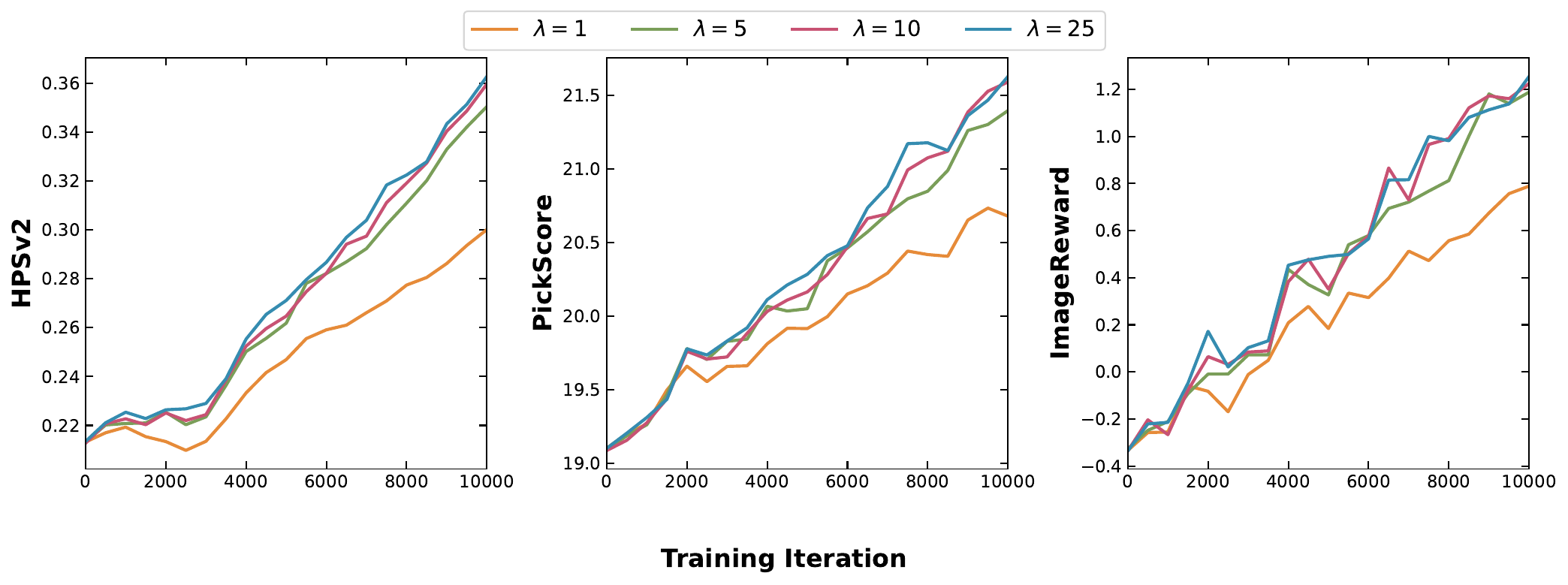}
    \caption{Quantitative evaluation of generated samples over 10,000 training iterations for varying values of $\lambda$. We report HPSv2 (left), PickScore (middle), and ImageReward (right). Higher scores indicate better alignment with human preferences.}
    \label{fig:reward_metrics}
\end{figure*}

\textbf{Fine-tuning setup.} We use the same adapter architecture as in the original VFM training, but instead of degraded observations, we map a fixed, learned spatial latent grid to $(\mu, \sigma)$ conditioned on the class label. In standard VFM training, an adaptive scaling function is applied to the entire loss to stabilize gradients. Since $-\lambda R(f_\theta(z),c)$ can take large negative values, applying this normalization to the full reward objective distorts gradient magnitudes unpredictably. We therefore apply adaptive normalization only to the flow-related terms during reward alignment tasks. Next, because we explicitly want to tilt the flow map toward high-reward regions, we pass gradients through the active trainable model during the reward loss calculation. This contrasts with standard VFM, which uses the EMA model to prevent the flow map from being updated by the observation loss. Following the reward alignment setup in Meta Flow Maps (MFM)~\cite{potaptchik2026metaflowmapsenable}, we use HPSv2~\cite{wu2023humanpreferencescorev2} during the fine-tuning stage. The adapter is amortized over all $1,000$ classes of ImageNet, and the reward is calculated using a fixed prompt \textit{"A high-resolution, high-quality photograph of a \{class\_name\}"} as utilized in MFM. Finally, we initialize VFM with the large DMF-XL/2+ model~\cite{lee2025decoupled} and fine-tune for $10,000$ iterations with a batch size of $64$ (corresponding to $\sim\!0.5$ epochs and taking only $6$ hours). The rest of the VFM training follows the standard procedure and parameter choices described throughout the paper.

\textbf{Multi-step observation.} We observe that one-step samples achieve the highest reward scores under the fine-tuned model, whereas multi-step samples tend to regress toward the unconditional ImageNet distribution. We attribute this to two factors. First, the reward loss is evaluated directly on the one-step map $z \to f_\theta(z)$. Second, unlike standard VFM training, we evaluate the reward loss through the active flow map rather than the EMA model. This explicitly tilts the model's one-step predictions, while its intermediate vector fields remain largely anchored to the unconditional data distribution. This discrepancy pulls multi-step trajectories back toward the base ImageNet data manifold. A natural remedy, computing the reward on a short $K$-step rollout (e.g., $K=3$) during training, would propagate the reward signal into the intermediate velocity field and is left as future work.

\textbf{Evaluation.} In addition to training with the HPSv2 reward, we evaluate the reward scores of VFM outputs during training based on various alignment metrics, including HPSv2~\cite{wu2023humanpreferencescorev2}, PickScore~\cite{kirstain2023pickapicopendatasetuser}, and ImageReward~\cite{xu2023imagerewardlearningevaluatinghuman}. Figure~\ref{fig:reward_metrics} shows the reward score progression throughout training, calculated every $500$ steps and averaged over $64$ random generations. While we explore the effects of varying the reward strength $\lambda$, we find that a default value of $\lambda=1$ achieves strong performance without the need for extensive hyperparameter tuning. As shown, VFM consistently boosts the reward across all alignment metrics.

\subsection{Additional Results}\label{app:imagenet-add_results}

In this section, we present a comprehensive set of qualitative results on ImageNet $256 \times 256$ to further validate the effectiveness of Variational Flow Maps.

\textbf{Qualitative Comparisons.} Figures~\ref{fig:imagenet-comp_inp_rand},~\ref{fig:imagenet-comp_inp_box},~\ref{fig:imagenet-comp_sr},~\ref{fig:imagenet-comp_g_blur},~\ref{fig:imagenet-comp_m_blur} provide side-by-side comparisons of VFM against seven state-of-the-art baselines across five distinct inverse problems. In all cases, VFM produces sharp, coherent, and consistent samples in a single forward pass, whereas baselines often exhibit artifacts or require hundreds of function evaluations to achieve comparable fidelity.

\textbf{Uncertainty Quantification.} A key advantage of VFM is its ability to learn a proper posterior distribution rather than collapsing to a single mode. In Figure~\ref{fig:imagenet-uncertainity}, we visualize the pixel-wise mean and standard deviation computed from multiple posterior samples (10 samples). The uncertainty maps clearly highlight that VFM localizes variance in ambiguous regions (e.g., occluded areas or fine details lost to blur), which provides valuable information about the posterior that is typically infeasible to extract with slow or mode-collapsing baselines.

\textbf{Structured Noise (``Make Some Noise").} Figure~\ref{fig:imagenet-noises} visualizes the internal operation of the noise adapter $q_\phi(z|y)$. We display the predicted latent mean $\mu$, the standard deviation $\sigma$, and the resulting reparameterized noise samples $z$. We observe strong structural patterns in the learned noise, which indicates that the adapter actively aligns the latent space to the data manifold. This validates our core premise: by ``learning the proper noise" via optimization, we bridge the guidance gap without requiring iterative steering.

\textbf{Diversity and Mode Coverage.} In Figures~\ref{fig:imagenet-diversity_2} and \ref{fig:imagenet-diversity_3}, we examine diverse generation scenarios. While the baselines frequently fail or collapse to a single (often incorrect) solution, VFM successfully generates diverse, high-quality samples that are all consistent with the measurements. We observe that greater ill-posedness naturally leads to higher diversity in our generations, confirming that the model captures the multimodal nature of the posterior.

\textbf{Unconditional Generation.} Figure~\ref{fig:imagenet-uncond_app} presents additional curated unconditional samples generated by the trained flow map. It further highlights the generative quality of our backbone model.

\textbf{Reward Alignment.} Figure~\ref{fig:reward_uncurated} provides additional uncurated samples generated by the fine-tuned flow map. VFM consistently samples from the target reward-tilted distribution. Furthermore, Figure~\ref{fig:reward_progress} illustrates the evolution of generated images across different fine-tuning iterations using fixed latent seeds, which highlights the rapid and stable adaptation of the model.

\begin{figure*}[h]
  \centering
  \includegraphics[width=\linewidth]{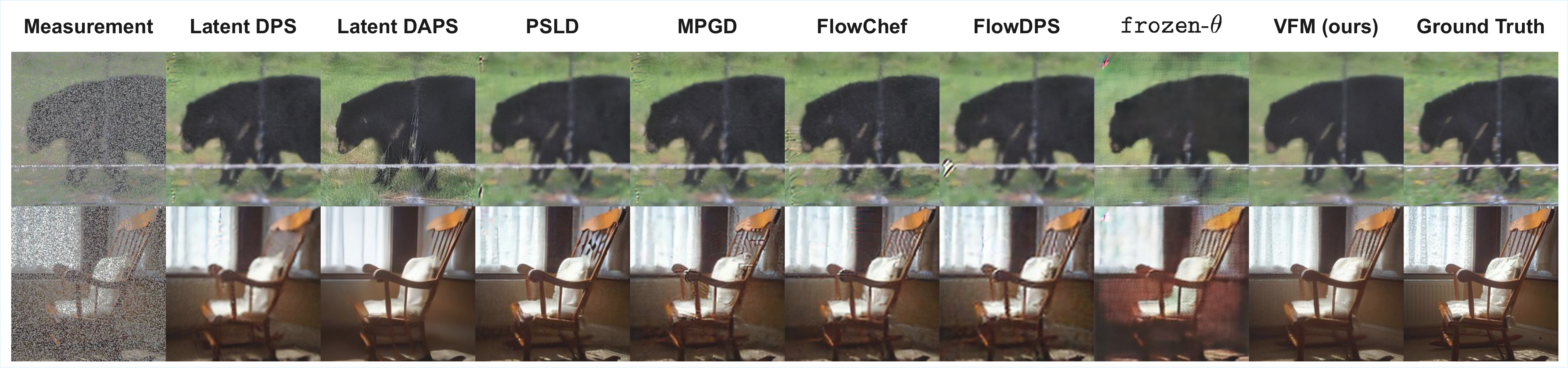}
  \caption{\textbf{Qualitative comparison on Random Inpainting.} We compare one-step VFM samples against seven baselines. VFM recovers fine details and texture consistent with the unmasked regions, while maintaining high perceptual quality.}
  \label{fig:imagenet-comp_inp_rand}
\end{figure*}

\begin{figure*}[h]
  \centering
  \includegraphics[width=\linewidth]{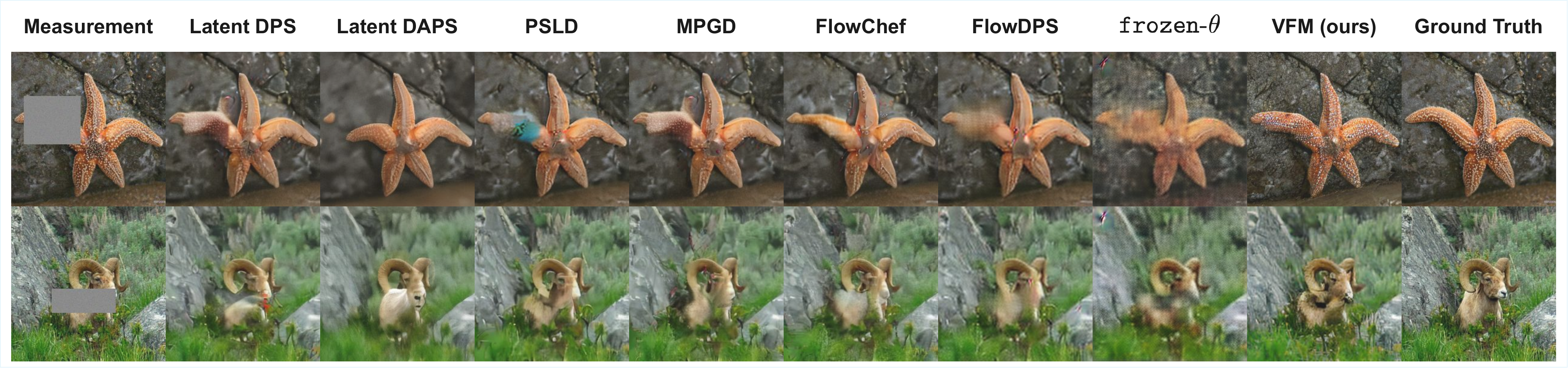}
  \caption{\textbf{Qualitative comparison on Box Inpainting.} Comparison of VFM against baselines for large occlusions. VFM generates plausible semantic content to fill the missing regions in a single step.}
  \label{fig:imagenet-comp_inp_box}
\end{figure*}

\begin{figure*}[h]
  \centering
  \includegraphics[width=\linewidth]{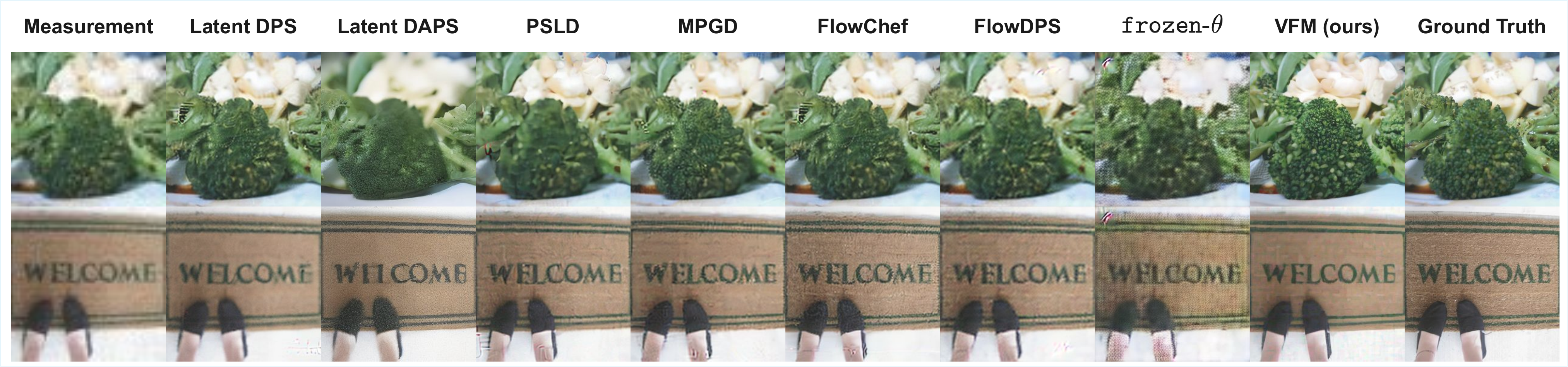}
  \caption{\textbf{Qualitative comparison on Super-Resolution ($\times 4$).} VFM effectively upsamples the low-resolution inputs, leading to sharp edges and realistic textures compared to the often over-smoothed baseline results.}
  \label{fig:imagenet-comp_sr}
\end{figure*}

\begin{figure*}[h]
  \centering
  \includegraphics[width=\linewidth]{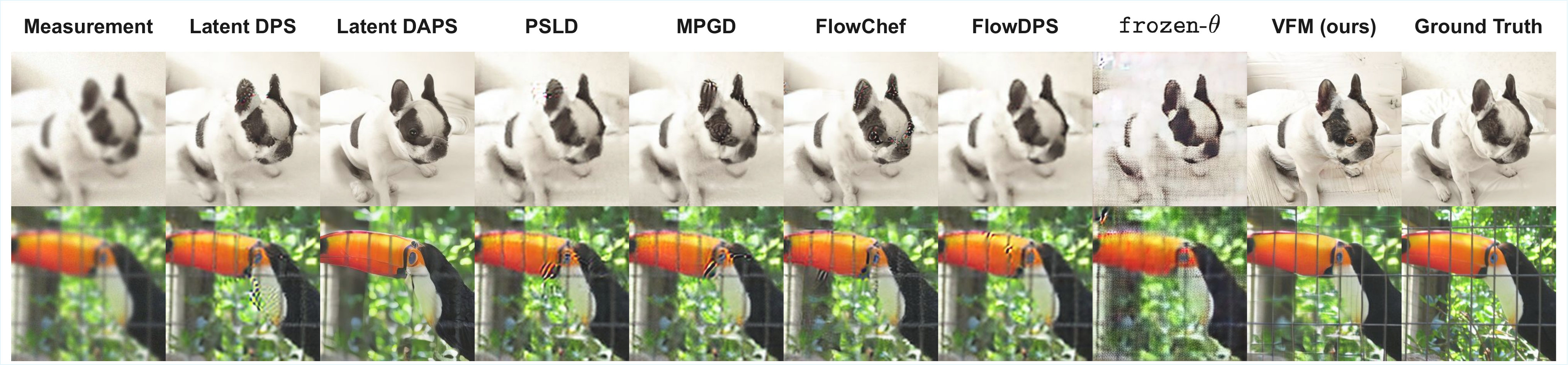}
  \caption{\textbf{Qualitative comparison on Gaussian Deblurring.} VFM successfully restores sharpness from heavily blurred observations ($\sigma=3.0$), and it also avoids the artifacts common in guidance-based methods.}
  \label{fig:imagenet-comp_g_blur}
\end{figure*}

\begin{figure*}[h]
  \centering
  \includegraphics[width=\linewidth]{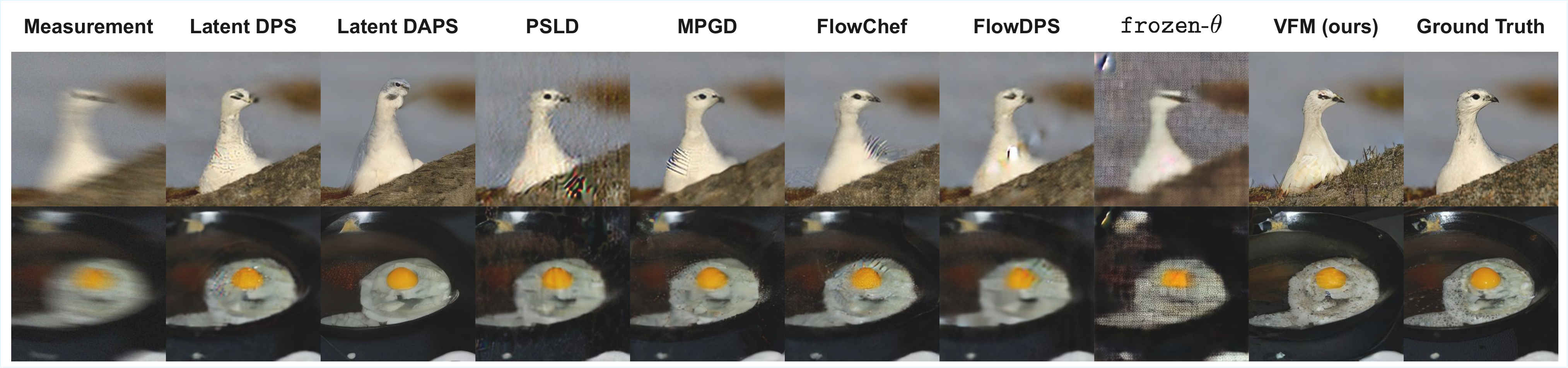}
  \caption{\textbf{Qualitative comparison on Motion Deblurring.} Comparison of deblurring performance on motion-blurred inputs. VFM resolves the motion streaks into coherent structures.}
  \label{fig:imagenet-comp_m_blur}
\end{figure*}

\begin{figure*}[h]
  \centering
  \includegraphics[width=\linewidth]{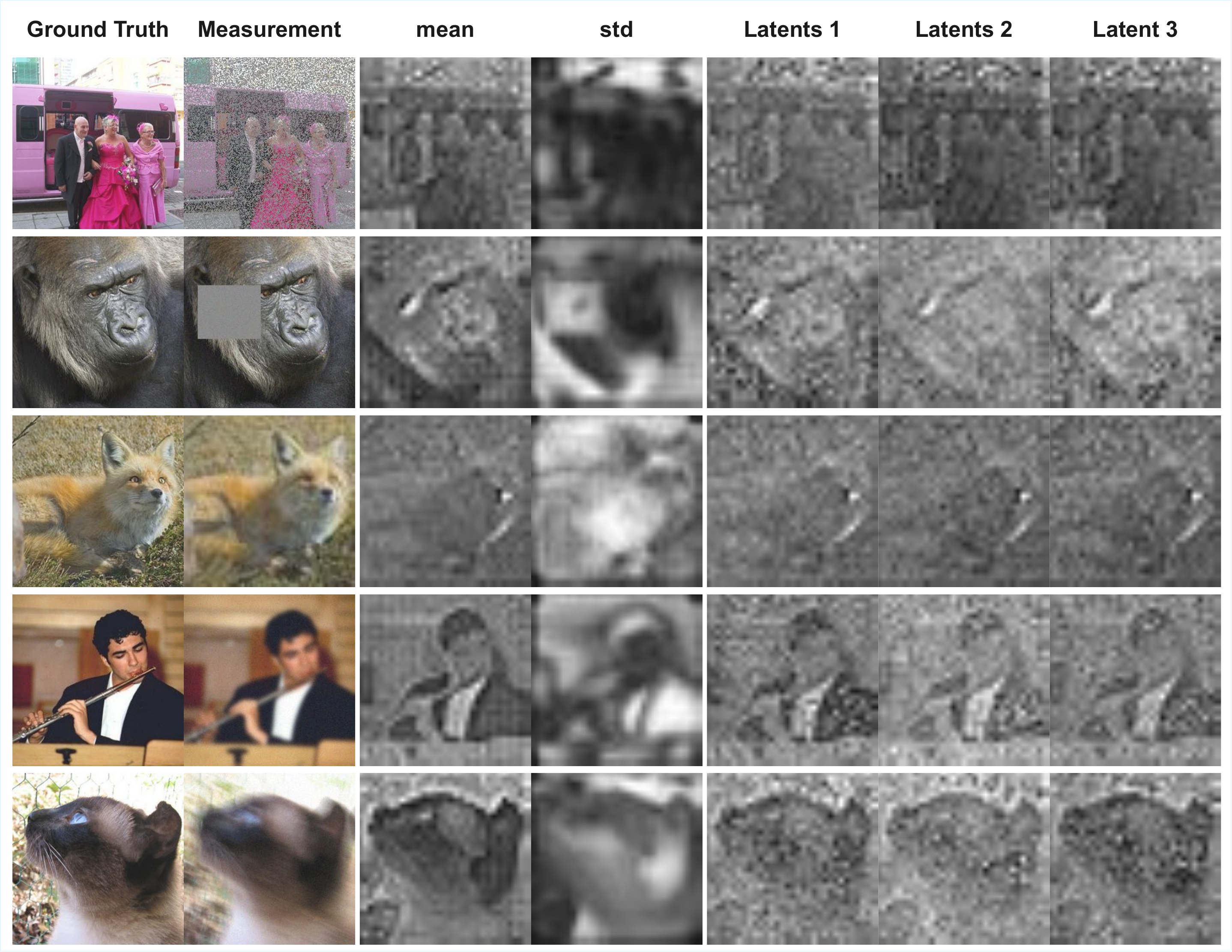}
  \caption{\textbf{Visualizing the Learned Noise Space.} We visualize the outputs of the noise adapter $q_\phi(z|y)$. From left to right: ground truth, measurement, the predicted latent mean $\mu$, standard deviation $\sigma$, and three independent latent samples drawn from the distribution. The visible structure in the ``noise" confirms that the adapter optimizes the latent initialization to align with the conditional data manifold. From top to bottom, rows correspond to: random inpainting, box inpainting, super-resolution, gaussian deblurring, and motion deblurring.}
  \label{fig:imagenet-noises}
\end{figure*}

\begin{figure*}[h]
  \centering
  \includegraphics[width=\linewidth]{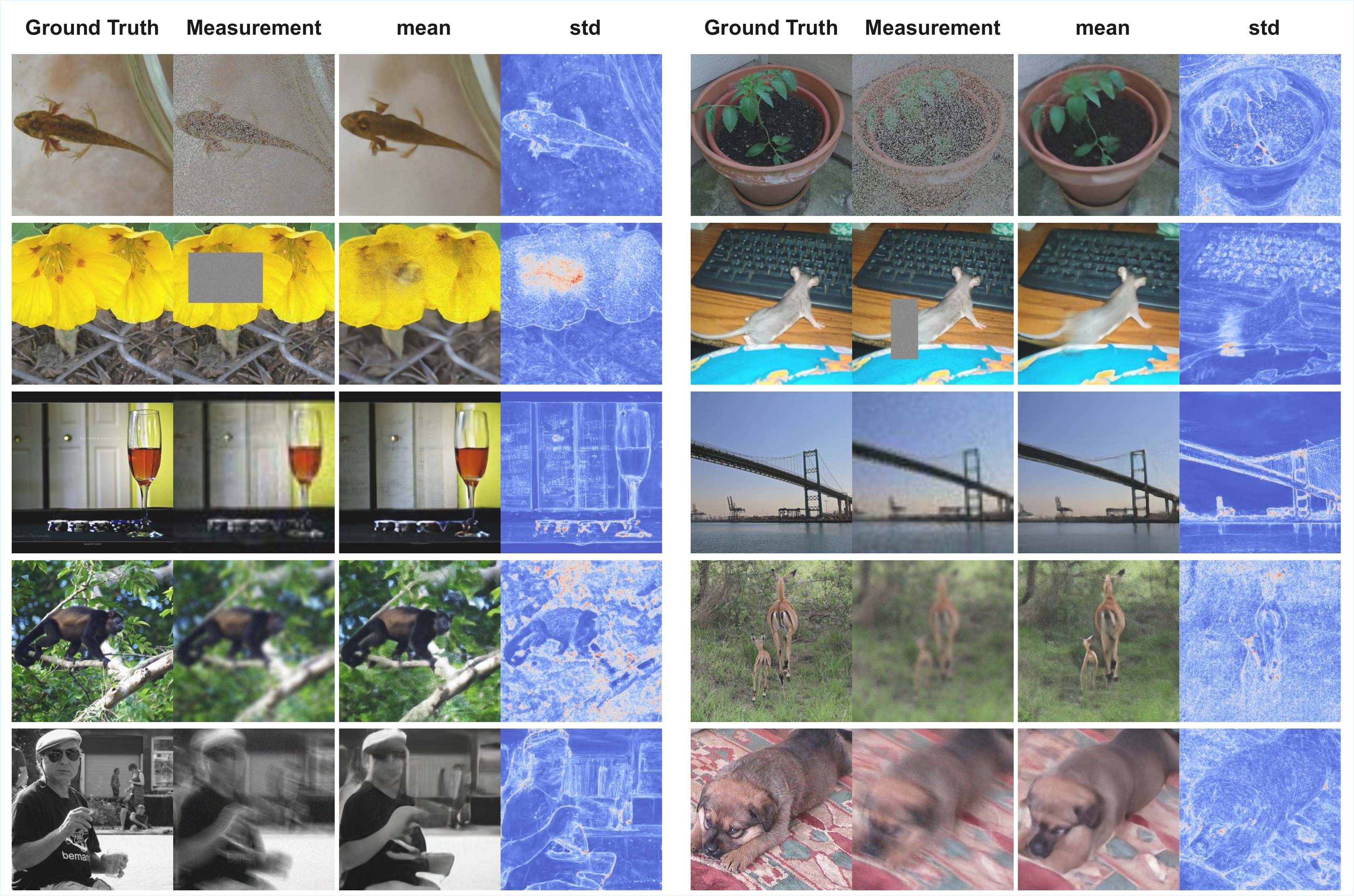}
  \caption{\textbf{Posterior Uncertainty Quantification.} We display the pixel-wise mean and standard deviation computed from 10 conditional samples generated by VFM. The standard deviation maps (right column) accurately capture the uncertainty inherent in the inverse problem, which highlights ambiguous regions where the model generates diverse solutions.}
  \label{fig:imagenet-uncertainity}
\end{figure*}

\begin{figure*}[h]
  \centering
  \includegraphics[width=\linewidth]{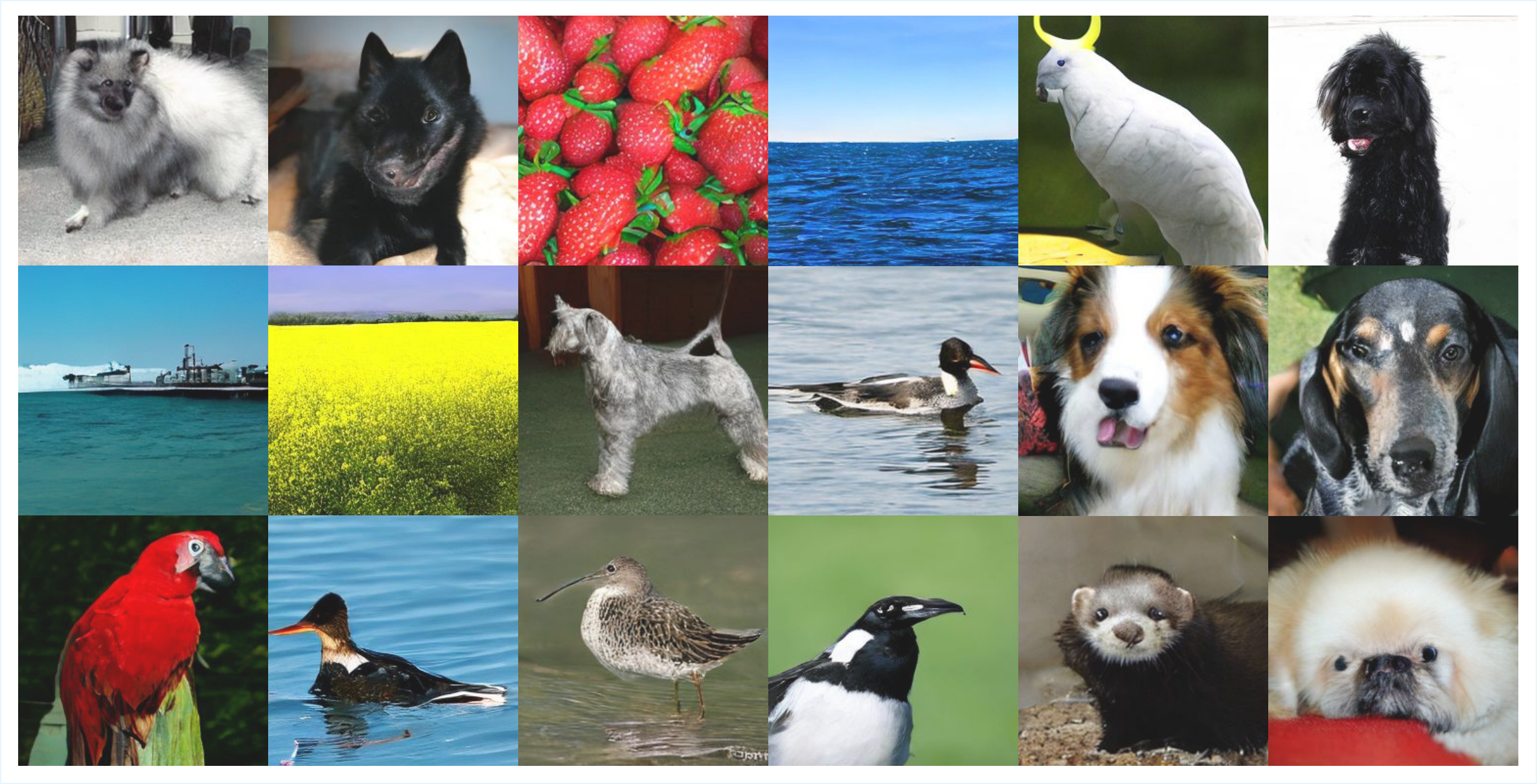}
  \caption{\textbf{Unconditional Samples.} Curated unconditional samples generated by the VFM.}
  \label{fig:imagenet-uncond_app}
\end{figure*}

\begin{figure*}[h]
  \centering
  \includegraphics[width=\linewidth]{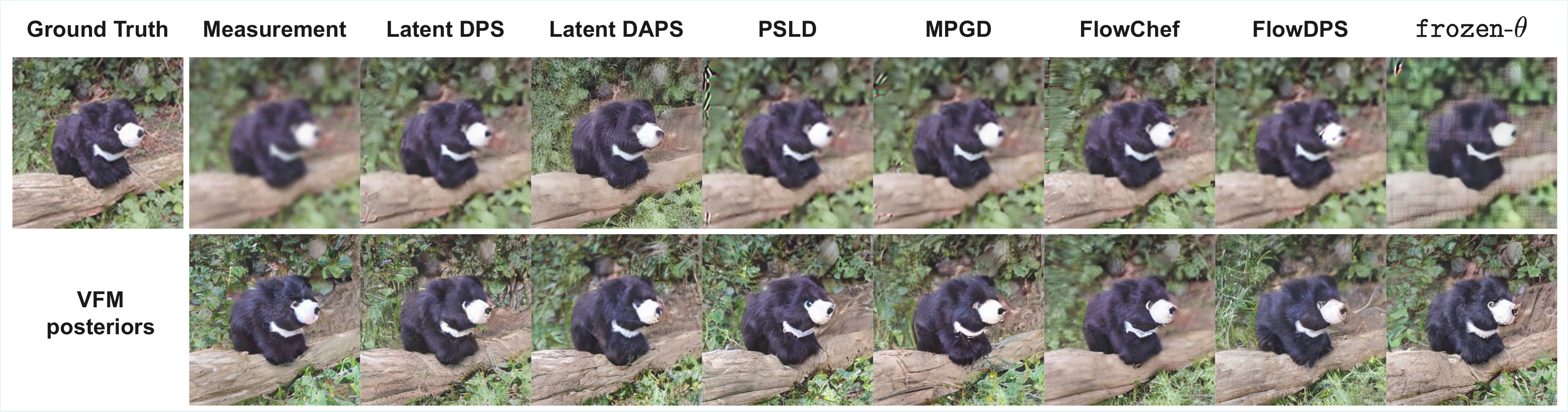}
  \caption{\textbf{Posterior Diversity (Sample Set 1).} Evaluation of sample diversity on gaussian deblurring. While baselines often collapse to a single mode or fail to produce valid results, VFM generates eight distinct, plausible, and measurement-consistent posterior samples.}
  \label{fig:imagenet-diversity_2}
\end{figure*}

\begin{figure*}[h]
  \centering
  \includegraphics[width=\linewidth]{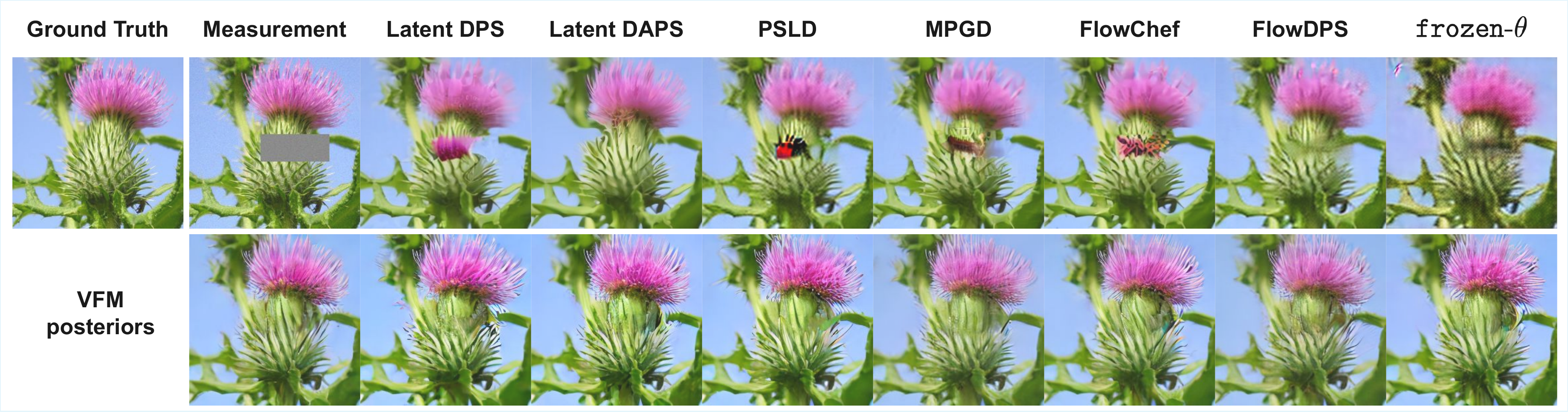}
  \caption{\textbf{Posterior Diversity (Sample Set 2).} Additional examples of diverse posterior sampling on box inpainting task. The high variance among the VFM samples reflects the multimodal nature of the posterior distribution for these ill-posed tasks.}
  \label{fig:imagenet-diversity_3}
\end{figure*}

\begin{figure*}[h]
  \centering
  \includegraphics[width=\linewidth]{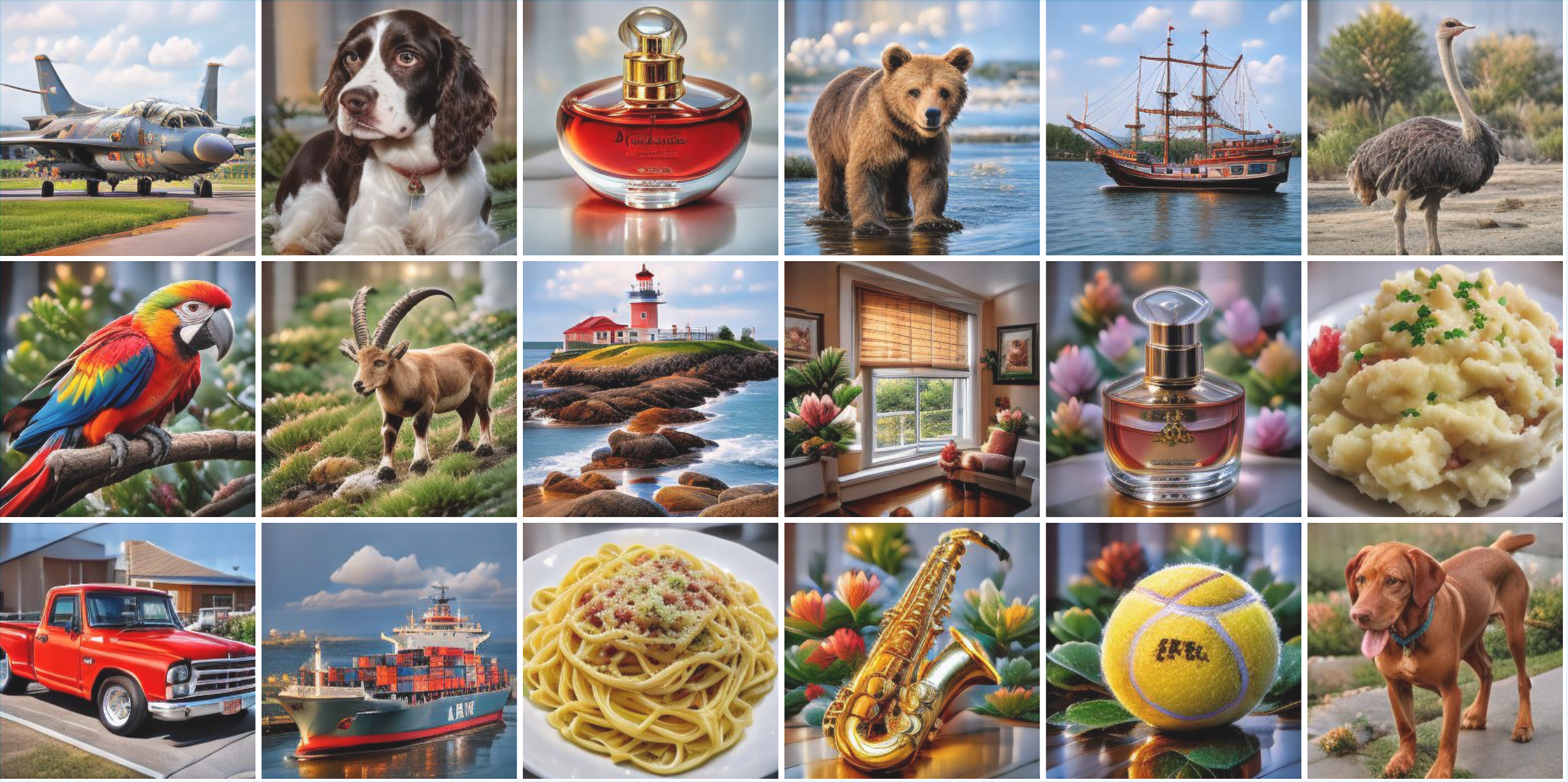}
  \caption{\textbf{Uncurated Samples from Reward Fine-Tuning ($\lambda=1$).} Additional uncurated samples generated by the fine-tuned flow map. VFM consistently samples high-quality images from the target reward-tilted distribution, resulting in enhanced aesthetic and perceptual quality in a single forward pass.}
  \label{fig:reward_uncurated}
\end{figure*}

\begin{figure*}[h]
  \centering
  \includegraphics[width=\linewidth]{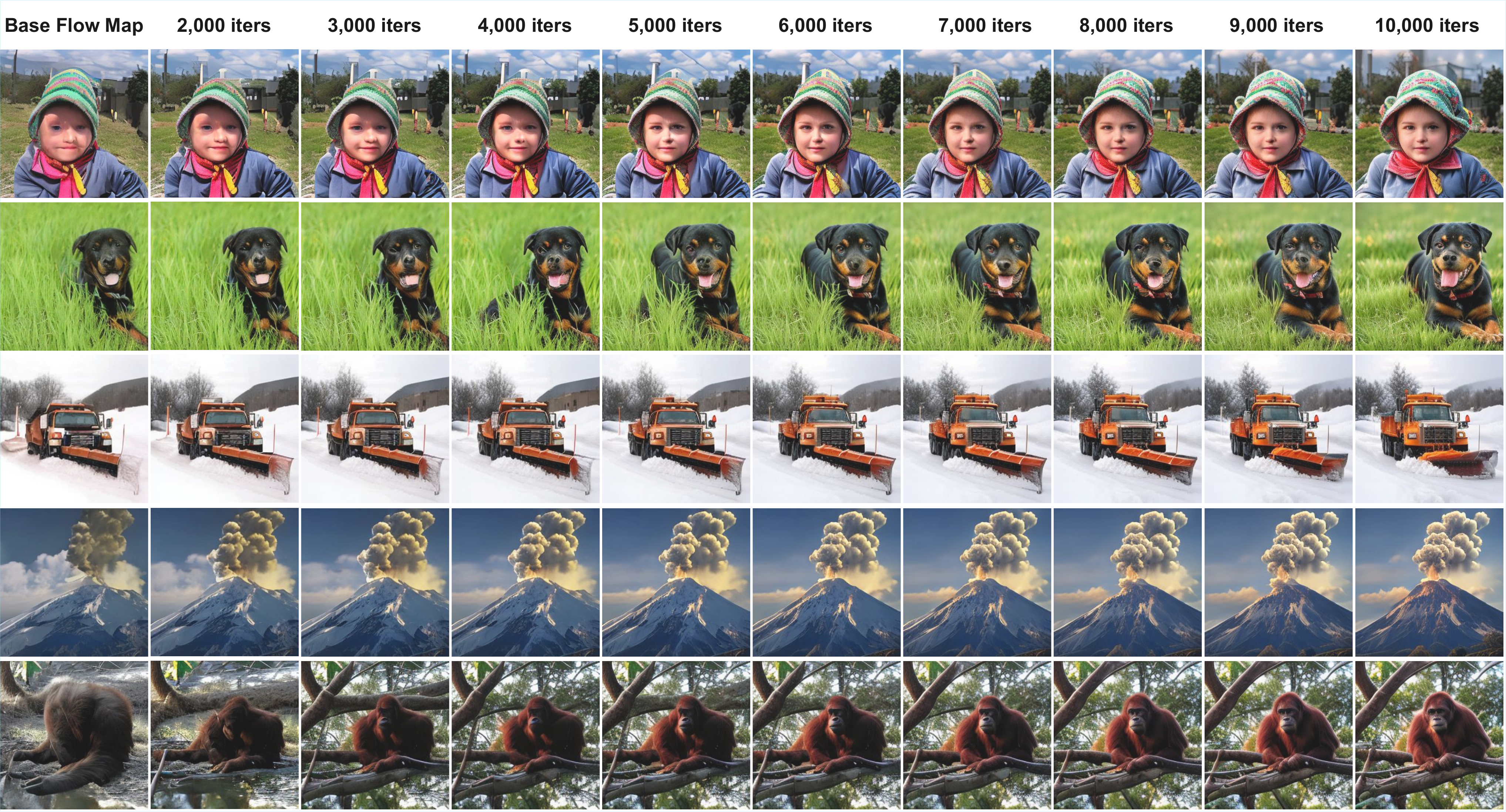}
  \caption{\textbf{Evolution of One-Step Generations during Reward Fine-Tuning.} We visualize the progress of generated samples across varying fine-tuning iterations using fixed latent seeds, all produced in a single neural function evaluation (1 NFE). These seeds are drawn from a pure standard Gaussian distribution. Although we tilt the original flow map to accommodate adapter-conditioned noises, the original noise space remains valid. As a result, this short fine-tuning process successfully enhances the one-step generative capabilities of the base DMF model even without the use of an adapter.}
  \label{fig:reward_progress}
\end{figure*}

\end{document}